\documentclass[11pt]{article}

\usepackage{natbib}      % For bibliography management
\usepackage{graphicx}    % For including graphics
\usepackage{amsmath}     % For math symbols and equations
\usepackage{amsthm}
\usepackage{amsfonts}
\usepackage{amssymb}
\usepackage{hyperref}    % For hyperlinks in the document
\usepackage{enumitem}

\usepackage{subcaption}
\usepackage{caption}
\usepackage{float}
\usepackage{fullpage}
\usepackage{todonotes}

\usepackage{refcount}
\usepackage{authblk}

\newcommand{\N}{\mathbb{N}}
\newcommand{\R}{\mathbb{R}}
\newcommand{\E}{\mathbb{E}}
\newcommand{\D}{\mathbb{D}}
\newcommand{\A}{\mathbb{A}}
\newcommand{\B}{\mathbb{B}}
\newcommand{\Dhp}{\mathbb{H}}
\newcommand{\id}{\mathbb{I}}
\newcommand{\hpi}{\hat{\pi}}
\newcommand{\mcP}{\mathcal{P}}
\newcommand{\mfP}{\mathfrak{P}}
\newcommand{\mcF}{\mathcal{F}}
\newcommand{\mcX}{\mathcal{X}}
\newcommand{\mcA}{\mathcal{A}}
\newcommand{\mcN}{\mathcal{N}}

\newcommand{\dd}{\mathrm{d}}
\newcommand{\diam}{\mathrm{diam}}
\newcommand{\notimplies}{\;\not\!\!\!\implies}

\DeclareMathOperator*{\argmax}{arg\,max}

\newtheorem{definition}{Definition}
\newtheorem{theorem}{Theorem}[section]
\newtheorem{lemma}[theorem]{Lemma}
\newtheorem{corollary}{Corollary}[theorem]
\newtheorem{proposition}[theorem]{Proposition}

\theoremstyle{definition}
\newtheorem{example}[theorem]{Example}

\begin{document}

% Title
\title{Sparse Priors for Efficient Distribution Learning}
% \author{Saumya Goyal \& Dhruv Garg \& Divyan Goyal\\
% \{saumyag \& dhruvgar \& dgoyal\}@andrew.cmu.edu\\
% 10-423/623/723 Generative AI Course Project}

\author[1]{Saumya Goyal}
\author[1]{Barnabás Póczos}
\affil[1]{Machine Learning Department, Carnegie Mellon University}
\date{}
\maketitle

% \linenumbers
% \todo[inline]{We should put this into ICML format. It is not clear now how many more pages needed for the main paper.}
\begin{abstract}
Despite the widespread use and success of generative AI techniques today, theoretical guarantees on learning a distribution supported in $d$ dimensions from $n$ samples degrade as $O(n^{-1/\Theta(d)})$, though shown to be minimax optimal. 
We hypothesize that present bounds are too pessimistic because smoothness assumptions are not enough to capture the structure of distributions that often appear in real applications. Consequently, we introduce the class of sparse priors and define the ``Sparse Dimension" as a measure of sparsity of a prior over the space of all distributions. We show that distribution learning under a $k$-sparse prior achieves a Bayesian risk lower bound of $\Omega(\sqrt{k/n})$ under common distance metrics, and show a matching (up to logarithmic terms asymptotically in $n,k$) upper bound for the TV distance under mild additional assumptions. We show the statistical equivalence of distribution learning and learning to sample in the Bayesian setting so that our results apply to learning to sample as well. While $k$ can still depend on the dimension $d$, or a notion of intrinsic dimension, our results show that learning under an appropriate prior overcomes the curse of dimensionality with respect to the dependence on $n$.
\end{abstract}

\section{Introduction}
% Diffusion models \citep{ddpm_ho_20, song2021scorebased} are one of the most popular and successful models of sampling from an unknown distribution given finitely many samples from the distribution as examples. 
The theoretical minimax optimality, in terms of the sample complexity of samples required for learning to sample from a distribution, have been established for several popular machine learning techniques in recent work. \citet{oko_23_diff} and \citet{cond_diff_fu24} establish the optimality for score-based diffusion under Besov smoothness of the score function and score-based conditional diffusion under Hölder smoothness of the score function respectively. Similar results were established for Generative Adversarial Networks (GANs) by \citet{gan_liang_21}. \citet{est_adv_losses_singh18} establish the minimax equivalence of distribution learning and learning to sample, which allows us to conclude the existence of minimax optimal distribution learners as well.
Theoretical guarantees however, still suffer from a major limitation: the established optimality bounds suffer from the curse of dimensionality. Concretely, given $n$ samples from a target distribution $\pi$ supported on $\R^d$, it is shown that both score-based diffusion models and GANs can learn to sample from a distribution that is at the optimal distance $\Theta(n^{-\frac{c}{d+c'}})$ (upto poly-log terms) from the target distribution as measured by an appropriate Integral Probability Metric (IPM) like the total variation (TV) or Wasserstein distance,
% in terms of TV distance \citep{oko_23_diff, cond_diff_fu24} or a more general  adversarial loss \citep{diffusion_tang_yang_24, gan_liang_21}, 
where $c,c'$ are constants that depend on the smoothness of the score function.% and $d$ is the dimensionality of the space of samples. 

A popular way to circumvent the curse of dimensionality is the manifold hypothesis, under which we assume that the data distribution lies on a lower dimensional submanifold. While \citet{oko_23_diff} and \citet{gan_liang_21} extend their bounds to a version of the manifold hypothesis, 
% consider a linear map that maps points in $\R^{d'}$ for $d'\le d$ to the support of the distribution in $\R^d$, and improve their bounds to $\Theta(n^{-\frac{c}{d'+c'}})$ in TV distance. 
\citet{submanifold_tang_yang_22, diffusion_tang_yang_24} consider the most general submanifold structure comprising an atlas of homeomorphisms from a subset of $\R^{d'}$ for $d'<d$ to the support of $\pi$. \citet{gan_liang_21, oko_23_diff, submanifold_tang_yang_22, diffusion_tang_yang_24} all show a lower bound of $\Omega(n^{-\frac{c}{d'+c'}})$ on the minimax risk defined in Section \ref{sec:ps} and show matching upper-bounds (upto poly-log terms).
% They show similar bounds as above, assuming Hölder smoothness for W1 distance, but not for TV distance. 
% Other work \citep{submanifold_tang_yang_22, diffusion_tang_yang_24} note that image distributions typically lie on a lower dimensional sub-manifold and manage to show the optimality of diffusion under the modified assumptions of sub-manifolds. 
The manifold hypothesis offers significant improvements, and allows us to essentially replace the dimensionality of distributions with intrinsic dimensionality, which is indeed much lower than the true dimensionality for image distributions.
% . The intrinsic dimensionality of image distributions is indeed much lower than the true dimensionality. 
For example, \citet{int_dim_gong_19, int_dim_pope_21} estimate the intrinsic dimensionality of imagenet \citep{imagenet_deng09} to be at most $19$ and $43$ respectively, which is much smaller than the true dimensionality of $>150k$. However it is still too large for 
% While the intrinsic dimensionality of image distributions is indeed much lower than the true dimensionality \citep{int_dim_gong_19, int_dim_pope_21}, it is still too high for 
image distributions to be learnable with the sample sizes we see in practice. To see this, assume the optimistic $d'=19$ and consider the rates of \citet{gan_liang_21}, where for a distribution in a Sobolev class with smoothness $\alpha$, leads to constants $c=\alpha$ and $c'=2\alpha$ when the distance metric is the total variation (TV) distance. Under $\alpha=2$, corresponding to the data-generating distribution admitting smooth second derivatives, one obtains the asymptotic rate of $\Theta(n^{-2/23})$, which means we need more than $10^{11}\times$ samples to observe a $10\times$ improvement in the total variation distance. In order to obtain a modest $\Theta(n^{-1/4})$ bound on distribution learning, we need to assume $\alpha=9.5$, requiring smoothness up to the ninth derivative of the data-generating distribution! On the other hand, we observe much faster rates in practice. \citet{diff_scaling_liang_25} study scaling behavior of the Fréchet Inception Distance (FID) \citep{FID_Heusel_17}, a common pseudo-metric for Generative AI, with compute (which we denote as $\mathrm{comp}$ in this discussion) and data. They observe that optimal training requires the number of samples $n=O(\mathrm{comp}^{0.4319})$, and FID decreases as $O(\mathrm{comp}^{-0.234})$, implying a $O(n^{-0.234/0.4319}) \approx O(n^{-0.54})$ rate of the FID with the number of samples under optimal training (also an assumption made theoretically), much well-behaved than predicted by theory. Similarly, \citet{gen_scaling_henighan_20} observe a $\mathrm{const.}+ O(n^{-0.3})$ relationship of loss (which does not equal to distance between distributions) with the number of samples for images in a $d=64$ dimensional space, and $\mathrm{const.}+ O(n^{-0.26})$ relationship for images in a $d=256$ dimensional space. Here the constant is treated as an un-improvable loss, pertaining to modeling or compute limitations.

In this work, we aim to disambiguate this gap between theory and practice by shifting focus from worst-case minimax bounds to average-case analysis of the Bayesian risk. We hypothesize that smoothness assumptions such as Hölder or Sobolev smoothness on the target distribution are not enough to capture the structure of distributions that often appear in real applications. 
% We prove a Bayes' risk lower bound that doesn't suffer from the curse of dimensionality under a special class of priors that we call ``sparse priors". We further show the existence of an estimator that matches the lower bounds for the TV distance up to logarithmic terms. 
% , implying that generative models don't necessarily have to suffer the curse of dimensionality as posited by theory.
We imagine a realistic prior over the space of distributions to create well-separated, sparse clusters, such that smooth densities are not weighted equally to their smooth perturbations, as explained in Section \ref{sec:sparse_dim}. We introduce the notion of a ``Sparse Dimension" in Definition \ref{def:sparse_dim}, that measures the sparsity of a prior, and goes to $\infty$ as the prior becomes less sparse. 
% We approximate the sparse dimensions of distributions over several real datasets in Section \ref{sec:ex_sparse_dim} and calculate the sparse dimension of the distribution of galaxies in the universe. 
Using the proposed notion of sparse priors, we study the Bayes' risk of distribution learning and learning to sample from distributions. We show in Section \ref{sec:lbs} that a $k$-sparse prior admits a lower bound of $\Omega(\sqrt{k/n})$ on the Bayes' risk defined in Section \ref{sec:ps} under both the total variation and Wasserstein distances, which is a significant improvement over prior worst case bounds. While $k$ can still depend on and be of the order of $d$, our bounds do not suffer from the curse of dimensionality for any finite $k$ in terms of their dependence on the number of samples. In Section \ref{sec:ub}, we show the existence of a distribution estimator that achieves a Bayes' risk of $O(\sqrt{k\log n}/\sqrt{n})$ for a $k$-sparse prior under the total variation distance, thus matching our lower bounds up to logarithmic terms. We show the statistical equivalence of distribution learning and learning to sample in Sections \ref{sec:lbs} and \ref{sec:ub}. We thus define a realistic class of priors that take a step towards explaining the gap between theory and practice of distribution learning and Generative AI.

\subsection{Summary of Key Contributions}
We summarize our key contributions below:
\begin{enumerate}
    \item \textbf{Sparse Dimension:} We introduce the formal problem statement and notation in Section \ref{sec:ps}. We introduce the notion of sparse priors and sparsity in Section \ref{sec:sparse_dim}. We present an intuitive notion for a realistic prior over the space of distributions that creates well-separated clusters. Within each cluster, most distributions have low density in the prior, with few distributions having high density. We formalize this notion to define the Sparse Dimension in Definition \ref{def:sparse_dim}, which measures how sparse a given distribution is. We show in Section \ref{sec:sparse_dim} that the constructions in popular theoretical minimax guarantees on distribution learning correspond to infinitely high sparse dimensions, and hence do not satisfy our bounds. We show how the geometric notion of Assouad dimension of the support of a distribution can guarantee the sparsity of the distribution, even though that is not a necessary condition in Theorem \ref{thm:assouad_sparse}. We further show that $k$-sparsity implies the existence of a high-measure subset of the support that satisfies similar constraints as the Assouad's dimension in Theorem \ref{thm:sparse_support}. We further show motivating examples of calculations of the sparsity of some distributions in Section \ref{sec:ex_sparse_dim}, which can be used for constructing sparse priors.  
    % We additionally estimate the sparse dimension of a uniform prior over the meta-dataset \citep{mds_triantafillou20} and subsets of classes in imagenet\citep{imagenet_deng09} in Section \ref{sec:ex_sparse_dim}, and calculate the sparse dimension of the distribution of galaxies in the universe as a motivating example.
    \item \textbf{Lower bounds using Sparse Dimension:} We proceed to discuss Bayes' optimality of distribution learning and learning to sample from distributions under a sparse prior. We first show in Theorem \ref{thm:sampl_to_learn} that sampling is at least as hard as learning a distribution under any distance (pseudo-)metric, which allows us to focus on information theoretic lower bounds on distribution learning, which imply bounds on learning to sample. This generalizes the minimax reduction shown in \citet{est_adv_losses_singh18} to the Bayesian case. Theorems \ref{thm:tv_lb} and \ref{thm:w1_lb} show that a $k$-sparse prior defined using the total variation and Wasserstein (W1) distance respectively imply lower bound of $\Omega(\sqrt{k/n})$ on the Bayes' risk, where $n$ is the number of samples. This is interesting on several fronts. First, it proves a bound on distribution learning that doesn't suffer from the curse of dimensionality as in prior work,
    % \citet{gan_liang_21, oko_23_diff, submanifold_tang_yang_22, cond_diff_fu24}, 
    and discusses the Bayes' risk instead of restricting to a pessimistic minimax risk. Second, our proof techniques primarily reduce the analysis of the lower bound of the non-parametric estimation problem to a uniform prior over $k$ parameters. Thus similar bounds for other distance metrics not shown in this paper can be derived using techniques inspired from parametric lower bounds for the respective distance metrics. Third, we see that our bounds are especially distinct from the submanifold analysis of 
    % we note that the bounds of \citet{oko_23_diff, cond_diff_fu24} are only valid for the TV and W1 distances, and the bounds of \citet{gan_liang_21, submanifold_tang_yang_22} only work for IPMs under a Sobolev or Hölder class respectively, that interpolate between the W1 and TV distance. Additionally, the bounds of 
    \citet{submanifold_tang_yang_22} that gives vacuous bounds for the TV distance, or the bounds of \citet{gan_liang_21} that are worse for the TV distance than the W1 distance. % On the other hand, our bounds are valid for all IPMs without any additional assumptions.
    \item \textbf{Upper bounds using Sparse Dimension:} We then shift our focus to upper bounds on Bayesian learning of distributions and learning to sample. Theorem \ref{thm:learn_to_sampl} shows that density estimation is at least as hard as learning to sample under any distance metric, so that our upper bounds on distribution learning, which by definition apply to density estimation, also apply to learning to sample.
    Finally, we prove the existence of an estimator that achieves a Bayes risk of $O(\sqrt{k\log n}/\sqrt{n})$ for a $k$-sparse prior under the TV distance in Theorem \ref{thm:upper_bound}. We utilize the framework of \citet{nonparam_bayes_vdV2000} that study posterior contraction around a single target distribution to show bounds on Bayesian learning assuming a true data-generating prior. 
    Our upper bounds assume that the likelihood ratio between any two distributions is bounded by a constant $b_{lr}$, which is not assumed for our lower bounds. Note that the asymptotic bound does not depend on the value of $b_{lr}$, but only on its finiteness. We exploit this in Corollary \ref{cor:ub_no_lr} to show the same upper bound as in Theorem \ref{thm:upper_bound} while assuming a relaxed bound on the likelihood ratio, which increases sub-linearly in the number of samples $n$ with high probability. We show a simple example of a Gaussian location family that satisfies the assumptions of the Corollary. We also discuss that assuming boundedness of the support of distributions implies an upper bound on the W1 distance in terms of the TV distance, allowing us to extend our results to the W1 distance given a $k$-sparse prior under the TV distance. This is similar to the derivation of upper bounds followed by \citet{oko_23_diff}.
    \item \textbf{Asymptotic Comparison of Bounds:} We have thus shown the existence of upper and lower bounds that match up to logarithmic terms. We highlight that our upper and lower bounds are asymptotic in both $n$ and $k$. This does not make our bounds vacuous for small $k$ however, since by definition a $k$-sparse prior is always $k'$-sparse for $k'>k$. We thus make two separate asymptotic statements for clarity. First, our upper and lower bounds match up to logarithmic terms when considering asymptotics in $n$ and $k$. Second,  our upper and lower bounds match up to logarithmic terms when considering asymptotics in $n$ while keeping $k$ constant for any $k$. In the latter, the constants hidden by the asymptotics may not depend similarly on $k$ for the upper and lower bounds. 
    \item \textbf{Conclusion:} We thus establish a non-trivial class of realistic priors that do not suffer from the curse of dimensionality, which is a step towards explaining the gap between theory and practice in the field of distribution learning and generative AI.
    % \item We note from \citet{oko_23_diff, diffusion_tang_yang_24} that Hölder smoothness of the target pdf allows approximating the pdf as a linear combination of $B$-spline basis vectors. Similar approximations hold for Sobolev smoothness as used in \citet{cond_diff_fu24}. We discuss the scenario when the target pdf is exactly a linear combination of basis functions, or more precisely, the target belongs to a mixture of $k+1$ distributions in Section \ref{sec:ub_mix}. We show in Theorem \ref{prop:mix_prop} that a probability simplex (uniform prior over weights that sum to 1) over the weights of this family has a sparse dimension of $k$ under the TV metric, and show a simple sampling procedure that matches the lower bounds from Theorem \ref{thm:main_lb}.
\end{enumerate}

% We note that our visual model matches the universe

% We imagine the space of distributions for real applications occur as well-separated clusters, where each cluster admits a sparse distribution of likelihood over 

% : are theoretical bounds too pessimistic in real applications, or do diffusion models indeed suffer from the curse of dimensionality?

% \todo[inline]{Can we add an informal paragraph about the main contributions of the paper?}

\subsection{Related Work}
\textbf{Prior Results on Minimax Optimality.} Minimax optimality of diffusion models, or generative AI in general have been studied in many works before. \citet{gan_liang_21} discuss the minimax optimality of Generative Adversarial Networks (GANs), where they show that distribution learning for a Sobolev smooth target distribution suffers from a minimax lower bound of the type $\Omega(n^{-\frac{c}{d+c'}})$ under an IPM corresponding to a Sobolev function class, where $c,c'$ depend on the Sobolev classes. The authors then show that these rates are achievable by a non-parametric Sobolev GAN, and show generalization guarantees for a leaky-ReLU GAN using the pseudo-dimension arguments of \citet{nn_pd_bartlett19}. 
The authors also show that the dependence on $d$ in their bounds can be replaced with $d'$, the intrinsic dimensionality, where the intrinsic dimensionality specifcally determines the exponential rate of decay of the eigenvalues of the integral operator corresponding to an RKHS. \citet{est_adv_losses_singh18} generalize the analysis of \citet{gan_liang_21} by considering general adversarial losses (which are only guaranteed to be pseudo-metrics) and other spaces beyond Sobolev spaces. They additionally show the minimax equivalence of density estimation and learning to sample, so that the results established for sampling in many works discussed herein apply directly to density esimation as well.

\citet{oko_23_diff} was one of the first papers to discuss the minimax optimality of score-based diffusion models. They derive similar lower bounds as \citet{gan_liang_21} for Besov smooth target distributions and under the TV and W1 distances. In addition, they show the existence of a neural network that can approximate the score function, and discuss generalization guarantees on learning the neural networks using the techniques of \citet{suzuki2018adaptivity}. They also show a matching upper bound, and extend their results to intrinsic dimensionality where they define a linear map between the intrinsic space of dimension $d'$ and the support of the true distribution in $\R^d$. 

\citet{submanifold_tang_yang_22} consider a much more general setting of intrinsic dimensionality, where they define a submanifold in $\R^{d'}$ that can be mapped to the support of target distribution in $\R^d$ using an atlas of homeomorphisms. They assume Hölder smoothness of the target distribution and show a similar lower bound of $\Omega(n^{-\frac{c}{d'+c'}})$ for a Hölder class of IPMs that interpolate between the W1 and the TV distance. Interestingly, as noted in the paper, the bounds in \citet{submanifold_tang_yang_22} are vacuous for the TV distance, and match the bounds of \citet{gan_liang_21} if the submanifold reduces to the space $\R^d$. The follow-up work by \citet{diffusion_tang_yang_24} proves that score-based diffusion models achieve this rate (upto poly-log factors) using techniques similar to \citet{oko_23_diff}. Similar to \citet{submanifold_tang_yang_22}, the work by \citet{cond_diff_fu24} assumes Hölder smoothness of the target function. However, \citet{cond_diff_fu24} consider a conditional diffusion model, where instead of learning a target distribution over some space $\mathcal{X}$, the goal is to learn a conditional distribution over $\mathcal{X}$ with inputs from $\mathcal{Y}$. Similar to previous results, they also show minimax bounds of the type $\Omega(n^{-\frac{c}{d+c'}})$ and show that score-based diffusion models match these rates upto poly-log terms using the techniques of \citet{oko_23_diff}. 

\textbf{Computational Efficiency of Learning.}
An interesting observation from the above bounds is that both GANs and score-based diffusion models have been shown to be minimax optimal, yet diffusion often performs better in practice. We conjecture this gap arises from the assumption of perfect optimization of the underlying neural network, whether the neural network corresponds to a GAN \citep{gan_liang_21}, or a score-based diffusion model \citep{oko_23_diff, diffusion_tang_yang_24, cond_diff_fu24}. Computational efficiency and optimization error thus play a significant role in the applicability of bounds, and perfect optimization is not always a realistic assumption under computational constraints. While we don't study training dynamics of any generative or distribution learning algorithm, some work in this direction has been done by \citet{diff_alg_dupuis2025} that show algorithm and data-dependent generalization bounds on learning the underlying neural networks for diffusion. We note in general that we focus on statistical efficiency of algorithms in this work, and defer a discussion on computational efficiency to future work.

% A notable limitation of the above works on minimax optimality is the dependence of bounds on the class of IPM, whereas we manage to show bounds that are valid over all IPM. In addition, our bounds consider the Bayes' optimality of the estimator instead of a pessimistic minimax optimality, and show that $k$-sparse priors lead to bounds that don't suffer from the curse of dimensionality. 
% Another notable limitation of these works is assuming perfect optimisation of the underlying neural network, whether the neural network corresponds to a GAN \citep{gan_liang_21}, or a score-based diffusion model \citep{oko_23_diff, diffusion_tang_yang_24, cond_diff_fu24}. While we don't study training dynamics of any generative algorithm, we note that this gap is bridged by \citet{diff_alg_dupuis2025} that show algorithm and data-dependent generalization bounds on learning the underlying neural networks for diffusion.

% In this work we show that a well-formulated $k$-sparse prior can lead to much better guarantees on sample complexity than in literature. However, we do not 
\textbf{Learning under a Non-Parametric Prior.}
In order to show our bounds, we assume that the true prior is perfectly known in this paper, which may not be true realistically. 
We don't
discuss guarantees on learning the prior, or constructing optimization algorithms to account for priors. The framework of \textit{data-driven algorithm design} discussed in \citet{balcan2020data} suggests tuning hyperparameters for algorithms using previously observed tasks, which is an empirical route to approximating Bayes optimal learning algorithms for an unknown prior. Guarantees on learning hyperparameters for linear regression using previously observed, similar tasks were discussed in \citet{enet_2022}, \citet{balcan_23} and \citet{goyal_25}. \citet{goyal2026_lgd} generalize this by giving guarantees on learning hyperparameters for general regression tasks with a convex loss. 

\section{Problem Statement and Notation}\label{sec:ps}
We consider the problem of learning an unknown distribution $\pi$ defined on the space $\mcX\subseteq\R^d$, given $n$ samples $X_1,\ldots, X_n \stackrel{\text{i.i.d.}}{\sim} \pi$ sampled i.i.d. from $\pi$. We additionally assume that $\pi\sim\mcP$ is sampled from a known prior $\mcP$, over the set of distributions $\mathbb{D}$. 
Let $\hpi(X_1,\ldots,X_n)$ be the estimated distribution from $n$ samples. When obvious from context, we will denote $\hpi(X_1,\ldots,X_n)$ by the short-hand $\hpi_n$. For brevity, we will denote the set of $n$ samples by the singular $\mathbf{X}\in\mcX^n$. 
We will also talk about sampling algorithms, that learn to sample from a distribution given samples. We will analogously denote the learned sampling algorithm by $\hat{\mcA}_n$ and the distribution that the sampling algorithm actually samples from will be denoted $\mathrm{dist}(\hat{\mcA}_n)$. Unless specified explicitly, we assume a distance function $\rho(\cdot,\cdot)$ between distributions to be a pseudo-metric, which means that it is non-negative, symmetric and follows the triangle inequality, but $\rho(\pi_1,\pi_2) = 0  \notimplies \pi_1=\pi_2 $. We are interested in the expected distance between $\hpi_n$ and $\pi$ given by $\E_{X_1,\ldots, X_n \stackrel{\text{i.i.d.}}{\sim} \pi}[\rho(\hpi_n, \pi)]$. We will drop the subscript on the expectation when obvious from context. Define the Bayes' risk of estimators of $\pi$ as:
\begin{align}\label{eq:brisk_est}
    R_{Bayes}(\hpi_n;\mcP) = \E_{\pi\sim\mcP}[\E[\rho(\hpi_n, \pi)]].
\end{align}
We will also be investigating Bayes' risk  for the sampling algorithm, which will be given as follows using slight re-use of notation:
\begin{align}\label{eq:brisk_sampl}
    R_{Bayes}(\hat{\mcA}_n;\mcP) = \E_{\pi\sim\mcP}[\E[\rho(\mathrm{dist}({\hat{\mcA}_n}), \pi)]].
\end{align}

Section \ref{sec:lbs} studies lower bounds on the Bayes' risk of estimation over a class of priors $\mfP$ given by $\inf_{\hpi}\sup_{\mcP\in\mfP}R_{Bayes}(\hpi_n;\mcP)$. 
Correspondingly, the Bayes' risk of sampling will be lower bounded by the expression, $\inf_{\hat{\mcA}}\sup_{\mcP\in\mfP}R_{Bayes}(\hat{\mcA};\mcP)$. This is a notable departure from prior work that study the minimax risk defined by $\inf_{\hpi}\sup_{\pi\in\D} \E[\rho(\hpi_n, \pi)]$  and $\inf_{\hat{\mcA}}\sup_{\pi\in\D}\E[\rho(\mathrm{dist}({\hat{\mcA}_n}), \pi)] $ for distribution learning and sampling respectively.

% For a distance metric $\rho(.,.)$ between distributions, we are interested in the expected distance between $\hpi_n$ and $\pi$, given by $\E_{X_1,\ldots, X_n \stackrel{\text{i.i.d.}}{\sim} \pi}[\rho(\hpi_n, \pi)]$. 

\noindent We will denote by $\rho_\mcF$ the Integral Probability Metric (IPM) corresponding to the function class $\mcF$ as defined below. Note that by definition, any IPM is a pseudo-metric.
\begin{definition}[Integral Probability Metric (IPM) \citep{stat_ot_chewi_24}]
    A (pseudo-)metric $\rho_\mcF(.,.)$ between two probability measures $\mu,\nu$ is called an integral probability metric (IPM) if it can be written in the form
    \begin{align*}
        \rho_{\mathcal{F}}(\mu,\nu) = \sup_{f\in \mathcal{F}} \left|\int_\mcX f(X) \dd\mu(X) - \int_\mcX f(X) \dd\nu(X)\right|.
    \end{align*}
\end{definition}
\noindent Several popular distance metrics are IPMs over different classes $\mcF$. For instance, the Wasserstein-1 distance is the IPM over $\mcF_{Lip1}$, the class of all 1-Lipschitz functions, the TV distance is half of the IPM over $\mcF_1$, the class of all 1-bounded functions \citep{submanifold_tang_yang_22}, and Maximum Mean Discrepancy (MMD) \citep{MMD_gretton_2012} is the IPM over functions in an appropriate RKHS $\mathcal{H}$ with function norm $\|f\|_\mathcal{H} \le 1$ \citep{stat_ot_chewi_24}.
% % \noindent Popular distance metrics such as the Wasserstein-1 distance ($\mathcal{F}$ is the set of all 1-Lipschitz functions), MMD and TV distance ($\mathcal{F}$ is the set of all functions bounded by 1) are examples of IPM. 
As a shorthand, we will write $\rho_{W1}$ and $\rho_{TV}$ when referring to the Wasserstein-1 and TV distance respectively. We will also denote the KL-divergence (which is not a pseudo-metric) between distributions $\mu,\nu$ as $D_{KL}(\mu\|\nu)$.

Lastly, we will define the ball of radius $r$ around distribution $\nu$ for a distance (pseudo-)metric $\rho$ as $B_\rho(r,\nu) = \{\mu \in \mathbb{D}:\rho(\mu,\nu)\le r\}$. For a distribution $\mcP$ over $\mathbb{D}$, we will denote the total probability of the ball by $\mcP(B_\rho(r,\nu))$.

\section{Sparse Dimension}\label{sec:sparse_dim}
As we mention previously, we hypothesize that theoretical bounds on distribution learning tend to be  pessimistic because smoothness assumptions are not enough to capture the structure of distributions that often appear in real applications. As an example, we show the pdfs of two distributions in Figure \ref{fig:dist_comp} that are both heuristically ``smooth'' and ``close'' to each other. These distributions would produce similar samples, and as such, we might not want to distinguish between them. Such constructions are common in literature, such as the minimax construction of \citet{submanifold_tang_yang_22} that consists of a uniform distribution over a ball, with small perturbations.

% As an example, we consider the two distributions shown in Figure \ref{fig:dist_comp}. While both the distributions are heuristically ``smooth" and ``close" to each other, we are unlikely to want to distinguish between samples from both distributions. We hence, don't want to spend compute learning to discriminate between them.
\begin{figure}
    \centering
    \includegraphics[width=0.4\linewidth]{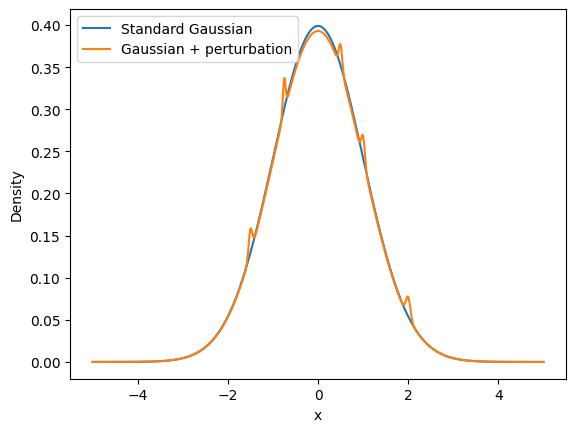}
    \caption{The standard Gaussian illustrated with a small perturbation of it, of the type common information theoretic lower bounds often consider. The perturbations are small enough that it would take an increasing number of samples to distinguish them as the dimensionality increases. In our framework however, we argue that such small perturbations likely receive small prior mass in realistic scenarios, which relaxes the sample complexity requirement for identifying the more likely distribution.}
    \label{fig:dist_comp}
\end{figure}

We propose assuming a prior over the space of all distributions that 
% We introduce a prior over distributions that 
captures the structure of realistic distributions.
Intuitively, such a prior should create well-separated clusters in the space of all distributions. 
Within each cluster, most distributions have low density in the prior, with few distributions having high density. As an example, we consider the distributions over animal images, where we expect the clusters for different animal images like cats and dogs to be well-separated. A prior over the space of animal image distributions should give equal weight to the cat image and the dog image clusters. Inside the dog image cluster, we expect to see a global distribution of all dog images with high density in the prior, but several distributions for different breeds of dogs, or dogs doing different actions, all with a lower density than the global distribution. However a distribution corresponding to dog images with a particular pixel blacked out is probably not of interest to us, and would have very low/zero density.

Thus we intuit that most of the probability mass in realistic distributions is concentrated on a small subset of all possible distributions, with the vast majority of the space being nearly empty. This sparsity extends recursively so that only a few sub-distributions within each cluster—e.g., cats, dogs, or specific dog breeds—capture the essential structure, while all other sub-distributions have negligible mass. 
% This intuitive structure is observed for example, in the universe with clustering of objects with lighter objects orbiting larger ones. 
% We explore this intuition further in Section \ref{sec:ex_sparse_dim}. 
The analysis of \citet{atlas_models_horwitz25} for world models, where the authors suggest plotting an atlas of all world models, suggests a similar structure to the one described above if we plot the world models based on their similarity to each other. Before we define the sparse dimension we note the following definition of the diameter of a set.

\begin{definition}[Diameter]
    The diameter of a set $S$ equipped with a distance (pseudo-)metric $\rho$, denoted $\diam(S)$, is defined such that $\diam(S)=\sup_{a_1,a_2\in S} \rho(a_1,a_2)$.
\end{definition}

We now define the sparse dimension that formalizes our intuition that for a sparse distribution, a big fraction of the probability of a set should lie inside a relatively small subset. 

\begin{definition}[Sparse Dimension]\label{def:sparse_dim}
    Given a distribution $\mcP$ on the space $\D$ and a distance (pseudo-)metric $\rho$, we say that $\mcP$ is sparse with sparse dimension $k>0$ %if $k $ is the smallest number 
    if there exists a $c>0$ 
    such that for every set $\mathbb{A}\subseteq \D$ and $\epsilon \in (0,1]$, there exists a subset $\mathbb{B}\subseteq \mathbb{A}$ that satisfies $\mcP(\B)\ge \epsilon\mcP(\A)$ and $\diam(\B)\le c\epsilon^{1/k}\diam(\A)$.
\end{definition}

Intuitively, we call a distribution sparse if every set in the support of the distribution contains a small, dense subset. The sparse dimension is a measure of the sparsity, where a lower sparse dimension implies a sparser distribution. Note that by definition, a $k$-sparse distribution is immediately $k'$-sparse for any $k'>k$. That is, the definition does not require $k$ to be the smallest number that satisfies the sparsity condition. We show below how certain properties of the support $\D$ of the distribution $\mcP$ can imply the $k$-sparsity of $\mcP$, although these aren't necessary conditions. We first introduce the notion of covering number and Assouad dimension of a set.

We define the $r$-covering number of a space as the number of points in the smallest set required to ensure that all points in the space are at a distance at most $r$ from some element on the set.

\begin{definition}[Covering number \citep{ML_theory_Shwartz_14}]
    For a set $\D$ with a distance (pseudo-)metric $\rho$, consider a set of points $\A$ such that for any element $\mu\in\D$, $\exists \nu \in \A$ such that $\rho(\mu,\nu)\le r$. Then $\A$ is called an $r$-cover of $\D$. We define $N(r,\D)$, the $r$-covering number of $\D$, as the cardinality of the smallest $r$-cover of $\D$. Thus, if the $r$-cover $\A$ is the smallest possible, $|\A| = N(r,\D)$.
\end{definition}

We then define the Assouad dimension, which bounds the $r$-covering number for any subset of diameter $R$.

\begin{definition}[Assouad dimension \citep{assouad_fraser_2020}]\label{def:assouad}
    Consider a non-empty set $\D$ with a distance (pseudo-)metric $\rho$. The Assouad dimension of $\D$, denoted $\mathrm{dim}_A(\D)$ is defined as:
    \begin{align*}
        \mathrm{dim}_A(\D) = \inf \{k: \text{there exists } C>0 \text{ such that } &\forall\; 0<r<R \text{ and } \mu\in\D,\\ &N(r,B_\rho(R,\mu)\cap\D) \le C(R/r)^{k}\}.
    \end{align*}
\end{definition}

Thus, given a set with Assouad dimension $k$ implies the $r$-covering of any subset of diameter $R$ is proportional to $(R/r)^k$. This allows us to bound the sparse dimension of a distribution over the set as follows.

\begin{theorem}\label{thm:assouad_sparse}
    Consider a set $\D$ equipped with the distance (pseudo-)metric $\rho$ with Assouad dimension $k$. For any $\delta>0$, any distribution on $\D$ is always at most $(k+\delta)$-sparse.
\end{theorem}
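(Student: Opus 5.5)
Fix $\delta>0$. Since the Assouad dimension is an infimum, there is some $k'\in[k,k+\delta)$ and a constant $C>0$ with $N(r,B_\rho(R,\mu)\cap\D)\le C(R/r)^{k'}$ for all $0<r<R$ and $\mu\in\D$. Because the covering bound only gets weaker as the exponent grows, this also gives $N(r,B_\rho(R,\mu)\cap\D)\le C(R/r)^{k+\delta}$ whenever $r<R$. I will therefore work with exponent $k+\delta$ and show that Definition~\ref{def:sparse_dim} holds with a constant $c$ depending only on $C$ and $k+\delta$. The idea is pigeonhole: cover $\A$ by few small balls, and one of them carries a large share of the mass.

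Fix $\A\subseteq\D$ and $\epsilon\in(0,1]$, and write $D=\diam(\A)$. There are two trivial cases. If $\mcP(\A)=0$, take $\B=\A\cap\{\text{one point}\}$ or any singleton; if $\A=\emptyset$, take $\B=\emptyset$. If $D=\infty$, take $\B=\A$. If $D=0$, take $\B=\A$, since then $\diam(\B)=0$. Otherwise pick any $\mu\in\A$, so that $\A\subseteq B_\rho(D,\mu)\cap\D$.

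Now set $r=(C\epsilon)^{1/(k+\delta)}D$. First suppose $r<D$. Then $\A$ is covered by at most $N\le C(D/r)^{k+\delta}=1/\epsilon$ balls of radius $r$. Slightly enlarging $r$ handles the infimum in the covering number, and since $N$ is an integer we get $N\le\lfloor 1/\epsilon\rfloor$. Intersecting these balls with $\A$ gives sets $\A_1,\ldots,\A_N$ whose union is $\A$. By subadditivity some $\A_i$ has $\mcP(\A_i)\ge\mcP(\A)/N\ge\epsilon\mcP(\A)$. By the triangle inequality this $\A_i$ has diameter at most $2r=2C^{1/(k+\delta)}\epsilon^{1/(k+\delta)}D$.

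If instead $r\ge D$, i.e. $C\epsilon\ge1$, take $\B=\A$. Then $\diam(\B)=D\le r\le 2C^{1/(k+\delta)}\epsilon^{1/(k+\delta)}D$. In both cases the definition is satisfied with $c=2\max(C,1)^{1/(k+\delta)}$.

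The main care point is measurability. The ball pieces $\A_i$ must be measurable for $\mcP(\A_i)$ to make sense. If $\A$ is only assumed measurable, the $\A_i=\A\cap B_\rho(r,\nu_i)$ are measurable provided balls are measurable, which I will assume for the Borel $\sigma$-algebra of $\rho$. Closed balls are closed, and pseudo-metric balls are Borel, so this holds; alternatively one can use outer measure. A secondary subtlety is that the cover may use centers $\nu_i\in\D$ outside $\A$. This does not matter, since only diameters are used.
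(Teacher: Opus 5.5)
Your proof is correct and is essentially the paper's argument. Place $\A$ inside a ball of radius $\diam(\A)$ centered at a point of $\A$, cover it by at most $\lfloor 1/\epsilon\rfloor$ balls of radius $(C\epsilon)^{1/(k+\delta)}\diam(\A)$, and pigeonhole. This gives $c=2C^{1/(k+\delta)}$, and your $\max(C,1)$ is harmless. Your extra care also fills in details the paper leaves implicit: extracting exponent $k+\delta$ from the infimum in the Assouad definition, the degenerate cases $\diam(\A)\in\{0,\infty\}$, the case $C\epsilon\ge 1$, and measurability of the ball pieces. The remark about slightly enlarging $r$ is unnecessary, since a covering number is a minimum over integers and is therefore attained.
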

\begin{proof}
    It follows from Definition \ref{def:assouad}  that for some $\nu\in \D$, $B(R,\nu)\cap \D$ is covered by at most $C(R/r)^{k+\delta}$ balls of radius $r$ centered at points in $\D$, for some constant $C>0$. Now, for any set $\A\subseteq \D$ of diameter $\diam(\A)=R$, construct an $R$ ball that encompasses all points in $\A$, centered at one of the points in $\A$. For any $\epsilon\in(0,1]$, we construct balls of radius $r=(C\epsilon)^{1/(k+\delta)} R$ such that we need at most $\lfloor 1/\epsilon\rfloor$ such balls to cover the set $\A$, where $\lfloor\alpha\rfloor$ is the greatest integer $\le \alpha$. Note that if $(C\epsilon)^{1/(k+\delta)} > 1$, then $r>R$ and we only need 1 such ball to cover the set $\A$, which still falls inside our $\le \lfloor1/\epsilon\rfloor$ bound. By a pigeonhole argument, the intersection of at least one of these balls with $\A$ has a total probability $\ge \epsilon \mcP(\A)$ with diameter $\le 2(C\epsilon)^{1/(k+\delta)} R$, such that the intersection of this ball with $\A$ satisfies the conditions of Definition \ref{def:sparse_dim} for a constant $c=2C^{1/(k+\delta)}$.
\end{proof}

A few corollaries are immediate, which will be useful to us later. We show that any distribution over a $k$-Ahlfors regular set is always at most $k$-sparse.

\begin{corollary}\label{cor:alfhors}
    Consider a $k$-Ahlfors regular set $\D$ equipped with the distance (pseudo-)metric $\rho$ such that there exists a Borel measure $\mathcal{M}$ with support $\D$ and a constant $C\ge 1$ such that for any $\nu\in\D$ and radius $0<r \le \diam(\D)$, $C^{-1}r^k \le \mathcal{M}(B_\rho(r,\nu)\cap \D) \le Cr^k $. Any distribution on $\D$ is always at most $k$-sparse.
\end{corollary}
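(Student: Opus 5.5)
A direct covering argument on the Ahlfors-regular set gives exactly exponent $k$, with no loss of $\delta$, so we do not appeal to Theorem \ref{thm:assouad_sparse} itself. That theorem, together with the standard fact that $k$-Ahlfors regular sets have Assouad dimension $k$, would only give $(k+\delta)$-sparsity for every $\delta>0$, which is weaker than claimed. The reason we can avoid the loss is that Ahlfors regularity lets us bound covering numbers by $C'(R/r)^k$ with the exponent exactly $k$, via a volume (packing) argument. After that, we run the same pigeonhole argument as in the proof of Theorem \ref{thm:assouad_sparse}.

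\textbf{Step 1 (covering bound).} Fix $\nu\in\D$ and $0<r<R\le\diam(\D)$. Take a maximal $r$-separated subset $\{\mu_1,\dots,\mu_m\}$ of $B_\rho(R,\nu)\cap\D$. By maximality it is an $r$-cover, so $N(r,B_\rho(R,\nu)\cap\D)\le m$. The balls $B_\rho(r/2,\mu_i)$ have pairwise distances between centres greater than $r$, so their intersections with $\D$ are disjoint up to boundary. They all lie in $B_\rho(2R,\nu)$. Comparing $\mathcal M$-measure gives
\[
m\,C^{-1}(r/2)^k\le \mathcal M\bigl(B_\rho(2R,\nu)\cap\D\bigr)\le C(2R)^k ,
\]
hence $m\le C^2 4^k (R/r)^k$.

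Two boundary issues need care here. First, $2R$ may exceed $\diam(\D)$; in that case use instead the bound $\mathcal M(\D)\le C\diam(\D)^k\le C(2R)^k$. Second, the closed balls of radius exactly $r/2$ may touch; this is handled by using radius $r/3$, which only changes the constant. For $r\ge R$ a single ball suffices.

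\textbf{Step 2 (pigeonhole).} Let $\A\subseteq\D$ with $R=\diam(\A)$, and fix $\epsilon\in(0,1]$. If $R=0$ or $\mcP(\A)=0$, take $\B=\A$. Otherwise $\A\subseteq B_\rho(R,a)$ for any $a\in\A$. Choose $r=(C'\epsilon)^{1/k}R$ with $C'=C^2 4^k$. If $r\ge R$, take $\B=\A$; this works because $\diam(\A)\le C'^{1/k}\epsilon^{1/k}R$. Otherwise Step 1 gives at most $C'(R/r)^k=1/\epsilon$ balls of radius $r$ covering $\A$. Since their intersections with $\A$ cover $\A$, one of them has $\mcP$-mass at least $\epsilon\mcP(\A)$. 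That intersection is our $\B$, and it has diameter at most $2r=2C'^{1/k}\epsilon^{1/k}\diam(\A)$. This verifies Definition \ref{def:sparse_dim} with $c=2C'^{1/k}$.

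The only real obstacle is Step 1, namely getting the covering bound with the exact exponent $k$ and handling radii near $\diam(\D)$. The packing argument above resolves both.
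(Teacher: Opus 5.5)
Your proof is correct and follows the paper's route: the paper also bounds $N(r,B_\rho(R,\nu)\cap\D)$ by $C^2(3R/r)^k$ via a packing argument with the Ahlfors measure (Proposition~\ref{prop:ahlfors_cover}, using the $(R+r/2)$-ball where you use the $2R$-ball), then reruns the pigeonhole argument of Theorem~\ref{thm:assouad_sparse} with exponent exactly $k$, and your handling of the case $2R>\diam(\D)$ via $\mathcal{M}(\D)\le C\diam(\D)^k$ fills in a boundary detail the paper leaves implicit. One small slip: when $\mcP(\A)=0$ and $\diam(\A)>0$, taking $\B=\A$ violates $\diam(\B)\le c\epsilon^{1/k}\diam(\A)$ for small $\epsilon$, so drop that special case, since the pigeonhole step works verbatim when $\mcP(\A)=0$, or alternatively take $\B$ to be a single point.
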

\begin{proof}
   From Proposition \ref{prop:ahlfors_cover}, we note that for some $\nu\in \D$, $B(R,\nu)\cap \D$ is covered by at most $C^2(3R/r)^k$ balls of radius $r$ centered at points in $\D$. Thus $\D$ has Assouad dimension $k$, and similar arguments to Theorem \ref{thm:assouad_sparse}, where we can now replace $(k+\delta)$ with $k$ (for a different value of $C$).
\end{proof}

Using a similar technique, we show that a distribution over a finite dimensional space is always sparse given an appropriate distance (pseudo-)metric. We will rely on this result for our construction of hard instances for showing lower bounds in Section \ref{sec:lbs}.

\begin{corollary}\label{cor:d_sparse}
    Any distribution over a subset of $\R^d$, denoted $\D\subseteq \R^d$ equipped with an $l_a$ norm for $a\in[1,\infty]$ as the distance (pseudo-)metric is always at most $d$-sparse.
\end{corollary}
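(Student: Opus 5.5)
Following the proof of Theorem \ref{thm:assouad_sparse} and Corollary \ref{cor:alfhors}, the plan is to run the same covering-plus-pigeonhole argument directly in $(\R^d,\|\cdot\|_a)$, where the covering bound holds with exponent exactly $d$ and no $\delta$-slack is needed. It does not matter that $\D$ is an arbitrary subset of $\R^d$. Covering numbers are taken in the sense of Definition \ref{def:sparse_dim}: we only need some subset $\B\subseteq\A$ with small diameter, not balls centered in $\D$. So it suffices to cover the ambient ball in $\R^d$ and intersect each covering ball with $\A$.

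\textbf{Covering bound.} Fix $\A\subseteq\D$ with $R=\diam(\A)$. If $R=\infty$ the condition is vacuous, and if $R=0$ we take $\B=\A$. Otherwise pick any $x_0\in\A$, so that $\A\subseteq B_{\|\cdot\|_a}(R,x_0)$. The key estimate is that any $l_a$ ball of radius $R$ in $\R^d$ can be covered by at most $C_d(R/r)^d$ $l_a$-balls of radius $r$, for $0<r\le R$. The cleanest route is the standard volumetric argument. Take a maximal $r$-separated set inside the $R$-ball. The disjoint $r/2$-balls around its points all lie in the $(R+r/2)$-ball. Comparing Lebesgue volumes, where the volume of an $l_a$ ball scales as $r^d$ times a constant depending on $a$, gives at most $(1+2R/r)^d\le(3R/r)^d$ points. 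By maximality these points form an $r$-cover. This is exactly the Ahlfors-regular computation of Proposition \ref{prop:ahlfors_cover}, with Lebesgue measure playing the role of $\mathcal{M}$, so I would simply cite that proposition with $k=d$ and $C=1$ after normalizing. Alternatively, one can use norm equivalence with $l_\infty$ and an explicit grid of cubes, which yields constants depending only on $d$.

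\textbf{Pigeonhole.} Given $\epsilon\in(0,1]$, choose $r=(\epsilon/3^d)^{1/d}R\cdot$ (constant), so that the number of covering balls is at most $3^d(R/r)^d\le 1/\epsilon$. If the rounding to $\lfloor 1/\epsilon\rfloor$ matters, absorb a factor of $2$ into the constant: $\lceil\cdot\rceil$ of a count at most $1/(2\epsilon)$ is still at most $1/\epsilon$. In the case $r\ge R$, a single ball suffices. Since the intersections of these balls with $\A$ cover $\A$, one of them, call it $\B$, has $\mcP(\B)\ge\epsilon\mcP(\A)$, and $\diam(\B)\le 2r=c\,\epsilon^{1/d}R$ with $c$ depending only on $d$. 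This verifies Definition \ref{def:sparse_dim} with $k=d$.

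\textbf{Main obstacle.} The only delicate point is measurability when applying the pigeonhole step: we need the pieces to be measurable for $\mcP$ on $\D$. Covering balls are closed sets, and the measurable sets $\A$ are implicitly assumed, as in Theorem \ref{thm:assouad_sparse}, so this is fine. Bookkeeping the rounding constant so that the count is at most $1/\epsilon$ is routine.
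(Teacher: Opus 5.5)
Your proposal is correct and takes the paper's route. The paper proves this corollary with the volumetric packing bound $N(r,B(R,\nu)\cap\D)\le(3R/r)^d$ from Proposition \ref{prop:cover_d_dim}, which is exactly the computation you describe, and then applies the pigeonhole step of Theorem \ref{thm:assouad_sparse} with exponent $d$ and no $\delta$-slack. One small slip: to get at most $1/\epsilon$ balls you need $r=(3^d\epsilon)^{1/d}R=3\epsilon^{1/d}R$, not $(\epsilon/3^d)^{1/d}R$. Also, your rounding concern does not arise, because an integer count that is at most $1/\epsilon$ already suffices for the pigeonhole step.
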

\begin{proof}
    Proceeds exactly the same as Corollary \ref{cor:alfhors} with $C=1$ by invoking Proposition \ref{prop:cover_d_dim} instead of Proposition \ref{prop:ahlfors_cover}.
\end{proof}

We note that $k$-sparsity of a distribution does not necessarily imply boundedness of the covering number of it's support. We show below that $k$-sparsity implies that there exists a high-measure subset of the support with a covering number with a similar expression as the Assouad dimension.

\begin{theorem}\label{thm:sparse_support}
    Assume there exists a $k$-sparse prior $\mcP$ that satisfies Definition \ref{def:sparse_dim} with constant $c$ over the support $\D$ of diameter $R$ under a distance (pseudo-)metric $\rho$. Then for any $\delta\in(0,1)$ and $0<r<cR$, there exists a subset $\B\subseteq\D$ such that $\mcP(\B)\ge \delta\mcP(\D)$ and the covering number of $\B$ is bounded as:
    \begin{align*}
        N(r,\B) \le \left\lceil\frac{\log(1-\delta)}{\log(1-(r/(cR))^k)}\right\rceil \le \left\lceil-\log(1-\delta)\frac{(cR)^k}{r^k}\right\rceil,
    \end{align*}
    where $\lceil\alpha\rceil$ is the least integer $\ge \alpha$, and since $-1/\log(1-x)\le 1/x$ for $x\in(0,1)$. 
\end{theorem}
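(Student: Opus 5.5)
We build the set $\B$ greedily, as a union of small-diameter pieces, by applying Definition~\ref{def:sparse_dim} repeatedly to what remains of $\D$. Fix $\epsilon = (r/(cR))^k \in (0,1)$, which is legitimate because $0<r<cR$. With this choice, any set $\A\subseteq\D$ has $\diam(\A)\le R$, so the subset produced by Definition~\ref{def:sparse_dim} has diameter at most $c\epsilon^{1/k}\diam(\A)\le c\epsilon^{1/k}R = r$. Any nonempty set of diameter at most $r$ is covered by a single $r$-ball centred at one of its own points. So each piece costs exactly one element of the cover.

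\textbf{Iteration.} Set $\A_0=\D$. At step $j$, apply Definition~\ref{def:sparse_dim} to $\A_{j-1}$ with the fixed $\epsilon$. This gives $\B_j\subseteq\A_{j-1}$ with $\mcP(\B_j)\ge\epsilon\mcP(\A_{j-1})$ and $\diam(\B_j)\le r$. Then set $\A_j=\A_{j-1}\setminus\B_j$. By induction,
\[
\mcP(\A_j)\le(1-\epsilon)\,\mcP(\A_{j-1})\le(1-\epsilon)^j\,\mcP(\D).
\]
We stop at the first $m$ with $(1-\epsilon)^m\le 1-\delta$, namely $m=\lceil \log(1-\delta)/\log(1-\epsilon)\rceil$. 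Put $\B=\bigcup_{j\le m}\B_j$. Then $\mcP(\B)=\mcP(\D)-\mcP(\A_m)\ge\delta\mcP(\D)$, and choosing one centre in each $\B_j$ gives an $r$-cover of $\B$ with at most $m$ points. This is the first inequality in the statement.

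\textbf{Second inequality.} Use $-\log(1-x)\ge x$ for $x\in(0,1)$. Since $\log(1-\delta)<0$, this gives
\[
\frac{\log(1-\delta)}{\log(1-\epsilon)}\le\frac{-\log(1-\delta)}{\epsilon}=-\log(1-\delta)\frac{(cR)^k}{r^k}.
\]
The ceiling is monotone, so the bound carries through.

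\textbf{Main obstacle.} The expected difficulty is the bookkeeping, not the idea. Three points need care.
\begin{itemize}
\item Each $\B_j$ must be measurable so that $\mcP(\A_j)=\mcP(\A_{j-1})-\mcP(\B_j)$ holds. I would note that Definition~\ref{def:sparse_dim} is read over measurable sets, or else replace $\B_j$ by its closure, which has the same diameter.
\item A degenerate step can occur where $\mcP(\A_{j-1})=0$. The stopping condition is already met in that case, so it causes no problem.
\item The diameter bound uses $\diam(\A_{j-1})\le\diam(\D)=R$, which holds because each $\A_{j-1}$ is a subset of $\D$.
\end{itemize}
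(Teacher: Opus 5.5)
Your proof is correct and is essentially the paper's argument. Both apply Definition~\ref{def:sparse_dim} with $\epsilon=(r/(cR))^k$ greedily to the uncovered remainder of $\D$, so each extracted piece has diameter at most $c\epsilon^{1/k}R=r$, and both stop after $\lceil\log(1-\delta)/\log(1-\epsilon)\rceil$ pieces. Your leftover-mass bound $\mcP(\A_j)\le(1-\epsilon)^j\mcP(\D)$ is the complement of the paper's unrolled lower bound $1-(1-\epsilon)^N$ on the mass of the extracted pieces, and it avoids the paper's implicit normalization $\mcP(\D)=1$. One small caveat: your closure fallback for measurability should be intersected with $\A_{j-1}$, because the bare closure of $\B_j$ may leave the remainder and break disjointness.
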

\begin{proof}
    Let $\epsilon= (r/(cR))^k $ in Definition \ref{def:sparse_dim}, such that there exists a set $\A_1\subseteq\D$ of diameter $\le r$ such that $\mcP(\A_1)\ge \epsilon\mcP(\D)$. % Without loss of generality, we assume here that $\epsilon<1$, since otherwise we can replace $\epsilon$ with $1$ without changing our conclusions. 
    For any $N$, we can similarly find disjoint sets $\{\A_i\}_{i=1}^N$ which satisfy $\A_{i+1}\subseteq \D/(\bigcup_{j=1}^i\A_j)$ such that  $\diam(\A_{i+1})\le (r/R) \diam(\D/(\bigcup_{j=1}^i\A_j)) \le r$ and $\mcP(\A_{i+1})\ge \epsilon \mcP(\D/(\bigcup_{j=1}^i\A_j)) = \epsilon(1-\sum_{j=1}^i\mcP(\A_j))$. Note that the sets $\A_i$ are allowed to be null. Since each of these sets has diameter $\le r$, an $r$-ball centered at any point in the set covers the set. (Alternatively, if the set is null it is vacuously covered by any $r$-ball.) Let $\B = \bigcup_{i=1}^N \A_i$ for $N$ to be chosen later, such that $\B$ is covered by $N$ balls of radius $r$. $\B$ has total measure % While these $N$ balls may not be disjoint, we note that they have disjoint components such that the total measure of these $N$ balls is 
    $\mcP(\bigcup_{i=1}^N\A_i) = \sum_i \mcP(\A_i)$. We simplify this as follows:
    \begin{align*}
        \mcP(\bigcup_{i=1}^N\A_i) &= \sum_{i=1}^N \mcP(\A_i) \ge \sum_{i=1}^{N-1} \mcP(\A_i) + \epsilon(1-\sum_{i=1}^{N-1} \mcP(\A_i)) = \epsilon + (1-\epsilon)\sum_{i=1}^{N-1} \mcP(\A_i)\\
        &\ge \epsilon + \epsilon(1-\epsilon) + (1-\epsilon)^2\sum_{i=1}^{N-2} \mcP(\A_i) \ge \epsilon\sum_{i=1}^{N-1}(1-\epsilon)^{i-1} + (1-\epsilon)^{N-1}\mcP(\A_1)\\
        &\ge \epsilon\sum_{i=1}^{N}(1-\epsilon)^{i-1} = 1-(1-\epsilon)^{N}.
    \end{align*}

    Let $N = \left\lceil\frac{\log(1-\delta)}{\log(1-(r/(cR))^k)}\right\rceil$ so that $\mcP(\B) \ge \delta\mcP(\D)$ as required.
\end{proof}

We now discuss some examples on Sparse Priors.

\begin{example}
    % As an example of a sparse distribution, we note that any distribution over a $d$-dimensional Euclidean space is always at most $d$-sparse. To see this, we first note that the total volume of any $r$-ball in a $d$-dimensional Euclidean space is $\propto r^d$. We can thus fit at most $\frac{R^d}{(R+r/2)^d}$
    % Thus any ball of radius $R$ can be covered by at most $(3R/r)^d$ balls of radius $r$ (Proposition \ref{prop:cover_d_dim}). Now, for any set $\A$ of diameter $\diam(\A)$, construct an $R=\diam(\A)$ ball that encompasses all points in $\A$. For any $\epsilon\in(0,1]$, we construct balls of radius $r=(3\epsilon)^{1/d} R$ such that we need at most $1/\epsilon$ such balls to cover the set $\A$. By a pigeonhole argument, we note that at least one of these balls has a total probability $\ge \epsilon$. Such a ball has diameter $\le 2(3\epsilon)^{1/d} R$, which satisfies the conditions of Definition \ref{def:sparse_dim} for a constant $c=2(3)^{1/d}$. 

    As an example of a sparse prior, we show an instantiation of the above result. Consider the set of Gaussian distributions with unit covariance $\D=\{\mathcal{N}(\theta,I_d):\theta\in\R^d, \|\theta\|\le 1\}$. The Wasserstein-1 metric between two Gaussians is directly equal to the Euclidean distance between means as given by, $\rho_{W1}(\mathcal{N}(\theta_1,I_d), \mathcal{N}(\theta_2,I_d)) = \|\theta_1-\theta_2\|$. Thus, any distribution over the means $\theta$ results in an at most $d$-sparse distribution over all Gaussian distributions with unit covariance in $d$ dimensions. Bounds of the form $\Theta(\sqrt{d/n})$ from Sections \ref{sec:lbs} and \ref{sec:ub} are trivially known for a uniform distribution over the means, and satisfied by the maximum likelihood estimator \citep{all_of_stats}.
\end{example}

\begin{example}
    As an example of a prior that is not sparse, we consider the treatment of \citet{submanifold_tang_yang_22}. We set $\gamma\rightarrow0^+$ in Lemma 2 of the paper to see that for any $b>0$, $n>0$ and $\alpha$ that depends on smoothness assumptions, there exist $H\ge \exp(b^dn^{\frac{d}{2\alpha+d}})$ distributions $\mu_1,\ldots, \mu_H$ with a Hölder smooth pdf such that:
    \begin{enumerate}
        \item $D_{KL}(\mu_h\|\mu_l)\le c_1b^{-2\alpha}n^{-\frac{2\alpha}{2\alpha + d}}$ for any $h\ne l$. That is, $\rho_{TV}(\mu_h,\mu_l)\le \sqrt{c_1}b^{-\alpha}n^{-\frac{\alpha}{2\alpha + d}}$ for any $h\ne l$, using the Pinsker's inequality $\rho_{TV}(\mu,\nu)\le \sqrt{D_{KL}(\mu\|\nu)/2}$.
        \item $\rho_{TV}(\mu_h,\mu_l)\ge c_2b^{-(\alpha+d)}n^{-\frac{\alpha}{2\alpha+d}}$.
    \end{enumerate}
    Consider exactly $H = \exp(b^dn^{\frac{d}{2\alpha+d}})$ of these distributions which form a set $\A$ of diameter $R\le \sqrt{c_1}b^{-\alpha}n^{-\frac{\alpha}{2\alpha + d}}$ in the TV space from point 1. Further, there can be only 1 of these distributions in any space of diameter $<r=c_2b^{-(\alpha+d)}n^{-\frac{\alpha}{2\alpha+d}}$. A uniform prior over these $H$ distributions thus allows the maximum measure over any space of diameter $<r$ to be at most $1/H$. Let $\epsilon=2/H$ in Definition \ref{def:sparse_dim}, such that $k$-sparsity implies the existence of set $\B$ with measure $\mcP(\B)\ge 2/H$ with diameter $\diam(\B)\le c(2/H)^{1/k}R $, for some constant $c$. Since any space of diameter $< r$ has measure $\le 1/H$, $\diam(\B) \ge r \implies r\le c(2/H)^{1/k}R \implies H\le 2c^kb^{dk}$. While $H$ can increase arbitrarily with $n$, the RHS $2c^kb^{dk}$ cannot, and thus no such $k$ and $c$ exist. Hence this prior is not sparse. We re-direct the attention of the reader to Figure \ref{fig:dist_comp} that shows a construction with small perturbations where we expect both distributions to not receive equal weight in realistic scenarios corresponding to sparse priors. We expect a similar analysis for the treatment of other works such as \citet{gan_liang_21, oko_23_diff} would follow to show that the assumptions necessitate a non-sparse prior.
\end{example}

% \begin{remark}
%     Note that sparsity of a distribution does not directly imply finiteness of the covering number as the examples above may suggest. 
% \end{remark}

% \todo[inline]{Can we create a figure to illustrate this? Is there a simple way to explain the intuition behind this? Or provide some very simple examples where the sparse dimension is exactly $k$?. Maybe explain what happens with $d$-dim uniform or Gaussian distributions?}

Note that our definition of sparse dimension controls the rate at which the total measure of any set increases with respect to its diameter. This notion is similar to the notion of doubling dimension as defined in \citet{ddim_poczos_13} and previously in \citet{knn_intrinsic_kpotufe2011}, where the authors define a distribution $\mcP$ over $\D$ equipped with a distance metric $\rho$ to have a doubling dimension $k$ if there exists a constant $C$ if for every point $\nu\in\D$ and $0<r<R$,
\begin{align*}
    \frac{\mcP(B_\rho(\nu,R))}{\mcP(B_\rho(\nu,r))} \le C\left(\frac{R}{r}\right)^k.
\end{align*}

While the definition of doubling dimension implies that every $R$-ball has a ``dense'' $r$-ball core, our definition is more relaxed and allows the ``dense'' $r$-ball to be located anywhere within the $R$-ball instead of necessarily at the center. This allows our definition to be a measure of sparsity as opposed to doubling dimension which is intuitively a measure of uniformity. To see this, imagine a very sparse measure such that $\exists \nu,r,R \quad \mcP(B_\rho(\nu,r)) = 0$ but $\mcP(B_\rho(\nu,R)) \neq 0$. While the doubling dimension of such a space would be $\infty$, the sparse dimension can still be finite. 

On the other hand however, our definition is also more restrictive as we control the growth in every set $\A$ whereas doubling dimension only controls the rate for perfect balls. As we note later in Section \ref{sec:ub}, this restriction allows us to prove upper bounds that match the $\Omega(\sqrt{k/n})$ lower bounds of Section \ref{sec:lbs} up to logarithmic terms. 

We denote the set of all $k$-sparse priors on a space $\D$ with distance (pseudo-)metric $\rho$ as $\mfP(k,\rho, \D)$, or as a shorthand $\mfP$ when clear from context.

\subsection{Example Computations of Sparse Dimension} \label{sec:ex_sparse_dim}
We show some more motivating examples showing sparsity of some distributions. Our first example is motivated by the intuition for the structure of sparse dimension as presented in the beginning of this Section. This intuitive structure is observed in the universe with clustering of objects with lighter objects orbiting larger ones. 

\begin{example}[Sparse Dimension of the Universe]
    We imagine the space of all distributions of interest as sparse and clustered, similar to the distribution of dark matter haloes in the Universe. We hence calculate the sparse dimension of the Universe as a model for the sparse dimension of distributions of interest. The distribution of dark matter haloes is usually modelled using a two-point correlation function $\xi(.)$, that models the excess probability of a halo at a distance $r$ from a given halo. That is, in a space of dimension $d$, the probability of a halo at distance atmost $r$ from a halo will be given by:
\begin{align*}
    \mcP_\xi(B(r,\nu)) \propto \int_{r'=0}^r (\xi(r') + 1)r^{\prime (d-1)}\dd r'.
\end{align*}
For the 3D universe, over a range of distances $0.1h^{-1}Mpc \le r\le 10h^{-1}Mpc$ and for $r_0 = 5h^{-1}Mpc$, common surverys approximate the two-point correlation function using a power law given by $\xi(r)\propto (r/r_0)^{-\alpha}$ for $\alpha\approx1.8$ \citep{2pcr_Zehavi_2005, 2pt_corr_davis83}. In this discussion we allow $0h^{-1}Mpc \le r\le 10h^{-1}Mpc$ to permit us to calculate the probability of an $r$-ball for all $r$ in the range as is required by Definition \ref{def:sparse_dim}.
Since the distribution of haloes is isotropic, denote the total probability of any $r$-ball to be $V(r)$ which is given as $V(r) = \mcP_\xi(B(r,\nu)) \propto r^d(1 + \frac{d}{d-\alpha}(r/r_0)^{-\alpha})$. To calculate the sparse dimension, consider a set $\A$ of diameter $R$. Since $\A$ lies within a ball of radius $R\le 10h^{-1}Mpc$, $\mcP(\A) \le V(R)$. Further, $\A$ can be covered by at most $V(R+r/2)/V(r/2)$ balls of radius $r/2$ (similar to Proposition \ref{prop:cover_d_dim}). By a simple pigeonhole argument, the intersection of at least 1 of these $r/2$-balls with $\A$ has weight $\ge \frac{V(r/2)}{V(R+r/2)}\mcP(\A)$. Thus, if we pick an $r$ such that for $\epsilon>0$, $\epsilon \le \frac{V(r/2)}{V(R+r/2)}$, then we can show the existence of a set $\B$ which satsifies $\mcP(\B)\ge\epsilon\mcP(\A)$ with diameter $r$. Note the following bound,
\begin{align*}
    \frac{V(r/2)}{V(R+r/2)} &=\frac{r^d}{(2R+r)^d}\left(\frac{1+\left(\frac{d}{d-\alpha}\right)\left(\frac{r}{2r_0}\right)^{-\alpha}}{1+\left(\frac{d}{d-\alpha}\right)\left(\frac{2R+r}{2r_0}\right)^{-\alpha}}\right)\\
    & = \frac{r^{d-\alpha}}{(2R+r)^{d-\alpha}}\left(\frac{\left(\frac{r}{2r_0}\right)^{\alpha}+\left(\frac{d}{d-\alpha}\right)}{\left(\frac{2R+r}{2r_0}\right)^{\alpha}+\left(\frac{d}{d-\alpha}\right)}\right)\ge \frac{r^{1.2}}{(3R)^{1.2}}\left(\frac{2.5}{3^{1.8}+2.5}\right) \ge 0.068\frac{r^{1.2}}{R^{1.2}}.
\end{align*}
Where we substitute $d=3, \alpha=1.8$ so that $d/(d-\alpha) = 2.5$, lower bound $r/(2r_0)\ge 0$ and upper bound $(2R+r)\le 3R$ and $(2R+r)/(2r_0)\le (3R/2r_0) \le 3$. Thus for $r=(\epsilon/0.068)^{1/1.2}R$, we show a set $\B$ with diameter $r$ and probability $\ge \epsilon\mcP(\A)$ for any $\epsilon>0$. Thus the Universe follows a sparse prior of sparse dimension $1.2$ in the ambient range of distances.
\end{example}

We now show the sparse dimension of the Cantor distribution which can be used to construct distributions with non-integer sparsity.

\begin{example}[Sparse Dimension of the Cantor Distribution]
    To define the Cantor distribution, we need to define the Cantor set. Let $a\in(0,1/2)$ be some constant, $C_0^a = [0,1]$, and define sets $C_t^a$ recursively as: $C_{t+1}^a = \bigcup_{[i_1,i_2]\in C_t^a} ([i_1,(i_2-i_1)*a + i_1] \cup [i_2 - (i_2-i_1)*a, i_2])$. For $a=1/3$, which is the common choice, $C_1^a = [0,1/3] \cup [2/3,1] $, $C_2^a = [0,1/9] \cup [2/9,1/3] \cup [2/3,7/9] \cup [8/9,1] $, and so on. The Cantor set is the intersection of these sets such that $C^a = \cap_{t=0}^\infty C_t^a $. % Note that the Cantor set is well-defined such that for any $a\in(0,1/2)$, each $x\in[0,1]$ is either in $C_t^a$ or it is not.
The Cantor distribution, which we denote $\pi_{C^a}$ is defined such that it assigns a total probability of $2^{-t}$ to each of the $2^t$ intervals of $C_t^a$. We claim that this distribution is $(-\log_a2)$-sparse. 

To show this, consider a set $\A$ with diameter $\diam(\A)=R\le 1$. Consider the interval $[-R/2+x,x+R/2]\supseteq \A$ for some $x\in[0,1]$. Let $t_1  = \lfloor\log_a R\rfloor$, where $\lfloor\alpha\rfloor$ is the greatest integer $\le \alpha$. Then, each interval of $C_{t_1}^a$ has length at least $R$ and $[-R/2+x,x+R/2]$ intersects with at most $2$ continuous intervals of $C_{t_1}^a$. Similarly, let $t_2 = \lceil\log_a r\rceil$ for any $r<R$, where $\lceil\alpha\rceil$ is the least integer $\ge \alpha$, such that each interval of $C_{t_2}^a$ has size at most $r$. Thus, each interval of $C_{t_2}^a$ can be covered by an interval of diameter $r$, and each interval of $C_{t_1}^a$ contains $2^{t_2-t_1}$ intervals of $C_{t_2}^a$. Thus, $2\cdot 2^{t_2-t_1} \le 2^{3+\log_a(r/R)}$ intervals of length $r$ are enough to cover an interval of length $R$, and hence $\A$. By a pigeonhole argument, the intersection of at least one of these intervals with $\A$ must have probability $\ge 2^{-3-\log_a(r/R)}\mcP(\A)$. Now by Definition \ref{def:sparse_dim}, we want to find $c,k$ such that for any $\epsilon\in(0,1]$ there is a subset $\B$ of $\A$ such that $\mcP(\B)\ge \epsilon\mcP(\A) $ and $\diam(\B)\le c\epsilon^{1/k}\diam(\A) $. We select the subset $\B$ as the highest density subset above so that $\mcP(\B) \ge 2^{-3-\log_a(r/R)}\mcP(\A) $ and $\diam(\B)=r$. Now, $\epsilon = 2^{-3-\log_a(r/R)} \implies r/R = (8\epsilon)^{-\log_2 a}$. Thus, the Cantor distribution is $k=(-\log_a 2)$-sparse with constant $c=a^{-3}$. % To show that it is exactly $(-\log_a 2)$-sparse, consider $\A = [0,1]$ and $\epsilon = 1/2$ such that the diameter of any set with probability $\ge 1/2$ is at least $a=(1/2)^{-\log_2 a}$.
\end{example}

\section{Lower Bounds using Sparse Dimension}\label{sec:lbs}
The main technical results of this Section are presented in Theorems \ref{thm:tv_lb} and \ref{thm:w1_lb} that prove dimension-independent lower bounds of $\Omega(\sqrt{k/n})$ on the Bayes' risk of learning a distribution in a $d$-dimensional space under a $k$-sparse prior. 
It is, however, not obvious that sampling is reducible to learning a distribution. 
% \todo[inline]{We should formally define what "sampling" is, and what it means that sampling is suffering from the curse of dimensionality or from some other bounds.} 
That is, it is not obvious that the Bayes' risk of sampling as defined in Equation \ref{eq:brisk_sampl} should suffer the same lower bounds that learning a distribution does. In the following Theorem, we show that a sampling algorithm can be used to devise a procedure to get low loss on any IPM, which allows the reduction of sampling to distribution learning. While similar results have been shown in \citet{est_adv_losses_singh18} for the minimax risk, we prove the result for the Bayes' risk. 

% \todo[inline]{Before we make a statement about the Bayes risk, we shoudl define the Bayes risk}
\begin{theorem}\label{thm:sampl_to_learn}
    For a target distribution $\pi$, consider a sampling algorithm $\hat{\mcA}_n$ that samples from the distribution $\mathrm{dist}(\hat{\mcA}_n)$. The smallest Bayes' risk of any sampling algorithm measured under any distance (pseudo-)metric $\rho$ is lower bounded by the smallest Bayes' risk of any estimator of $\pi$. Formally,
    \begin{align*}
        \inf_{\hat{\mcA}_n} \sup_{\mcP\in \mfP} \E_{\pi\sim\mcP}[\E_{\mathbf{X}\sim \pi^{\otimes n}}[\rho(\mathrm{dist}(\hat{\mcA}_n), \pi)]] \ge \inf_{\hat{\pi}} \sup_{\mcP\in \mfP} \E_{\pi\sim\mcP}[\E_{\mathbf{X}\sim \pi^{\otimes n}}[\rho(\hat{\pi}(\mathbf{X}), \pi)]].
    \end{align*}
\end{theorem}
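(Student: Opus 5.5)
The plan is to show that every sampling algorithm induces an estimator with exactly the same risk under every prior, so the infimum over estimators can only be smaller. The key observation is that a learned sampling algorithm $\hat{\mcA}_n$, trained on $\mathbf{X}$, determines a distribution $\mathrm{dist}(\hat{\mcA}_n)$. This distribution is a (measurable) function of the samples $\mathbf{X}$, and possibly of independent internal randomness used during training. We therefore define the estimator $\hpi(\mathbf{X}) := \mathrm{dist}(\hat{\mcA}_n(\mathbf{X}))$. No further computation is needed: the estimator is allowed to be any map from $\mcX^n$ to $\D$, and it need not be computable from samples of $\hat{\mcA}_n$.

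With this choice, for every $\pi$ the inner expectations coincide pointwise:
\[
\E_{\mathbf{X}\sim\pi^{\otimes n}}[\rho(\mathrm{dist}(\hat{\mcA}_n),\pi)] = \E_{\mathbf{X}\sim\pi^{\otimes n}}[\rho(\hpi(\mathbf{X}),\pi)].
\]
Taking $\E_{\pi\sim\mcP}$ and then $\sup_{\mcP\in\mfP}$ preserves this equality, so the worst-case Bayes' risk of $\hat{\mcA}_n$ equals that of $\hpi$. That risk is at least $\inf_{\hpi}\sup_{\mcP\in\mfP}\E_{\pi\sim\mcP}[\E[\rho(\hpi(\mathbf{X}),\pi)]]$. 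Since this lower bound holds for every $\hat{\mcA}_n$, taking the infimum over sampling algorithms gives the claim.

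The only delicate point is randomness in training. If $\hat{\mcA}_n$ depends on auxiliary randomness $U$ independent of $\mathbf{X}$, I would handle it in one of two ways. The first is to treat $\hpi$ as a randomized estimator $\hpi(\mathbf{X},U)$, assuming the estimator class permits this. The second is to derandomize. For the latter, the sampler's risk $\E_U[\cdot]$ is an average over $u$, so for each fixed prior some $u$ performs at least as well; but under a $\sup_{\mcP}$ a single good $u$ need not exist uniformly. The simplest fix is therefore to allow randomized estimators, or to note that the map $\mathbf{X}\mapsto$ the law of the sampler's output, marginalized over $U$, is deterministic. In the latter case, convexity of $\rho$ in its first argument, which holds for IPMs, gives $\rho(\E_U\,\mathrm{dist},\pi)\le\E_U\,\rho(\mathrm{dist},\pi)$, and the inequality goes in the right direction. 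I would state the convexity fix for IPMs and the randomized-estimator convention for general pseudo-metrics.
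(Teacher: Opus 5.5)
Your proposal is correct and follows the paper's proof: the paper also sets $\hpi_n = \mathrm{dist}(\hat{\mcA}_n)$, observes that this estimator attains the same Bayes' risk under every prior, and concludes by taking the infimum over estimators. Your treatment of auxiliary training randomness $U$ is a sound refinement that the paper leaves implicit. You either allow randomized estimators, or you marginalize over $U$ and use convexity of IPMs in the first argument.
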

\begin{proof}
    Consider the distribution estimator $\hpi_n = \mathrm{dist}(\hat{\mcA}_n)$ that achieves the same Bayes' risk bound on distribution learning that  $\hat{\mcA}_n$ achieves on sampling. The inequality follows since there can be a more sample efficient distribution estimator that doesn't require learning to sample.
\end{proof}

The above theorem allows us to study lower bounds on sampling algorithms using standard information theory. 
Below we state a dimension-independent $\Omega(\sqrt{k/n})$  lower bound for the total variation distance, which is proved in detail in Appendix \ref{sec:proofs_of_lbs}. We use a family of discrete distributions to demonstrate our hard instance, primarily because TV computation is easy for such a family. Consider elements $z_1,\ldots, z_{k} \subseteq \mcX$ for some integer $k$ to be unequal elements of $\mcX$, and a discrete distribution over these elements given by the probability vector $\theta = (\theta_1,\ldots, \theta_{k}) $. Then, the TV distance between any two such discrete distributions is simplified as follows:
\begin{align}
    \rho_{TV}(\pi_{\theta},\pi_{\theta'}) &= \frac{1}{2}\sum |\pi_{\theta}(z_i)-\pi_{\theta'}(z_i)| = \frac{1}{2}\sum_{i=1}^{k} |\theta_i-\theta_i'|,\label{eq:disc_dist_tv}
\end{align}
which is proportional to the $l1$ norm between $\theta,\theta'$. Any distribution over the family of discrete distributions is a distribution over the $k$-simplex of $\theta$, which is a $k$-dimensional space equipped with the $l1$ norm. By Corollary \ref{cor:d_sparse}, it follows that a discrete distribution over $k$ elements is at most $k$-sparse. Consequently, in the following result, we assume $|\mcX| \ge k $.

\begin{theorem}\label{thm:tv_lb}
    Let $\D$ be the set of all distributions over $\mcX\subseteq\R^d$, and let $\mfP$ be shorthand for $\mfP(k,\rho_{TV},\D)$, be the set of all $k$-sparse priors over $\D$ using the total variation distance. If $|\mcX|\ge k$, the Bayes' risk for any distribution estimator $\hat{\pi}_n$ is lower bounded independent of $d$ as below:
    \begin{align*}
        \inf_{\hat{\pi}}\sup_{\mcP\in \mfP} R_{Bayes}(\hat{\pi};\mcP) = \inf_{\hat{\pi}}\sup_{\mcP\in \mfP} \E_{\pi\sim\mcP}\left[\E_{\mathbf{X}\sim \pi^{\otimes n}}[\rho_{TV}(\hat{\pi}(\mathbf{X}), \pi)]\right] = \Omega(\sqrt{k/n}).
    \end{align*}
\end{theorem}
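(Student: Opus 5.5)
The plan is to exhibit one $k$-sparse prior under which every estimator has Bayes' risk $\Omega(\sqrt{k/n})$, since the supremum over $\mfP$ is at least the risk under any single member. Using $|\mcX|\ge k$, fix distinct points $z_1,\ldots,z_k\in\mcX$ and restrict to discrete distributions $\pi_\theta$ on them. By \eqref{eq:disc_dist_tv} the TV distance is half the $l_1$ distance between probability vectors, so any prior on $\theta$ in the simplex is a prior on a subset of $\R^k$ with the $l_1$ norm. Corollary \ref{cor:d_sparse} then shows it is at most $k$-sparse, hence lies in $\mfP$. (It is cleaner to work with $m=\lfloor k/2\rfloor$ pairs of atoms; the simplex has dimension $k-1\le k$ anyway, so the sparsity claim still holds.)

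For the prior I would take an Assouad-type hypercube family, made continuous so that it is a Bayesian prior with a uniform structure. Pair the atoms as $(z_{2j-1},z_{2j})$ for $j=1,\ldots,m$. For $\sigma\in\{-1,1\}^m$ (or $\sigma\in[-1,1]^m$ uniformly; I would try the discrete uniform version first since the Corollary covers any distribution on a subset of $\R^k$), set $\theta_{2j-1}=\frac{1}{k}(1+\sigma_j\delta)$ and $\theta_{2j}=\frac{1}{k}(1-\sigma_j\delta)$ with $\delta=c_0\sqrt{k/n}\le 1/2$. Here $\mcP$ is uniform on $\sigma$. For any estimator $\hpi$, the TV error decomposes over pairs: $\rho_{TV}(\hpi,\pi_\sigma)\ge \frac12\sum_j \big(|\hpi(z_{2j-1})-\theta_{2j-1}|+|\hpi(z_{2j})-\theta_{2j}|\big)$. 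Setting $\hat\sigma_j=\mathrm{sign}(\hpi(z_{2j-1})-\hpi(z_{2j}))$, each pair term is at least $\frac{\delta}{k}\mathbf{1}[\hat\sigma_j\ne\sigma_j]$ (up to a constant). So the Bayes' risk is at least $\frac{\delta}{2k}\sum_j \Pr(\hat\sigma_j\ne\sigma_j)$.

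The main step is to lower bound each $\Pr(\hat\sigma_j\ne\sigma_j)$ by a constant, using Assouad's lemma in its Bayesian form. Under the uniform prior, the Bayes error for coordinate $j$ is at least $\frac12(1-\E_{\sigma_{-j}}\rho_{TV}(P^n_{\sigma,+j},P^n_{\sigma,-j}))$, where the two product measures differ only in the sign of coordinate $j$. By Pinsker, $\rho_{TV}\le\sqrt{n D_{KL}(\pi_{\sigma,+}\|\pi_{\sigma,-})/2}$. The per-sample KL involves only two atoms of mass $\frac{1\pm\delta}{k}$, and equals $\frac{2\delta}{k}\log\frac{1+\delta}{1-\delta}\le \frac{C\delta^2}{k}$ for $\delta\le1/2$. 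Hence $\rho_{TV}\le\sqrt{Cn\delta^2/(2k)}=\sqrt{C/2}\,c_0\le 1/2$ for a small $c_0$, giving $\Pr(\hat\sigma_j\ne\sigma_j)\ge 1/4$. Summing over the $m\approx k/2$ pairs gives risk $\gtrsim \frac{\delta}{k}\cdot\frac{k}{2}\cdot\frac14=\Omega(\delta)=\Omega(\sqrt{k/n})$, independent of $d$.

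I expect the delicate points to be bookkeeping rather than conceptual. First, $\delta\le 1/2$ requires $n\gtrsim k$; otherwise I would take $\delta=1/2$ and get a constant lower bound, which dominates. Second, odd $k$ and $k=1$ need handling: leave one atom with fixed mass, or treat $k<2$ as a trivial constant case. Third, the reduction from estimator error to sign error must account for arbitrary (non-discrete) $\hpi$, which the pairwise triangle-inequality step handles.
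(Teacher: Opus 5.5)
Your proof is correct. It uses the same hard family as the paper: paired atoms with masses $\frac{1\pm\sigma_j\delta}{k}$, $\delta\asymp\sqrt{k/n}$, and $k$-sparsity via Corollary~\ref{cor:d_sparse}. The difference is that you close the argument with Assouad's lemma, while the paper uses Fano.

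The paper's route is as follows. It thins the hypercube $\{-1,1\}^{k'/2}$ with the Varshamov--Gilbert bound (Lemma~\ref{lem:varshamov-gilbert}) to a packing of size $2^{k'/16}$ with pairwise TV separation $\delta/8$. It puts the uniform prior on that packing and bounds the mutual information by the KL divergence to a fixed center, giving at most $3n\delta^2$. Fano's inequality against $\log M\asymp k'$ then finishes.

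Your route keeps the full hypercube prior. You use that TV, restricted to the atoms, splits additively over the pairs, so you only need the KL between neighbours that differ in a single pair, which is $O(\delta^2/k)$. Both routes end in the same constraint $n\delta^2\lesssim k$.

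Your approach gains three things:
\begin{itemize}
\item It needs no packing lemma and gives explicit constants.
\item It works for every $k\ge 2$. The paper's argument needs $k'$ large: Varshamov--Gilbert requires $k'/2\ge 8$, and its choice of $\delta$ requires $k'>32$.
\item Your sign reduction $\hat\sigma_j=\mathrm{sign}(\hpi(z_{2j-1})-\hpi(z_{2j}))$ handles an arbitrary, even non-discrete, $\hpi$ directly. The paper's Fano step, as written, treats $\hpi$ as some $\pi_{\hat\theta}$ with $\hat\theta$ in the packing. For a general estimator that requires an implicit projection onto the packing, which costs a factor of $2$ in the separation.
\end{itemize}

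Fano's advantage is that it needs only a separated packing and a global KL bound, not a loss that decomposes over coordinates. That is why the paper presents it as a template that transfers to other metrics.
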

\begin{proof}
    In order to show Bayes risk lower bounds over all $k$-sparse  priors, we construct a $k$-sparse prior that achieves the desired lower bound. As stated above, our construction is simply a family of discrete distributions over $\ge k$ elements of $\mcX$. Such a distribution has at most $k$ free parameters, and hence any prior is $k$-sparse by Equation \ref{eq:disc_dist_tv} and Corollary \ref{cor:d_sparse}. Constructing a hard prior over the set of discrete distributions matches popular constructions for showing parametric lower bounds for the TV distance. We defer the reader to Theorem \ref{thm:tv_lb_app} in Appendix \ref{sec:proofs_of_lbs} for a detailed proof.
\end{proof}

We show a similar result as above for the Wasserstein-1 (W1) distance. We will use a similar construction of a family of discrete distributions, and will hence again require that $|\mcX|\ge k$.

\begin{theorem}\label{thm:w1_lb}
    Let $\D$ be the set of all distributions over a $d$-dimensional space, and let $\mfP$ be shorthand for $\mfP(k,\rho_{W1},\D)$, the set of all $k$-sparse priors over $\D$ using the Wasserstein-1 distance. For some constant $r_{min}$ independent of $k$ and $d$, assume $\exists z_1,\ldots, z_{\lfloor k\rfloor} \in \mcX$ such that $\min_{i\neq j} \|z_i-z_j\| \ge r_{min}$. Here $\lfloor k\rfloor$ is the greatest integer $\le k$. Then, the Bayes risk for any distribution estimator $\hat{\pi}_n$ is lower bounded independent of $d$ as below:
    \begin{align*}
        \inf_{\hat{\pi}}\sup_{\mcP\in \mfP} R_{Bayes}(\hat{\pi};\mcP) = \inf_{\hat{\pi}}\sup_{\mcP\in \mfP} \E_{\pi\sim\mcP}\left[\E_{\mathbf{X}\sim \pi^{\otimes n}}[\rho_{W1}(\hat{\pi}(\mathbf{X}), \pi)]\right] = \Omega(\sqrt{k/n}).
    \end{align*}
\end{theorem}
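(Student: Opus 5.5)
Since a single hard prior suffices, I will construct one that lies in $\mfP(k,\rho_{W1},\D)$ and has Bayes risk $\Omega(\sqrt{k/n})$ for every estimator. Let $m=\lfloor k\rfloor$ and use the points $z_1,\ldots,z_m$ from the hypothesis. Consider discrete distributions $\pi_\theta=\sum_i\theta_i\delta_{z_i}$ with $\theta$ in the simplex. The first step is to check sparsity. The W1 distance between $\pi_\theta$ and $\pi_{\theta'}$ is bounded above by $\diam(\{z_i\})\cdot\rho_{TV}=\frac{D}{2}\|\theta-\theta'\|_1$, where $D=\max_{i,j}\|z_i-z_j\|$. It is bounded below by $\frac{r_{min}}{2}\|\theta-\theta'\|_1$, since any coupling must move mass $\rho_{TV}$ a distance at least $r_{min}$. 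So the map $\theta\mapsto\pi_\theta$ is bi-Lipschitz from $(\R^{m},\ell_1)$ into W1. Corollary \ref{cor:d_sparse} gives $m$-sparsity in $\ell_1$, and a bi-Lipschitz change of metric only changes the constant $c$, by the factor $D/r_{min}$. The prior is therefore $m$-sparse, hence $k$-sparse. One caveat: I will also need $D$ to be bounded, or else choose a sub-collection of the $z_i$ with controlled diameter. If this fails, I will instead place the hard prior on a small neighborhood of a fixed $\theta_0$, where only ratios of diameters matter. In that case the constant $c$ depends only on $D/r_{min}$, and I will simply accept that dependence.

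The second step reduces the estimation problem to the parameters. Given any estimator $\hpi$, project it onto the family, e.g. $\tilde\theta=\argmin_\theta\rho_{W1}(\hpi,\pi_\theta)$, or any near-minimizer. By the triangle inequality, $\rho_{W1}(\pi_{\tilde\theta},\pi)\le 2\rho_{W1}(\hpi,\pi)$. Combined with the lower bi-Lipschitz bound, this gives
\[
\rho_{W1}(\hpi,\pi_\theta)\ge \tfrac{r_{min}}{4}\|\tilde\theta-\theta\|_1 .
\]
It then suffices to lower bound the Bayes $\ell_1$ risk of estimating $\theta$ from $n$ multinomial samples.

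The third step, which I expect to be the main obstacle, is the parametric $\ell_1$ lower bound $\Omega(\sqrt{m/n})$ under a suitable prior. I would use an Assouad-type construction. Pair up coordinates and set
\[
\theta_\sigma=\tfrac1m\bigl(1+\sigma_j\delta,\,1-\sigma_j\delta\bigr)_{j\le m/2}
\]
with $\delta=c_0\sqrt{m/n}\wedge\tfrac12$, where $\sigma$ is uniform on the hypercube. The prior is uniform over $\sigma$. A prior supported on finitely many simplex points is still a distribution on a subset of $\R^m$, so Corollary \ref{cor:d_sparse} still applies.

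Flipping a single $\sigma_j$ gives KL divergence $O(n\delta^2/m)$ between the $n$-sample laws. This follows from a chi-square bound, which holds because all $\theta_i\ge 1/(2m)$. Choosing $c_0$ small makes each pair's TV at most $1/2$. Assouad's lemma then gives expected $\ell_1$ error at least
\[
\tfrac{m}{2}\cdot\tfrac{2\delta}{m}\cdot\tfrac14=\Omega(\delta)=\Omega(\sqrt{m/n}),
\]
and this is a Bayes bound under the uniform prior on $\sigma$, exactly as required. Since $m\ge k/2$ for $k\ge1$, combining the three steps yields $\Omega(r_{min}\sqrt{k/n})$ independently of $d$, with the caveat about $D$ absorbed into the sparsity constant.
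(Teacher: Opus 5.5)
Your proof is correct. It shares the paper's skeleton: the same discrete family on $z_1,\ldots,z_{\lfloor k\rfloor}$, and the two-sided comparison $r_{min}\rho_{TV}\le\rho_{W1}\le r_{max}\rho_{TV}$ used both for sparsity and for transferring risk. It departs from the paper in three places.

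First, the paper gets sparsity by passing through Ahlfors regularity (Lemma \ref{lem:ahlfors_lipschitz} and Corollary \ref{cor:alfhors}). You instead transfer Definition \ref{def:sparse_dim} directly through the bi-Lipschitz map, which multiplies $c$ by $D/r_{min}$. This is shorter. Your caveat about $D$ is unnecessary: $D$ is a maximum over finitely many pairs, and the constant $c$ in Definition \ref{def:sparse_dim} is unrestricted, so no fallback is needed. Localizing around $\theta_0$ would not change the ratio $\rho_{W1}/\rho_{TV}$ anyway.

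Second, the paper writes $\inf_{\hpi}\E[\rho_{W1}(\hpi,\pi)]\ge r_{min}\inf_{\hpi}\E[\rho_{TV}(\hpi,\pi)]$, but the lower comparison was only proved for pairs inside the family. An estimator placing mass near, but not on, the $z_i$ has tiny W1 error and TV error equal to $1$. Your projection onto the family, at the cost of a factor $2$ via the triangle inequality, supplies exactly the missing justification.

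Third, for the parametric bound the paper reuses Theorem \ref{thm:tv_lb}, which applies Fano's inequality over a Varshamov--Gilbert packing. You instead apply Assouad's lemma to the full hypercube under the uniform prior. Assouad yields the Bayes bound directly, with no packing subset. It works for every even $m\ge 2$, avoiding the ``large enough $k'$'' requirement the Fano argument needs. Your cap $\delta=c_0\sqrt{m/n}\wedge\frac12$ also gives the honest form $\Omega(\min\{1,\sqrt{k/n}\})$, whereas the paper restricts to $n=\Omega(k)$. The paper's route gains only that it recycles its TV theorem verbatim.
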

\begin{proof}
    We will use the same construction of a uniform prior over a family of discrete distributions as the one for Theorem \ref{thm:tv_lb}. We first show that the Wasserstein distance is bounded on both sides by the TV distance, which allows us to conclude $k$-sparsity of the chosen prior. We will then argue that since TV distance lower bounds W1 distance by a constant factor, the hard instance for TV estimation is also a hard instance for W1 estimation.
    
    Notably, for an appropriately chosen $k'\le \lfloor k\rfloor$, let $z_1,\ldots, z_{k'}$ be the support of the family of discrete distributions that are elements of $\mcX$ guaranteed by the Theorem statement. Then, $\min_{i\neq j} \|z_i-z_j\|\ge r_{min} $ and let $\max_{i\neq j} \|z_i-z_j\| = r_{max} $. For $\Pi(\theta,\theta')$ the set of all couplings over $\pi_\theta$ and $\pi_{\theta'}$, the W1 distance is written as: 
    \begin{align*}
        \rho_{W1}(\pi_\theta,\pi_{\theta'}) = \inf_{\pi\in\Pi(\theta,\theta')} \E_{X,Y\sim \pi}[\|X-Y\|]= \inf_{\pi\in\Pi(\theta,\theta')} \E_{X,Y\sim \pi}[\|X-Y\|\cdot \id(X\neq Y) ].
    \end{align*}
    Note that the TV distance is simply, $\rho_{TV}(\pi_\theta,\pi_{\theta'}) = \inf_{\pi\in\Pi(\theta,\theta')} Pr_{X,Y\sim \pi}(X\neq Y) $. Thus, we bound the W1 distance as follows:
    \begin{align*}
        r_{min}\inf_{\pi\in\Pi(\theta,\theta')} Pr_{X,Y\sim \pi}(X\neq Y) \le \inf_{\pi\in\Pi(\theta,\theta')} \E_{X,Y\sim \pi}&[\|X-Y\|\cdot \id(X\neq Y) ]\\
        &\le r_{max}\inf_{\pi\in\Pi(\theta,\theta')} Pr_{X,Y\sim \pi}(X\neq Y),
    \end{align*}
    which implies,
    \begin{align*}
        r_{min} \rho_{TV}(\pi_\theta,\pi_{\theta'})\le \rho_{W1}(\pi_\theta,\pi_{\theta'})\le r_{max} \rho_{TV}(\pi_\theta,\pi_{\theta'}).
    \end{align*}

    Since the TV distance is proportional to the $l1$ distance by Equation \ref{eq:disc_dist_tv}, the above equation implies $\lfloor k\rfloor$-Ahlfors regularity of our space by Lemma \ref{lem:ahlfors_lipschitz} (considering the Lebesgue measure on the $\lfloor k\rfloor$-simplex, which is a $\lfloor k\rfloor$-dimensional space), implying any prior over the space is $\lfloor k\rfloor$-sparse by Corollary \ref{cor:alfhors}, and hence $k$-sparse.

    Now that we have established that any prior over the set of discrete distributions as defined previously is $k$-sparse, we can select the same prior as in the construction of Theorem \ref{thm:tv_lb}. Thus, 
    \begin{align*}
        \inf_{\hat{\pi}}\E_{\pi\sim\mcP}\left[\E_{\mathbf{X}\sim \pi^{\otimes n}}[\rho_{W1}(\hat{\pi}(\mathbf{X}), \pi)]\right] &\ge r_{min} \inf_{\hat{\pi}}\E_{\pi\sim\mcP}\left[\E_{\mathbf{X}\sim \pi^{\otimes n}}[\rho_{TV}(\hat{\pi}(\mathbf{X}), \pi)]\right] = \Omega(\sqrt{k/n}).
    \end{align*}
\end{proof}

\section{Upper Bounds using Sparse Dimension}\label{sec:ub}
The main technical results of this Section are presented in Theorem \ref{thm:upper_bound}, that shows the existence of an estimator that matches our lower bounds from Section \ref{sec:lbs} up to logarithmic terms, and Corollary \ref{cor:ub_no_lr} that shows a relaxation of the constraints of Theorem \ref{thm:upper_bound}. We first show an analogous result to Theorem \ref{thm:sampl_to_learn} that shows that density estimation, a special case of learning a distribution, is reducible to learning to sample from a distribution. This guarantees that the Bayes' risk of learning to sample as defined in Equation \ref{eq:brisk_sampl} follow the same upper bounds that learning a distribution does. The following theorem shows that any algorithm that learns a density can be used to sample from the distribution by rejection sampling. Similar results have been shown in \citet{est_adv_losses_singh18} for the minimax risk, we prove the result for the Bayes' risk using similar techniques.

\begin{theorem}\label{thm:learn_to_sampl}
    For a target distribution $\pi$, consider density estimators given by $\hpi_n$. The smallest Bayes' risk of any density estimator is lower bounded by the smallest Bayes' risk of any sampling algorithm measured under any distance (pseudo-)metric $\rho$. Formally,
    \begin{align*}
        \inf_{\hat{\pi}} \sup_{\mcP\in \mfP} \E_{\pi\sim\mcP}[\E_{\mathbf{X}\sim \pi^{\otimes n}}[\rho(\hat{\pi}(\mathbf{X}), \pi)]] \ge \inf_{\hat{\mcA}_n} \sup_{\mcP\in \mfP} \E_{\pi\sim\mcP}[\E_{\mathbf{X}\sim \pi^{\otimes n}}[\rho(\mathrm{dist}(\hat{\mcA}_n), \pi)]].
    \end{align*}
\end{theorem}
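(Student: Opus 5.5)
The plan is to turn an arbitrary density estimator $\hpi$ into a sampling algorithm $\hat{\mcA}_n$ whose output distribution is exactly $\hpi(\mathbf{X})$. Once $\mathrm{dist}(\hat{\mcA}_n)=\hpi(\mathbf{X})$ holds for every sample set $\mathbf{X}$, the loss $\rho(\mathrm{dist}(\hat{\mcA}_n),\pi)$ equals $\rho(\hpi(\mathbf{X}),\pi)$ pointwise. The Bayes' risks then coincide for every prior $\mcP\in\mfP$, and so do their suprema over $\mfP$. Taking the infimum over all sampling algorithms, which is at most the value attained by this particular one, and then the infimum over $\hpi$, gives the stated inequality. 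This mirrors the proof of Theorem \ref{thm:sampl_to_learn}, with the roles of estimator and sampler reversed.

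The substantive step is building the sampler, and I would use rejection sampling as the paper suggests. Given $\mathbf{X}$, the density estimator outputs a density $\hat p$ with respect to a fixed base measure on $\mcX$. We may assume $\hat p$ is nonnegative and integrates to one, since otherwise it is not a distribution and $\rho(\hpi,\pi)$ is undefined. I would then choose a proposal distribution $q$ from which we can sample, for instance uniform on a bounded $\mcX$ or a Gaussian otherwise, together with a constant $M$ satisfying $\hat p\le Mq$. The sampler draws $Y\sim q$ and $U\sim\mathrm{Unif}[0,1]$, accepts $Y$ when $U\le \hat p(Y)/(Mq(Y))$, and repeats otherwise. The standard computation shows that an accepted sample has law $\hat p$, and the number of trials is geometric with mean $M<\infty$, so the sampler terminates almost surely.

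I expect the existence of a valid envelope $M$ to be the main obstacle. This requires that $\hat p/q$ be bounded, which is not automatic for an arbitrary density estimator. I would first argue that the infimum on the left-hand side can be restricted to estimators with bounded density ratio against some samplable $q$. One option is to truncate or mix $\hat p$ slightly and control the change in risk, though that argument only works for metrics such as TV. A cleaner option for general $\rho$ is to avoid envelopes entirely and use an exact sampler that needs no boundedness. The inverse-CDF method on $\R$, extended to $\R^d$ by sampling sequentially from the conditional distributions of $\hpi$, represents any Borel probability measure as the pushforward of uniform random variables. That route gives $\mathrm{dist}(\hat{\mcA}_n)=\hpi(\mathbf{X})$ exactly under any pseudo-metric. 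I would present rejection sampling when a bounded envelope is available and fall back on the conditional inverse-CDF construction in general, so that the equality of laws, and with it the theorem, holds for all estimators.
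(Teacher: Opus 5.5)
Your proposal is correct and follows the paper's argument: you realize each estimate $\hpi(\mathbf{X})$ as the exact output law of a sampler, which the paper does by rejection sampling while ignoring computational cost. The two Bayes' risks then coincide pointwise, so the infimum over samplers is at most the infimum over estimators. Your observation that rejection sampling needs a bounded envelope $\hat p\le Mq$ is a point the paper silently assumes, and your conditional inverse-CDF fallback closes it for arbitrary estimators, since it represents any Borel probability measure on $\R^d$ as a pushforward of uniform variables.
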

\begin{proof}
    We define the sampling algorithm $\mcA(\hpi_n)$ for every $\hpi_n$, that performs rejection sampling on the estimated density function. While rejection sampling may be computationally inefficient for the distribution, we restrict our attention to sample complexity, such that $\mcA(\hpi_n)$ samples from $\hpi_n$. The inequality follows since there can be a more sample efficient sampling algorithm that doesn't first require density estimation.
\end{proof}

Our next theorem below shows the existence of an estimator that can learn a distribution with a Bayesian risk under the TV distance of $O(\sqrt{k\log n}/\sqrt{n})$ for a $k$-sparse prior, matching our lower bounds with a poly-log term. Since density estimation is a special case of this result, Theorem \ref{thm:learn_to_sampl} allows us to conclude the existence of a sampling algorithm that achieves the same upper bound in TV distance. Similar to our lower bounds, our upper bounds are asymptotic in both $n$ and $k$. This does not make our bounds vacuous for small $k$ however, since by definition a $k$-sparse prior is always $k'$-sparse for $k'>k$. 
Our upper bounds further assume that the likelihood ratio between any two distributions is bounded by a constant $b_{lr}$. Note that the asymptotic bound does not depend on the value of $b_{lr}$, but only on its finiteness. We thus make two separate asymptotic statements for clarity. 
\begin{enumerate}
    \item Under a bounded likelihood ratio for all distributions in the support of a $k$-sparse prior, our upper and lower bounds match up to logarithmic terms when considering asymptotics in $n$ and $k$.
    \item Under a bounded likelihood ratio for all distributions in the support of a $k$-sparse prior, our upper and lower bounds match up to logarithmic terms when considering asymptotics in $n$ while keeping $k$ constant for any $k$. Here the constants hidden by the asymptotics  may not depend similarly on $k$ for the upper and lower bounds. 
\end{enumerate}

We prove the result in Appendix \ref{sec:ub_proof},and provide a proof sketch below. We show a relaxation of our bounded likelihood assumption in Corollary \ref{cor:ub_no_lr}, proceeded by a simple example that satisfies the conditions of the Corollary.

\begin{theorem}\label{thm:upper_bound}
    Consider a $k$-sparse prior in the total variation distance $\rho_{TV}$ over a set of distributions $\D$ such that the likelihood ratio between any two distributions is bounded. That is, $\|\pi_1/\pi_2\|_\infty \le b_{lr}\; \forall \pi_1,\pi_2 \in \D $, for some constant $b_{lr}>0$. Then, there exists an estimator such that the Bayes' risk is upper bounded in the total variation distance for $n=\omega(b_{lr})$ as follows:
    \begin{align*}
        \E_{\pi_0\sim\mcP}[\E_{X_1,\ldots,X_n\sim\pi_0}[\rho_{TV}(\pi_0, \hat{\pi}_n)]] = O\left(\frac{\sqrt{k\log n}}{\sqrt{n}}\right).
    \end{align*}
\end{theorem}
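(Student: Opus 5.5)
The approach follows the posterior contraction framework of \citet{nonparam_bayes_vdV2000}, averaged over the true prior. The estimator will be a posterior-based point estimate: the center of a small TV ball carrying at least half of the posterior mass, or equivalently the posterior mean, whose TV loss is controlled by convexity of $\rho_{TV}$. It then suffices to show that for $\epsilon_n = C\sqrt{k\log n/n}$ the expected posterior mass outside $B_{\rho_{TV}}(\epsilon_n,\pi_0)$ is $O(\epsilon_n)$, with the expectation taken jointly over $\pi_0\sim\mcP$ and $\mathbf{X}\sim\pi_0^{\otimes n}$. Since $\rho_{TV}\le 1$, this gives
\[
\E[\rho_{TV}(\hpi_n,\pi_0)] \le \epsilon_n + \E[\Pi(B^c\mid \mathbf{X})] = O(\epsilon_n).
\]
Bayesian consistency is automatic here because $\pi_0$ is drawn from the same prior used for the posterior.

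The key steps follow the usual contraction theorem, with the three conditions (test existence, entropy, prior mass) replaced by versions derived from sparsity.

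\textbf{Entropy.} Apply Theorem \ref{thm:sparse_support} with $r=\epsilon_n$ and $\delta = 1-1/n$ to get a sieve $\B_n$ with $\mcP(\B_n)\ge 1-1/n$ and $\log N(\epsilon_n,\B_n)\lesssim \log\log n + k\log(cR/\epsilon_n) = O(k\log n)$, using $R\le 1$ for TV. This is exactly where the $k\log n$ comes from.

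\textbf{Tests.} TV balls are convex, so Le Cam--Birgé tests combined over the cover of $\B_n$ give a test $\phi_n$ with type I error at most $e^{-c n\epsilon_n^2}N$ and type II error at most $e^{-cn j^2\epsilon_n^2}$ on the shells $\{j\epsilon_n<\rho_{TV}\le (j+1)\epsilon_n\}\cap\B_n$. Choosing $C$ large makes both $O(n^{-1})$.

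\textbf{Denominator.} This is the step I expect to be the main obstacle. The standard argument needs prior mass of a KL neighbourhood of $\pi_0$, and that is where bounded likelihood ratios enter. With $\|\pi_1/\pi_2\|_\infty\le b_{lr}$, both $D_{KL}$ and the second moment of the log-ratio are bounded by a constant depending on $b_{lr}$ times $\rho_{TV}$ (or $\rho_{TV}^2$, via comparisons of $\chi^2$ and Hellinger under bounded ratios). Hence $\{\pi:\rho_{TV}(\pi,\pi_0)\le\eta\}$ lies in a KL neighbourhood of radius $\approx b_{lr}\eta$.

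The difficulty is that $\mcP(B(\eta,\pi_0))$ need not be bounded below uniformly in $\pi_0$. To handle this, I would not require a pointwise prior-mass bound and would instead average over $\pi_0$. Partition $\B_n$ into the $N$ cells of diameter $\le\epsilon_n$ produced in Theorem \ref{thm:sparse_support}. Cells of prior mass below $\epsilon_n/N$ have total mass at most $\epsilon_n$, so drawing $\pi_0$ from one of them contributes only $O(\epsilon_n)$ to the risk. For $\pi_0$ in a heavy cell $\A_i$, the cell sits inside a TV ball of radius $\epsilon_n$ around $\pi_0$ with mass $\ge \epsilon_n/N \ge e^{-O(k\log n)}$. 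Then Lemma 8.1 of \citet{nonparam_bayes_vdV2000} shows that, with probability at least $1-1/n$, the likelihood-ratio denominator is at least $\mcP(\A_i)e^{-(2+o(1))n\,b_{lr}\epsilon_n\cdot(\text{const})}$.

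There is a catch: KL is only linear in TV here, which would give an exponent of order $n\epsilon_n$, far larger than $n\epsilon_n^2$. I would therefore bound the denominator directly instead. On a cell, the log-likelihood ratio $\sum_i\log(\pi/\pi_0)(X_i)$ has mean $-nD_{KL}$ and bounded increments, and under bounded ratios $D_{KL}\lesssim \chi^2 \lesssim b_{lr}\rho_{TV}^2$-type bounds hold after restricting to cells of Hellinger diameter $\epsilon_n$. This requires $n=\omega(b_{lr})$ so that the constants are absorbed, and it is the step that needs the most care.

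Combining the three pieces, the numerator is at most $e^{-c'n\epsilon_n^2}$ in expectation under $\phi_n$ (via Fubini), the denominator is at least $e^{-O(k\log n)}$, and the sieve complement contributes $1/n$ by Bayes' rule. This yields $\E[\Pi(B^c\mid\mathbf{X})]=O(\epsilon_n)$ once $C$ is large.
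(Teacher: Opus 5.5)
Your outer architecture is sound, and in places simpler than the paper's. The posterior-mean or half-mass-ball estimator only needs $\E[\Pi(B^c\mid\mathbf X)]=O(\epsilon_n)$. The identity $\E_{\pi_0\sim\mcP}\E_{\mathbf X\sim\pi_0^{\otimes n}}[\Pi(\B_n^c\mid\mathbf X)]=\mcP(\B_n^c)$ genuinely lets you work with a sieve of mass $1-1/n$. The gap is the denominator step, which you flag but do not close, and the fix you propose rests on a false inequality.

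Bounded likelihood ratios do not give $D_{KL}\lesssim\chi^2\lesssim b_{lr}\rho_{TV}^2$. Take $\pi_0$ uniform on $[0,1]$ and $\pi$ with density $2$ on $[0,m]$ and $(1-2m)/(1-m)$ on $(m,1]$, with $m\le 1/3$. All ratios lie in $[1/2,2]$ and $\rho_{TV}(\pi,\pi_0)=m$, yet $D_{KL}(\pi_0\|\pi)$, $\chi^2$ and $h^2$ are all of order $m$. So a sparsity cell of TV-diameter $\epsilon_n$ can have Hellinger diameter of order $\sqrt{\epsilon_n}$. The log-likelihood ratio over such a cell can then have mean of order $-n\epsilon_n$, and no concentration argument on that cell recovers a lower bound of order $e^{-O(n\epsilon_n^2)}$. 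Your cells are controlled in TV, not Hellinger, so ``restricting to cells of Hellinger diameter $\epsilon_n$'' is exactly what you have not produced.

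Separately, Lemma~8.1 of \citet{nonparam_bayes_vdV2000} is a Chebyshev bound. Its failure probability is $1/(C^2n\epsilon^2)\asymp 1/(k\log n)$, not $1/n$, which is far larger than $\epsilon_n$. You need an exponential small-ball bound such as Lemma~\ref{lem:hellinger_ball}, and that is where bounded ratios are genuinely used.

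The missing idea, which is what the paper does, is to apply sparsity at the squared scale.
\begin{itemize}
\item Since $h^2\le 2\rho_{TV}$, the TV ball of radius $\eta_n=\tilde\epsilon_n^2/(2b_{lr})$ around $\pi_0$ lies inside $\{\pi: h^2(\pi,\pi_0)\|\pi_0/\pi\|_\infty\le\tilde\epsilon_n^2\}$.
\item Lemma~\ref{lem:hellinger_ball} then gives a denominator of at least $\mcP(B_{\rho_{TV}}(\eta_n,\pi_0))\,e^{-3n\tilde\epsilon_n^2}$, off an event of probability $e^{-C_{\mathrm{hellinger}}n\tilde\epsilon_n^2}$.
\item Rerun your partition and light-cell trimming at TV scale $\eta_n$ instead of $\epsilon_n$. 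Off a set of prior mass $O(\epsilon_n)$, $\pi_0$ lies in a cell of TV-diameter at most $\eta_n$ and prior mass at least $\epsilon_n/N'$, where $\log N'\le\log\log n+k\log(c/\eta_n)+O(1)$.
\item Because $k\log(1/\eta_n)=2k\log(1/\tilde\epsilon_n)+k\log(2b_{lr})=O(k\log n)$ for fixed $b_{lr}$, squaring the radius costs only a constant factor in the exponent.
\item Take $\tilde\epsilon_n=C'\sqrt{k\log n/n}$ with $C'$ large enough that $e^{-C_{\mathrm{hellinger}}n\tilde\epsilon_n^2}=O(\epsilon_n)$. Then take the contraction constant $C\gg C'$, so the test exponent $n\epsilon_n^2$ dominates $3n\tilde\epsilon_n^2+O(k\log n)$.
\end{itemize}
With this replacement, your argument goes through.
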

\begin{proof}[Proof Sketch]
    Let $E(n,\epsilon_n,b_1)$ be the event $\mcP(\{\pi:\rho_{TV}(\pi,\pi_0)>\epsilon_n|X_1,\ldots,X_n\}) > \exp(-b_1n\epsilon_n^2)$ for some $\epsilon_n$ chosen such that $n\epsilon_n^2\rightarrow\infty$ as $n\rightarrow\infty$ and constant $b_1>0$. 
    That is, $E(.)$ represents the event that the posterior mass outside of an $\epsilon_n$ ball around the true distribution sampled from $\mcP$ is greater than $\exp(-b_1n\epsilon_n^2)$. We will first argue that whenever $E(.)$ fails, there is a significant posterior probability around the true distribution, and hence we can identify a distribution close to the true distribution easily. We will then argue that the probability of $E(.)$ goes to $0$ fast as $n$ increases. Formally, we first decompose the required bound as follows, where $E(.)^c$ is the complement of the event $E(.)$:
    \begin{align}
        \E_{\pi_0\sim\mcP}[\E_{X_1,\ldots,X_n\sim\pi_0}[\rho_{TV}(\pi_0, \hat{\pi}_n)]]
        &\le Pr(E(n,\epsilon_n,b_1))\cdot \E[\rho_{TV}(\pi_0, \hat{\pi}_n)|E(n,\epsilon_n,b_1)]\nonumber\\
        &\quad +Pr(E(n,\epsilon_n,b_1)^c)\cdot \E[\rho_{TV}(\pi_0, \hat{\pi}_n)|E(n,\epsilon_n,b_1)^c].\label{eq:exp_break_main}
    \end{align}

    We define $\hat{\pi}_n = \argmax_\pi \mcP(\{\pi':\rho(\pi',\pi)\le \epsilon_n\}|X_1,\ldots, X_n)$ as the center of the $\epsilon_n$ ball with highest posterior probability. Under $E(n,\epsilon_n,b_1)^c$, there is a significant posterior mass concentrated in an $\epsilon_n$-ball around the true distribution $\pi_0$, as well as the estimated distribution $\hat{\pi}_n$. If these two distributions are more than $2\epsilon_n$ apart, and since $1-\exp(-b_1n\epsilon_n^2)$ approaches $1$ as $n\rightarrow\infty$, there is an asymptotically increasing total probability in the disjoint posterior balls around the two distributions which is eventually $>1$, which is not possible. Thus, under $E(n,\epsilon_n,b_1)^c$, $\rho_{TV}(\pi_0,\hat{\pi}_n)\le 2\epsilon_n$ asymptotically. The second term of Equation \ref{eq:exp_break_main} is thus simplified as follows:
    \begin{align*}
        Pr(E(n,\epsilon_n,b_1)^c)\cdot \E[\rho(\pi_0, \hat{\pi}_n)|E(n,\epsilon_n,b_1)^c] &\le \E[\rho(\pi_0, \hat{\pi}_n)|E(n,\epsilon_n,b_1)^c] = O(\epsilon_n).
    \end{align*}

    It remains to bound the first term of Equation \ref{eq:exp_break_main}. To do so, we first note that TV distance is bounded by 1 so that $\rho_{TV}(.,.)\le 1$. Further note from Markov's inequality that for any non-negative random variable $V$ and threshold $t>0$, $Pr(V>t) \le \E[V]/t$. Thus,
    \begin{align*}
        Pr(E(n,\epsilon_n,b_1))\cdot \E[\rho(\pi_0, \hat{\pi}_n)|E(n,&\epsilon_n,b_1)] \le Pr_{\pi_0\sim\mcP,X_1,\ldots,X_n\sim \pi_0}(E(n,\epsilon_n,b_1))\\
        &= \E_{\pi_0\sim \mcP}[Pr_{X_1,\ldots, X_n \sim \pi_0}(E(n,\epsilon_n,b_1)|\pi_0)]\\
        &\le \E_{\pi_0\sim \mcP}\left[\frac{\E[\mcP(\{\pi:\rho(\pi,\pi_0)>\epsilon_n|X_1,\ldots,X_n\})|\pi_0]}{\exp(-b_1n\epsilon_n^2)}\right].
    \end{align*}

    \noindent We use nonparametric Bayesian convergence rates developed in \citet{nonparam_bayes_vdV2000} to argue that for some $b_2,b_3>b_1$, $Pr_{\pi_0\sim \mcP}(\E[\mcP(\{\pi:\rho(\pi,\pi_0)>\epsilon_n|X_1,\ldots,X_n\})|\pi_0] > 5\exp(-b_2n\epsilon_n^2)) \le \exp(-b_3n\epsilon_n^2)$ such that for most $\pi_0$, the expected posterior probability in an $\epsilon_n$ ball around $\pi_0$ is significant.
    Since the posterior probability of any region is at most 1, this allows us to decompose the above expression:
    \begin{align*}
        \E_{\pi_0\sim \mcP}&\left[\frac{\E[\mcP(\{\pi:\rho(\pi,\pi_0)>\epsilon_n|X_1,\ldots,X_n\})|\pi_0]}{\exp(-b_1n\epsilon_n^2)}\right]\\
        &\le \exp(b_1n\epsilon_n^2)(Pr_{\pi_0\sim \mcP}(\E[\mcP(\{\pi:\rho(\pi,\pi_0)>\epsilon_n|X_1,\ldots,X_n\})|\pi_0] > 5\exp(-b_2n\epsilon_n^2)) \cdot 1\\
        &\; + Pr_{\pi_0\sim \mcP}(\E[\mcP(\{\pi:\rho(\pi,\pi_0)>\epsilon_n|X_1,\ldots,X_n\})|\pi_0] \le 5\exp(-b_2n\epsilon_n^2)) \cdot 5\exp(-b_2n\epsilon_n^2))\\
        &\le \exp(b_1n\epsilon_n^2)(\exp(-b_3n\epsilon_n^2) + 5\exp(-b_2n\epsilon_n^2)).
    \end{align*}

    The above expression is $O(\epsilon_n)$ if $\exp(-b'n\epsilon_n^2)=O(\epsilon_n)$ for any $b'>0$, which is satisfied for our choice of $\epsilon_n = \sqrt{k\log n/n}$. We defer the reader to Appendix \ref{sec:ub_proof} for a detailed proof.
\end{proof}

We now present a Corollary of the above result that allows a relaxation of the bounded likelihood assumption. In particular, since increasing $b_{lr}$ only affects the start of the asymptotic regime and not the bound itself, we allow $b_{lr}$ to increase sub-linearly in the number of samples. As long as we can ensure that such a bound on the likelihood ratio on the observed samples holds with high enough probability, we observe a similar relationship with much relaxed assumptions.

\begin{corollary}\label{cor:ub_no_lr}
    Consider a $k$-sparse prior in the total variation distance $\rho_{TV}$ over a set of distributions $\D$ supported over $\mcX$. Assume that there exists a set $\mcX_n$ such that the likelihood ratio between any two distributions is bounded inside $\mcX_n$. That is, $\|\pi_1(x)/\pi_2(x)\|_\infty \le b_{lr}(n)\; \forall \pi_1,\pi_2 \in \D, x\in\mcX_n $, for some function $b_{lr}(.):\N\rightarrow \R^{+}$ and $b_{lr}(n) = o(n)$. Further assume that for any $\pi\in\D$, $Pr_{X\sim\pi}(X\notin \mcX_n) = O\left(\frac{\sqrt{k\log n}}{n^{3/2}}\right)$.
    Then, there exists an estimator such that the Bayes' risk is upper bounded in the total variation distance as follows:
    \begin{align*}
        \E_{\pi_0\sim\mcP}[\E_{X_1,\ldots,X_n\sim\pi_0}[\rho_{TV}(\pi_0, \hat{\pi}_n)]] = O\left(\frac{\sqrt{k\log n}}{\sqrt{n}}\right).
    \end{align*}
\end{corollary}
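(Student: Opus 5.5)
The idea is to reduce to Theorem \ref{thm:upper_bound} by conditioning on the good event that all samples fall inside $\mcX_n$. On that event the likelihood ratios are bounded by $b_{lr}(n)$. Let $G_n=\{X_i\in\mcX_n\ \forall i\le n\}$. By a union bound, for every $\pi_0\in\D$,
\begin{align*}
Pr_{\pi_0}(G_n^c)\le n\cdot O\left(\frac{\sqrt{k\log n}}{n^{3/2}}\right)=O\left(\frac{\sqrt{k\log n}}{\sqrt n}\right).
\end{align*}
Because $\rho_{TV}\le 1$, the Bayes risk splits as
\begin{align*}
\E[\rho_{TV}(\pi_0,\hpi_n)]\le Pr(G_n^c)+\E[\rho_{TV}(\pi_0,\hpi_n)\id(G_n)].
\end{align*}
The first term already has the target rate. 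All the work is in the second term.

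For the estimator I will keep the same one as in Theorem \ref{thm:upper_bound}: the center of the $\epsilon_n$-ball with the largest posterior mass, with $\epsilon_n=\sqrt{k\log n/n}$. The posterior is computed from the full likelihood; nothing is truncated. I will then go through the proof of Theorem \ref{thm:upper_bound} in Appendix \ref{sec:ub_proof} and mark each place where $b_{lr}$ is used. From the sketch, it enters through two things. One is the Kullback--Leibler neighbourhood and likelihood-ratio moment conditions of \citet{nonparam_bayes_vdV2000}, namely bounds on $\E_{\pi_0}\log(\pi_0/\pi)$ and $\E_{\pi_0}\log^2(\pi_0/\pi)$ in terms of $\rho_{TV}(\pi,\pi_0)$. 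The other is the threshold $n=\omega(b_{lr})$ for entering the asymptotic regime.

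The key step is to redo these bounds on $\mcX_n$. For $x\in\mcX_n$, $|\log(\pi_0(x)/\pi(x))|\le\log b_{lr}(n)$. This gives a bound of the form $\E_{\pi_0}[\log(\pi_0/\pi)\id(\mcX_n)]\lesssim b_{lr}(n)\rho_{TV}$ and the analogous bound for the second moment, just as in the bounded case but with $b_{lr}(n)$ replacing $b_{lr}$. The contribution from outside $\mcX_n$ is dropped because we work on $G_n$. Put differently, we apply the argument to the conditional laws $\pi(\cdot\mid\mcX_n)$, whose TV distance to $\pi$ is $O(\sqrt{k\log n}/n^{3/2})$. That is negligible compared with $\epsilon_n$, so sparsity of the prior (with constant $c$) carries over up to a vanishing perturbation.

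The condition $n=\omega(b_{lr})$ becomes $n=\omega(b_{lr}(n))$, and this holds because $b_{lr}(n)=o(n)$. So for all sufficiently large $n$ the chain of inequalities in Theorem \ref{thm:upper_bound} goes through with constants $b_1<b_2,b_3$, giving an $O(\epsilon_n)$ bound on the second term.

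I expect the main obstacle to be the places where the proof of Theorem \ref{thm:upper_bound} needs a uniform prior-mass or KL-ball bound with constants independent of $n$. If $b_{lr}(n)$ enters the exponents multiplicatively, for instance through a bound like $D_{KL}\le b_{lr}\rho_{TV}^2$, then a growing $b_{lr}(n)$ could spoil the $\exp(-b n\epsilon_n^2)$ rates. My first attempt would be to absorb $b_{lr}(n)$ into the choice of the inner radius: use a KL-ball radius $\epsilon_n/\sqrt{b_{lr}(n)}$. By $k$-sparsity its prior mass is at least $\exp(-O(k\log(n b_{lr}(n))))=\exp(-O(k\log n))$, since $b_{lr}(n)\le n$. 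This keeps the $\sqrt{k\log n/n}$ rate with only a change in constants.
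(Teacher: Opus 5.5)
Your proposal is correct and follows the paper's route: a union bound over $\mcX_n^c$, splitting the risk using $\rho_{TV}\le 1$, and rerunning Theorem~\ref{thm:upper_bound} with $b_{lr}(n)$ in place of $b_{lr}$, justified by $b_{lr}(n)=o(n)$. Two of your additions supply details that the paper's one-line appeal to the theorem leaves implicit: passing to the conditional laws $\pi(\cdot\mid\mcX_n)$, and noting that $b_{lr}(n)$ enters only through $\log b_{lr}(n)\le\log n$ in the small-ball prior-mass exponent. The one point to tighten is that the small-ball step works at TV radius of order $\epsilon_n^2/b_{lr}(n)$, which can be smaller than the additive $O(\sqrt{k\log n}/n^{3/2})$ perturbation when $b_{lr}(n)\gg\sqrt{nk\log n}$. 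There, compare conditional and original laws multiplicatively, via $\rho_{TV}(\pi(\cdot\mid\mcX_n),\pi_0(\cdot\mid\mcX_n))\le\rho_{TV}(\pi,\pi_0)/\pi(\mcX_n)$, rather than additively.
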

\begin{proof}
    Denote the event that $X_1,\ldots, X_n\in \mcX_n$ by $E_n$. Then $Pr(E_n^c) \le n Pr(X_i\notin \mcX_n) = O\left(\frac{\sqrt{k\log n}}{\sqrt{n}}\right)$. Thus,
    \begin{align*}
        &\E_{\pi_0\sim\mcP}[\E_{X_1,\ldots,X_n\sim\pi_0}[\rho_{TV}(\pi_0, \hat{\pi}_n)]]\\
        &= \E_{\pi_0\sim\mcP}[\E_{X_1,\ldots,X_n\sim\pi_0}[\rho_{TV}(\pi_0, \hat{\pi}_n)|E_n]Pr(E_n) + \E_{X_1,\ldots,X_n\sim\pi_0}[\rho_{TV}(\pi_0, \hat{\pi}_n)|E_n^c]Pr(E_n^c)]\\
        &\le \E_{\pi_0\sim\mcP}[\E_{X_1,\ldots,X_n\sim\pi_0}[\rho_{TV}(\pi_0, \hat{\pi}_n)|E_n]+ Pr(E_n^c)] = O\left(\frac{\sqrt{k\log n}}{\sqrt{n}}\right),
    \end{align*}
    by Theorem \ref{thm:upper_bound}. Note that the asymptotic rate of Theorem \ref{thm:upper_bound} requires $n=\omega(b_{lr})$, so that the same asymptotic rate is achieved even when $b_{lr}$ is not a constant as long as $b_{lr}(n) = o(n)$.
\end{proof}

\begin{example}
    We show that the Gaussian location family with $k$ unknown elements of the mean satisfies the conditions of Corollary \ref{cor:ub_no_lr}. Consider the family $\D_k =\{\mcN((\theta^1,\ldots, \theta^k, 0,\ldots, 0),I_d): \theta\in\R^k,\|\theta\|\le 1\}$,
    where only the first $k$ elements of the mean are unknown. We will denote the vector $(\theta^1,\ldots, \theta^k, 0,\ldots, 0)$ by $\theta'$ for brevity here. Similar to our discussion in Section \ref{sec:sparse_dim} the W1 distance between elements of this family is given as $\rho_{W1}(\mathcal{N}(\theta_1',I_d), \mathcal{N}(\theta_2',I_d)) = \|\theta_1'-\theta_2'\| = \|\theta_1-\theta_2\|$, which allows us to conclude that any prior over the family is $k$-sparse in the W1 distance. Now, note that the TV distance is given as $\rho_{TV}(\mathcal{N}(\theta_1',I_d), \mathcal{N}(\theta_2',I_d)) = 2\Phi(\|\theta_1'-\theta_2'\|/2)-1$, where $\Phi$ is the cdf of the standard normal Gaussian. Note that the ratio $\rho_{TV}(\mathcal{N}(\theta_1',I_d), \mathcal{N}(\theta_2',I_d))/\rho_{W1}(\mathcal{N}(\theta_1',I_d), \mathcal{N}(\theta_2',I_d)) = \frac{2\Phi(r/2)-1}{r}$ for $r=\|\theta_1'-\theta_2'\|$ is decreasing in $r$. Thus for $r\in[0,2]$ given by $\|\theta\|\le 1$, we get that:
    \begin{align*}
        \rho_{TV}(\mathcal{N}(\theta_1',I_d), \mathcal{N}(\theta_2',I_d))/\rho_{W1}(\mathcal{N}(\theta_1',I_d), \mathcal{N}(\theta_2',I_d)) \in \left[\frac{2\Phi(1)-1}{2}, 1\right].
    \end{align*}
    Thus, the Gaussian family equipped with the TV distance is $k$-Ahlfors regular as defined in Corollary \ref{cor:alfhors} using the Lebesgue measure over the parameter space, which allows us to conclude that any prior over the set is $k$-sparse in the TV distance as well.

    It remains to show the existence of a support set $\mcX_n$ over which the likelihood ratios are bounded by $o(n)$. The ratio of pdfs of $\mathcal{N}(\theta_1',I_d)$ and $\mathcal{N}(\theta_2',I_d)$ at a point $x\in\R^d$ is bounded by the following:
    \begin{align*}
        \exp\left(\frac{|\|x-\theta_1'\|^2 - \|x-\theta_2'\|^2|}{2} \right) &= \exp\left(\frac{|\|x-\theta_1'\| - \|x-\theta_2'\||(\|x-\theta_1'\| + \|x-\theta_2'\|)}{2} \right)\\
        &\le \exp\left(\frac{\|\theta_1'-\theta_2'\| (2\|x\|+\|\theta_1'\| + \|\theta_2'\|)}{2} \right)\\
        &\le \exp(2\|x\|+2),
    \end{align*}
    since $\|\theta_1'\|,\|\theta_2'\|\le 1$. To obtain $b_{lr} = o(n)$, it suffices to select $\|x\|< 1/2 \log n$. Consider $\mcX_n = \{x\in\R^d:\|x\|< 1/2 \log n\} $. We recall the following Gaussian concentration inequality for an $L$-Lipschitz function $f$ and $t>0$:
    \begin{align*}
        Pr_{Z\sim \mcN(0,I_d)}(f(Z)\ge \E[f]+t) \le \exp(-t^2/(2L^2)).
    \end{align*}
    Using the triangle inequality, we know that $\|.\|$ is $1$-Lipschitz, which allows us to conclude the following for any $t\ge 0$:
    \begin{align*}
        Pr_{Z\sim \mcN(0,I_d)}(\|Z\|\ge \sqrt{d}+t) \le \exp(-t^2/2).
    \end{align*}
    Thus for a large enough $n=\omega(\exp(\sqrt{d}))$, and any $\theta$ with $\|\theta\|\le 1$, we obtain the bound:
    \begin{align*}
        Pr_{X\sim \mathcal{N}(\theta,I_d)}(X\notin\mcX_n) &= Pr_{X\sim \mathcal{N}(\theta,I_d)}(\|X\|\ge 1/2\log n)\\
        &\le Pr_{Z\sim \mcN(0,I_d)}(\|Z\|\ge 1/2\log n - \|\theta\|)\\
         &\le Pr_{Z\sim \mcN(0,I_d)}(\|Z\|\ge 1/2\log n - 1)\\
        &\le \exp\left(-\frac{(1/2\log n - 1 - \sqrt{d})^2 }{2}\right).
    \end{align*}
    Now $n=\omega(\exp(\sqrt{d}))$ implies that $Pr_{X\sim \mathcal{N}(\theta,I_d)}(X\notin\mcX_n) = O(n^{-\log n /8}) = O(\sqrt{k\log n}/n^{3/2})$, which establishes the conditions required for Corollary \ref{cor:ub_no_lr}, implying an upper bound of $O(\sqrt{k\log n}/\sqrt{n})$. We note that this derivation hides constants that depend exponentially on the dimension $d$. The upper bound $O(\sqrt{k/n})$, that has a constant dependence on $d$ is trivially known for a uniform distribution over the means \citep{all_of_stats}.
\end{example}

If we assume that every distribution in $\D$ is supported on a set with bounded diameter $\diam(\mcX)$, we obtain that for any two distributions $\pi_1,\pi_2\in\D$, $\rho_{W1}(\pi_1,\pi_2) \le \diam(\mcX)\rho_{TV}(\pi_1,\pi_2)$, which allows our bounds above to extend to the W1 distance as well. This is similar to how \citet{oko_23_diff} extend their TV bounds to W1 distance. Note that the bounds on W1 derived in this way still depend on the $k$-sparsity of the prior to hold in terms of the TV distance. 

\section{Discussion and Future Work}
We study the problem of learning a distribution given few samples from the target distribution. We note that minimax optimality results common in literature suffer from the curse of dimensionality and are too pessimistic to match empirical success. Consequently, we propose studying Bayes' optimality of learning distributions and introduce the sparse dimension as a measure of sparsity of the prior. We provide several examples of sparse distributions involving restricted support, and leave it as an open question to study sparse distributions without restricting the support. We show that the Bayes' risk of learning to sample suffers the same lower bounds that the Bayes' risk of learning distributions does, and show bounds that overcome the curse of dimensionality. We then show that the Bayes' risk of learning to sample also satisfy the same upper bounds that the Bayes' risk of learning distributions does, and show the existence of distribution estimators that match our lower bounds up to logarithmic terms. We note that while the sparse dimension of a prior might increase with the dimensionality of the support of the distributions, we overcome the curse of dimensionality in terms of the dependence of our bounds on the number of samples $n$, which goes from the worst case bounds of $\Theta(n^{-c/(d+c')})$ (for constants $c,c'$ that depend on smoothness assumptions) to $\tilde{\Theta}(\sqrt{k/n})$, where $\tilde{\Theta}$ suppresses the dependence on logarithmic terms. 
We restrict our attention to statistical efficiency in this work, and we leave a discussion on computational efficiency to future work. 
Investigating meta-learning methods to learn a feasible prior on learning distributions is an interesting direction for future work. Along similar lines, it would be interesting to explore learning algorithms that learn using a given prior, and ways to incorporate priors into gradient descent-based algorithms.

% References
\bibliographystyle{plainnat}  % Choose a bibliography style (e.g., plainnat, abbrvnat)
\bibliography{refs}     % Add your .bib file here

\begin{appendix}
\section{Information Theoretic Lower Bounds}
We first cover some results on information theory and minimax bounds in this section that will be useful to show our bounds. 

\subsection{Background on Information Theory}
For a parametrised family of distributions defined over the set $\mcX$ denoted by $\{\pi_\theta\}$ for $\theta\in \Dhp$ and a prior $\mcP$ on $\theta$, we denote the mutual information \citep{MI_csiszar72} between $\theta\sim \mcP$ and a sample $X\sim \pi_\theta$ as $I(\mcP;\Dhp)$ defined as follows:
\begin{equation*}
    I(\mcP;\Dhp) = \inf_Q \int_\Dhp D_{KL}(\pi_\theta\|Q)\mcP(d\theta),
\end{equation*}
where the infimum is over all possible distribution measures $Q$ over $\mcX$. The mutual information is a measure of the information we get from a sample $X\sim \pi_\theta$ about $\pi_\theta$, given a prior over the possible values of $\theta$.
It is widely used in information theory to provide minimax estimation lower bounds using the Fano's inequality stated below.
\begin{theorem}[Fano's inequality \citep{high_dim_stats_wainwright_19}]\label{thm:fano_ineq}
    Consider an M-ary testing problem where we want to estimate an index $\theta\in \Dhp=[M]$ corresponding to the family of distributions $\{\pi_\theta\}$ from a sample $X\sim\pi_\theta$. Further assume a prior $\mcP$ to be uniform over the set $\Dhp$, then the probability of error for any estimator $\hat{\theta}$ is lower bounded as follows:
    \begin{align*}
        Pr(\hat{\theta}(X) \ne \theta) \ge 1 - \frac{I(\mcP;\Dhp) + \log 2}{\log M}.
    \end{align*}
\end{theorem}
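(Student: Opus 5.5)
We plan to prove the inequality by data processing applied to the binary success event. This route works directly with the variational form of $I(\mcP;\Dhp)$ and needs no entropies of a possibly continuous $X$. Let $\theta\sim\mcP$ be uniform on $[M]$ and, given $\theta$, draw $X\sim\pi_\theta$. Call the resulting joint law $P_{\theta X}$. Fix an arbitrary distribution $Q$ on $\mcX$, and define the product law $P_\theta\otimes Q$, under which $\theta$ is still uniform on $[M]$ but $X\sim Q$ is drawn independently of $\theta$. Fix any estimator $\hat\theta$; if it is randomized, we include its internal randomness in $X$.

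First, we compute the KL divergence between these two laws by the chain rule. Because both laws have the same marginal on $\theta$, we get
\begin{align*}
D_{KL}(P_{\theta X}\,\|\,P_\theta\otimes Q) = \int_\Dhp D_{KL}(\pi_\theta\|Q)\,\mcP(d\theta).
\end{align*}
Second, we push both laws through the measurable map $(\theta,X)\mapsto \id(\hat\theta(X)=\theta)\in\{0,1\}$. Under $P_{\theta X}$ this indicator is Bernoulli with success probability $1-P_e$, where $P_e=Pr(\hat\theta(X)\neq\theta)$. Under $P_\theta\otimes Q$, the estimate $\hat\theta(X)$ is independent of the uniform $\theta$, so the success probability is exactly $1/M$. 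The data-processing inequality for KL then gives
\begin{align*}
\int_\Dhp D_{KL}(\pi_\theta\|Q)\,\mcP(d\theta) \ge d_{kl}\bigl(1-P_e \,\|\, 1/M\bigr),
\end{align*}
where $d_{kl}(p\|q)=p\log\frac{p}{q}+(1-p)\log\frac{1-p}{1-q}$ denotes the binary KL divergence.

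Third, we lower bound the binary KL. Writing $p=1-P_e$, we have $d_{kl}(p\|1/M) = p\log M + (1-p)\log\frac{1}{1-1/M} - h(p)$, where $h(p)$ is the binary entropy. The middle term is nonnegative and $h(p)\le\log 2$, so $d_{kl}(p\|1/M)\ge (1-P_e)\log M-\log 2$. Combining the three steps yields
\begin{align*}
\int_\Dhp D_{KL}(\pi_\theta\|Q)\,\mcP(d\theta) \ge (1-P_e)\log M - \log 2 .
\end{align*}
Since $Q$ was arbitrary, we take the infimum over $Q$ to replace the left side by $I(\mcP;\Dhp)$. Rearranging gives $P_e\ge 1-\frac{I(\mcP;\Dhp)+\log 2}{\log M}$, which is the claim.

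We expect the main care to go into the second step. The data-processing inequality has to be invoked for general (possibly non-discrete) $\mcX$ and possibly randomized $\hat\theta$. We also have to check that the product-law success probability is exactly $1/M$, which requires $\hat\theta$ to be independent of $\theta$ under $P_\theta\otimes Q$. If some $D_{KL}(\pi_\theta\|Q)$ is infinite, the bound holds trivially for that $Q$, so nothing is lost. As a sanity check, the infimum over $Q$ is attained at the mixture $\bar\pi=\int\pi_\theta\,\mcP(d\theta)$, since $\int D_{KL}(\pi_\theta\|Q)\mcP(d\theta)=\int D_{KL}(\pi_\theta\|\bar\pi)\mcP(d\theta)+D_{KL}(\bar\pi\|Q)$. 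This identifies $I(\mcP;\Dhp)$ with the usual mutual information, so the result agrees with the classical entropy-based proof via $H(\theta|X)\le \log 2+P_e\log M$.
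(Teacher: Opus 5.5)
Your argument is correct. It takes a different route from the one behind the paper's citation: the paper gives no proof and imports the result from Wainwright, where it rests on the classical entropy argument. That argument combines $H(\theta\mid\hat\theta)\le h(P_e)+P_e\log(M-1)$ with $H(\theta)=\log M$ and $I(\theta;\hat\theta)\le I(\theta;X)$. You instead apply data processing once, to the success indicator, comparing the joint law with $\mcP\otimes Q$ for an arbitrary $Q$.

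This buys you two things. First, you work directly with the paper's variational definition $I(\mcP;\Dhp)=\inf_Q\int D_{KL}(\pi_\theta\|Q)\,\mcP(d\theta)$. Because your bound holds for every $Q$, the infimum is free, and you never need entropies of a possibly continuous $X$; your mixture identity is only a sanity check. Second, the argument generalizes verbatim. Replacing $1/M$ by $\sup_{\theta'}\mcP\{\theta:L(\theta,\theta')=0\}$, which bounds the success probability under the product law, yields the Generalized Fano inequality (Theorem \ref{thm:fano}) and hence Theorem \ref{thm:cont_fano}, both of which the paper also states without proof. This is essentially the $f$-divergence argument of Chen et al.

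Nothing is lost in sharpness either. If you keep the term you dropped, $d_{kl}(1-P_e\|1/M)=\log M-h(P_e)-P_e\log(M-1)$, so you recover exactly $H(\theta\mid X)\le h(P_e)+P_e\log(M-1)$. What the entropy route keeps that yours does not is, for non-uniform priors, the Shannon entropy $H(\theta)$. Your argument produces the possibly much smaller $\log(1/\max_j\mcP(j))$ in its place; for the uniform prior here the two coincide.

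Two small details. The product-law success probability is exactly $1/M$ only when $\hat\theta$ takes values in $[M]$. Otherwise it is at most $1/M$, which only helps since $d_{kl}(p\|q)\ge p\log(1/q)-\log 2$. You also need $M\ge 2$ to divide by $\log M$.
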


While we will use Fano's inequality for our proof in Appendix \ref{sec:proofs_of_lbs}, we give other relevant results that can be useful for proving lower bounds in our setting.
In the technical note \citet{cont_fano_duchi_13} extend the Fano's inequality to the continuous case where we are interested in estimating a continuous parameter $\theta$ upto an accuracy of $\epsilon$. That is, they consider the Bayes' risk of the 0-1 loss function $\id[|\theta - \hat{\theta}| \ge \epsilon]$. This is given as the continuum Fano's inequality below.
\begin{theorem}[Continuum Fano's inequality \citep{cont_fano_duchi_13}]\label{thm:cont_fano}
    Consider an estimation problem where we want to estimate $\theta\in\Dhp$ corresponding to the family of distributions $\{\pi_\theta\}$ from a sample $X\sim\pi_\theta$. Assume a 0-1 loss function given by $\id[|\theta - \hat{\theta}| \ge \epsilon]$. Further assume a prior $\mcP$ to be uniform over the set $\Dhp$, then the Bayes' risk of estimation is lower bounded as follows:
    \begin{align*}
        R_{Bayes}(\hat{\theta};\mcP) = \E_{\theta\sim\mcP}[\E_{X\sim\pi_\theta}[\id[|\theta - \hat{\theta}| \ge \epsilon]]] \ge 1 + \frac{I(\mcP;\Dhp) + \log 2}{\log(\sup_{\theta'\in\Dhp} \mcP\{\theta\in\Dhp:\|\theta-\theta'\|\le \epsilon\})}.
    \end{align*}
\end{theorem}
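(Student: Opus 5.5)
The plan is to derive the bound from the data-processing inequality for KL divergence, applied to a single binary event, mirroring the standard proof of Fano's inequality (Theorem \ref{thm:fano_ineq}). Fix any estimator $\hat{\theta}$ and any distribution $Q$ on $\mcX$. Consider two joint laws of $(\theta,X)$ on $\Dhp\times\mcX$:
\begin{itemize}
\item $\mathbb{J}$, with $\theta\sim\mcP$ and $X\mid\theta\sim\pi_\theta$;
\item $\mathbb{Q}=\mcP\otimes Q$, under which $X$ is drawn independently of $\theta$.
\end{itemize}
By the chain rule, $D_{KL}(\mathbb{J}\|\mathbb{Q})=\int_\Dhp D_{KL}(\pi_\theta\|Q)\mcP(d\theta)$. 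Taking the infimum over $Q$ at the end recovers $I(\mcP;\Dhp)$ exactly as defined above.

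\textbf{Step 1: the test event.} Define the success event $S=\{(\theta,x):|\theta-\hat{\theta}(x)|<\epsilon\}$. Under $\mathbb{J}$ its probability is $1-R_{Bayes}(\hat{\theta};\mcP)$.

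\textbf{Step 2: success under the product law.} Under $\mathbb{Q}$, $X$ and hence $\hat{\theta}(X)$ are independent of $\theta$. Conditioning on $X$ gives
\begin{align*}
\mathbb{Q}(S)=\E_{X\sim Q}\left[\mcP\{\theta:|\theta-\hat{\theta}(X)|<\epsilon\}\right]\le \sup_{\theta'\in\Dhp}\mcP\{\theta\in\Dhp:\|\theta-\theta'\|\le\epsilon\}=:p_{\max}.
\end{align*}
If $\hat{\theta}(X)$ can fall outside $\Dhp$, I will either project it onto $\Dhp$ or enlarge the sup to all centres. The open ball is contained in the closed one, so the sup bound holds regardless.

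\textbf{Step 3: data processing.} Pushing $\mathbb{J}$ and $\mathbb{Q}$ through the indicator of $S$ gives
\begin{align*}
D_{KL}(\mathbb{J}\|\mathbb{Q})\ge d\big(\mathbb{J}(S)\,\|\,\mathbb{Q}(S)\big),
\end{align*}
where $d(a\|b)=a\log\frac{a}{b}+(1-a)\log\frac{1-a}{1-b}$ is the binary KL divergence. Next I use the elementary bound $d(a\|b)\ge a\log\frac1b-\log 2$. This follows because the binary entropy $-a\log a-(1-a)\log(1-a)$ is at most $\log 2$, and the term $(1-a)\log\frac{1}{1-b}$ is nonnegative. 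Since $d(a\|b)$ is decreasing in $b$ for $b\le a$, and the final bound is trivial when $a\le b$, I can replace $\mathbb{Q}(S)$ by $p_{\max}$. This gives
\begin{align*}
\int_\Dhp D_{KL}(\pi_\theta\|Q)\mcP(d\theta)\ge (1-R_{Bayes})\log(1/p_{\max})-\log 2.
\end{align*}

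\textbf{Step 4: optimize and rearrange.} Take the infimum over $Q$ to replace the left side by $I(\mcP;\Dhp)$. Then divide by $\log(1/p_{\max})=-\log p_{\max}$, which is positive when $p_{\max}<1$; when $p_{\max}=1$ the bound is vacuous. This yields
\begin{align*}
R_{Bayes}\ge 1-\frac{I(\mcP;\Dhp)+\log 2}{\log(1/p_{\max})}=1+\frac{I(\mcP;\Dhp)+\log2}{\log p_{\max}},
\end{align*}
which is the stated inequality.

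\textbf{Main obstacle.} I expect the delicate part to be Step 3 rather than the algebra. One issue is handling the monotonicity of $d(\cdot\|\cdot)$ correctly when substituting $p_{\max}$, together with the trivial case $\mathbb{J}(S)\le p_{\max}$. The other is the measure-theoretic care needed to apply data processing to a possibly randomized or non-measurable $\hat{\theta}$: I would restrict to measurable estimators, and treat randomized ones by enlarging $\mcX$ with independent randomness, which leaves the mutual information unchanged. The uniformity assumption on $\mcP$ is not actually needed in this argument; it only makes $p_{\max}$ a normalized ball volume.
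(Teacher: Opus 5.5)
The paper gives no proof of this statement; it is imported from Duchi and Wainwright. Your argument is the standard proof and it is correct. It applies data processing for $D_{KL}$ to the indicator of the success event $S$, then uses $d(a\|b)\ge a\log(1/b)-\log 2$, then takes the infimum over $Q$. As you observe, it never uses uniformity of $\mcP$. Replacing the ball by $\{\theta: L(\theta,\theta')=0\}$ gives exactly the general-prior version stated as Theorem~\ref{thm:fano}.

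A small simplification in Step 3: apply the elementary bound first, then use that $a\log(1/b)-\log 2$ is nonincreasing in $b$. That lets you substitute $\mathbb{Q}(S)\le p_{\max}$ directly, with no case analysis. Your case split as written does not literally cover $\mathbb{Q}(S)\le a<p_{\max}$. That case is harmless, because the target inequality holds trivially whenever $a\le p_{\max}$ (since $p\log(1/p)\le 1/e<\log 2$).

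The one point to correct is your treatment of estimators with $\hat{\theta}(X)\notin\Dhp$. Projecting onto $\Dhp$ does not preserve the risk at radius $\epsilon$. If $\tilde{\theta}$ is a nearest point of $\Dhp$ to $\hat{\theta}$, the triangle inequality only gives $\|\theta-\tilde{\theta}\|\le 2\|\theta-\hat{\theta}\|$ for $\theta\in\Dhp$. So you would get the bound with $2\epsilon$ in place of $\epsilon$ inside $p_{\max}$.

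In fact, with the supremum restricted to $\theta'\in\Dhp$, the inequality is false for such estimators. Take $\Dhp=\{\pm e_1,\ldots,\pm e_m\}\subset\R^m$ with the uniform prior, $\pi_\theta$ not depending on $\theta$ (so $I(\mcP;\Dhp)=0$), and $\epsilon\in(1,\sqrt{2})$. Then $p_{\max}=1/(2m)$, and the bound asserts a risk of at least $1-\log 2/\log(2m)>0$ for $m\ge 2$. Yet $\hat{\theta}\equiv 0$ has risk $0$.

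So you should state explicitly that $\hat{\theta}$ takes values in $\Dhp$, which the statement leaves implicit. Alternatively, take the supremum over all possible values of $\hat{\theta}$. Drop the projection option.
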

Note that the denominator of the second term above is a log-probability and is hence negative. This generalizes the denominator of the Fano's bound (Theorem \ref{thm:fano_ineq}) which by a change of sign of the second term is $\log(1/M)$, corresponding to a uniform prior over the $M$ hypotheses.

In the current paper we discuss bounds over non-uniform priors. The following result from \citet{bayes_risk_chen_16} which they call the Generalized Fano's inequality is useful in such cases.
\begin{theorem}[Generalized Fano's inequality \citep{bayes_risk_chen_16}]\label{thm:fano}
    Consider an estimation problem where we want to estimate $\theta\in\Dhp$ corresponding to the family of distributions $\{\pi_\theta\}$ from a sample $X\sim\pi_\theta$. Assume a 0-1 loss function given by $L(\theta,\hat{\theta})$. Further assume a prior $\mcP$ over the set $\Dhp$, then the Bayes' risk of estimation is lower bounded as follows:
    \begin{align*}
        R_{Bayes}(\hat{\theta};\mcP) = \E_{\theta\sim\mcP}[\E_{X\sim\pi_\theta}[L(\theta, \hat{\theta})]] \ge 1 + \frac{I(\mcP;\Dhp) + \log 2}{\log(\sup_{\theta'\in\Dhp} \mcP\{\theta\in\Dhp:L(\theta,\theta') = 0\})}.
    \end{align*}
\end{theorem}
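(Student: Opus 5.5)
The plan is the standard data-processing argument: compare the true joint law of $(\theta,X)$ with a decoupled law in which $X$ is independent of $\theta$. Fix an arbitrary estimator $\hat{\theta}$ (possibly randomized, via independent auxiliary randomness) and an arbitrary distribution $Q$ on $\mcX$. Let $P_{\theta X}$ denote the joint law $\theta\sim\mcP$, $X\sim\pi_\theta$. Let $\mcP\otimes Q$ denote the product law in which $\theta\sim\mcP$ and $X\sim Q$ are independent. By the chain rule,
\begin{align*}
    D_{KL}(P_{\theta X}\,\|\,\mcP\otimes Q) = \int_\Dhp D_{KL}(\pi_\theta\|Q)\,\mcP(d\theta).
\end{align*}
Taking the infimum over $Q$ at the end will produce $I(\mcP;\Dhp)$.

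Next I will apply the data-processing inequality to the map $(\theta,X)\mapsto \id[L(\theta,\hat{\theta}(X))=0]$, the indicator of a correct decision. This shows that $D_{KL}(P_{\theta X}\|\mcP\otimes Q)\ge d(a\|q)$, where $d(\cdot\|\cdot)$ is the binary KL divergence. Here $a = 1-R_{Bayes}(\hat{\theta};\mcP)$ is the success probability under the joint law, since $L$ is $0$-$1$, and $q$ is the success probability under the product law. Under the product law, $\hat{\theta}(X)$ is independent of $\theta$. Conditioning on $\hat{\theta}(X)=\theta'$ therefore gives
\begin{align*}
    q = \E_{\theta'}\left[\mcP\{\theta:L(\theta,\theta')=0\}\right]\le p := \sup_{\theta'\in\Dhp}\mcP\{\theta\in\Dhp:L(\theta,\theta')=0\}.
\end{align*}

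The key elementary step is a lower bound on the binary KL divergence that is uniform in $a$:
\begin{align*}
    d(a\|q) = a\log\frac{a}{q}+(1-a)\log\frac{1-a}{1-q} \ge a\log\frac{1}{q} - \log 2 \ge a\log\frac1p - \log 2.
\end{align*}
It uses two facts. First, the binary entropy satisfies $-a\log a-(1-a)\log(1-a)\le\log 2$. Second, $(1-a)\log\frac{1}{1-q}\ge 0$; the final inequality then follows from $q\le p$. Combining this with the previous paragraph gives $\int D_{KL}(\pi_\theta\|Q)\,\mcP(d\theta) + \log 2\ge (1-R_{Bayes})\log(1/p)$ for every $Q$. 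Taking the infimum over $Q$ and dividing by $\log(1/p)=-\log p>0$ then yields
\begin{align*}
    R_{Bayes}(\hat{\theta};\mcP)\ge 1 - \frac{I(\mcP;\Dhp)+\log 2}{\log(1/p)} = 1 + \frac{I(\mcP;\Dhp)+\log 2}{\log p},
\end{align*}
which is the claimed inequality.

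I expect the only delicate points to be measure-theoretic. The first is justifying the chain rule and data processing for general, possibly continuous, $\Dhp$ and $\mcX$; this is standard once KL is written via densities with respect to $\mcP\otimes Q$. The second is the degenerate cases. If $p=1$ the bound is vacuous and there is nothing to prove. If $p=0$, then $q=0$, so any positive success probability gives infinite KL, and the bound reads $R\ge 1$ consistently. Everything else is a short calculation.
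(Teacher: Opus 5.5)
Your proof is correct. The paper gives no proof of this statement and only quotes it from \citet{bayes_risk_chen_16}; your argument is essentially the standard one from that source. It pushes the joint law of $(\theta,X)$ and the product law $\mcP\otimes Q$ through the correct-decision indicator, uses that the product-law success probability is at most the no-data value $p=\sup_{\theta'}\mcP\{\theta:L(\theta,\theta')=0\}$, lower-bounds the binary KL by $a\log(1/p)-\log 2$, and takes the infimum over $Q$.

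One small correction concerns your degenerate-case remark. If $p=0$ and $I(\mcP;\Dhp)=\infty$, the right-hand side is the indeterminate form $\infty/(-\infty)$, and $R\ge 1$ can genuinely fail. For example, take $\theta\sim\mathrm{Unif}[0,1]$, $X=\theta$, exact-match loss, and $\hat{\theta}(X)=X$; this gives $R=0$. So your conclusion that $R=1$ when $p=0$ requires $I(\mcP;\Dhp)<\infty$.
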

It is easy to see how the statement of the Generalized Fano's inequality (Theorem \ref{thm:fano}) generalizes that of the Continuum Fano's inequality (Theorem \ref{thm:cont_fano}) by allowing for a more general prior and loss function (still binary) keeping other terms the same.

\subsection{Proofs of Main Lower Bounds}\label{sec:proofs_of_lbs}
In this Section, we give detailed proofs of our main lower bounds. We first show a dimension-independent $\Omega(\sqrt{k/n})$  lower bound on the Bayes risk for a $k$-sparse prior for the total variation distance. We will require the following result in our analysis.

\begin{lemma}[Varshamov-Gilbert bound \citep{tsybakov2009nonparam}]\label{lem:varshamov-gilbert}
    Let $m\ge 8$. There exists a sequence of binary vectors $b_0,\ldots,b_M \in \{0,1\}^m $ such that $b_0=(0,\ldots, 0)$, $\|b_i-b_j\|_1\ge m/8$ for $0\le i<j\le M$ and $M\ge 2^{m/8}$.
\end{lemma}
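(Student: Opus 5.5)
The proof is a greedy packing (Gilbert) argument in the Hamming cube $\{0,1\}^m$. On binary vectors the $l_1$ distance is the Hamming distance $d_H$. I will build the family greedily, starting from $b_0=(0,\ldots,0)$. At each step, add any vector whose distance to every previously chosen vector is at least $m/8$. Stop when no such vector exists. Call the final family $b_0,\ldots,b_M$. It contains $b_0$ and is pairwise $m/8$-separated by construction, so only the bound $M\ge 2^{m/8}$ remains.

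\textbf{Covering step.} By maximality, every $x\in\{0,1\}^m$ satisfies $d_H(x,b_i)<m/8$ for some $i$. Hence the Hamming balls $\{x: d_H(x,b_i)<m/8\}$ for $i=0,\ldots,M$ cover the cube. Each such ball has cardinality at most $V:=\sum_{j\le m/8}\binom{m}{j}$, so
\begin{align*}
(M+1)\,V\ge 2^m .
\end{align*}

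\textbf{Volume bound.} I bound $V/2^m = Pr(\mathrm{Bin}(m,1/2)\le m/8)$ with Hoeffding's inequality, since $m/8$ lies $3m/8$ below the mean $m/2$. This gives
\begin{align*}
V/2^m\le \exp(-2m(3/8)^2)=\exp(-9m/32).
\end{align*}
Therefore $M+1\ge e^{9m/32}=2^{(9/(32\ln 2))m}\ge 2^{0.4m}$. An alternative route is the entropy bound $V\le 2^{mH(1/8)}$ with $H(1/8)\approx 0.544$, which gives $M+1\ge 2^{0.456m}$.

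\textbf{Final comparison.} It remains to check $2^{0.4m}-1\ge 2^{m/8}$ for all $m\ge 8$. Factor it as $2^{m/8}(2^{0.275m}-1)\ge 2^{m/8}$, which holds once $2^{0.275m}\ge 2$, i.e.\ $m\ge 4$. This is the only place the hypothesis $m\ge 8$ is used, and it leaves room to spare.

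\textbf{Main obstacle.} The only delicate point is this bookkeeping: handling a non-integer $m/8$ (the open ball of radius $m/8$ contains only weights $\le \lceil m/8\rceil-1\le m/8$, so the bound on $V$ still applies), and keeping the constants valid down to $m=8$ rather than only asymptotically.
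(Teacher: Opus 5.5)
Your proof is correct, and it is the standard greedy maximal-packing plus Hoeffding argument (tail bound $\exp(-9m/32)$ for $\mathrm{Bin}(m,1/2)\le m/8$) from Lemma 2.9 of Tsybakov; the paper only cites that source and gives no proof of its own. One wording fix: $2^{0.4m}-1$ does not \emph{factor} as $2^{m/8}(2^{0.275m}-1)$, but it is bounded below by it because $2^{m/8}\ge 1$, and that lower bound is all your final comparison needs.
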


We are now ready to prove our result.

\begin{theorem}[Restated Theorem  \ref{thm:tv_lb}]\label{thm:tv_lb_app}
    Let $\D$ be the set of all distributions over $\mcX\subseteq\R^d$, and let $\mfP$ be shorthand for $\mfP(k,\rho_{TV},\D)$,the set of all $k$-sparse priors over $\D$ using the total variation distance. If $|\mcX|\ge k$, the Bayes' risk for any distribution estimator $\hat{\pi}_n$ is lower bounded independent of $d$ as below:
    \begin{align*}
        \inf_{\hat{\pi}}\sup_{\mcP\in \mfP} R_{Bayes}(\hat{\pi};\mcP) = \inf_{\hat{\pi}}\sup_{\mcP\in \mfP} \E_{\pi\sim\mcP}\left[\E_{\mathbf{X}\sim \pi^{\otimes n}}[\rho_{TV}(\hat{\pi}(\mathbf{X}), \pi)]\right] = \Omega(\sqrt{k/n}).
    \end{align*}
\end{theorem}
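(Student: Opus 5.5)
We argue via a single hard prior. Since the left-hand side is an $\inf_{\hat\pi}\sup_{\mcP\in\mfP}$, it suffices to exhibit one $\mcP\in\mfP$ for which every estimator has Bayes' risk $\Omega(\sqrt{k/n})$.

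\textbf{Construction.} Set $m=\lfloor k\rfloor$ (assume $m\ge 16$ and even; small $k$ is absorbed into constants). Choose distinct points $z_1,\ldots,z_m\in\mcX$, which exist since $|\mcX|\ge k$. Pair them up into $m/2$ pairs. Apply Lemma \ref{lem:varshamov-gilbert} with length $m/2$ to obtain binary vectors $b_0,\ldots,b_M\in\{0,1\}^{m/2}$ with pairwise Hamming distance at least $m/16$ and $M\ge 2^{m/16}$. For each $b_j$ define the discrete distribution $\pi_j$ that places mass $(1+\delta(2b_{j,l}-1))/m$ on the first point of pair $l$ and mass $(1-\delta(2b_{j,l}-1))/m$ on the second point, where $\delta\in(0,1/2]$ will be chosen later. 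Let $\mcP$ be uniform on $\{\pi_0,\ldots,\pi_M\}$.

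Every $\pi_j$ lies in the family of discrete distributions on $m$ atoms. By Equation \ref{eq:disc_dist_tv}, TV on this family is half the $l_1$ norm on the simplex, so Corollary \ref{cor:d_sparse} shows $\mcP$ is $m$-sparse, hence $k$-sparse, and $\mcP\in\mfP$.

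\textbf{Separation and KL.} By Equation \ref{eq:disc_dist_tv}, for $i\ne j$,
\[
\rho_{TV}(\pi_i,\pi_j)=\tfrac12\cdot 2\cdot\tfrac{2\delta}{m}\cdot\|b_i-b_j\|_1\ge \tfrac{\delta}{8}=:2s.
\]
For the KL bound, all masses lie in $[1/(2m),3/(2m)]$, so the $\chi^2$ bound on KL gives
\[
D_{KL}(\pi_i\|\pi_j)\le\sum_z \frac{(\pi_i(z)-\pi_j(z))^2}{\pi_j(z)}\le m\cdot\frac{(2\delta/m)^2}{1/(2m)}=8\delta^2 .
\]
For $n$ samples the KL tensorizes to $8n\delta^2$. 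Taking $Q=\pi_0^{\otimes n}$ in the definition of mutual information gives $I\le 8n\delta^2$.

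\textbf{Reduction to testing.} Given any $\hat\pi$, let $\hat\theta=\argmin_j\rho_{TV}(\hat\pi,\pi_j)$. If $\rho_{TV}(\hat\pi,\pi_\theta)<s$, the triangle inequality together with the $2s$-separation forces $\hat\theta=\theta$. Hence
\[
\E\,\rho_{TV}(\hat\pi,\pi_\theta)\ge s\,\Pr(\hat\theta\ne\theta).
\]
Fano's inequality (Theorem \ref{thm:fano_ineq}) with $M+1$ hypotheses gives
\[
\Pr(\hat\theta\neq\theta)\ge 1-\frac{8n\delta^2+\log 2}{(m/16)\log 2}.
\]
Choose $\delta=c\sqrt{m/n}$ with $c$ small (say $c^2=\log 2/512$), which makes $8n\delta^2\le (m/64)\log 2$. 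With $m\ge 64$ this bounds the error probability below by a constant, e.g. $1/2$. The condition $\delta\le 1/2$ requires $n\gtrsim m$; for $n\lesssim k$ the claimed rate is $\Omega(1)$, and we obtain it by taking $\delta=1/2$ in the same argument. Altogether the Bayes' risk is at least $s/2=\delta/32=\Omega(\sqrt{k/n})$, with no dependence on $d$.

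\textbf{Main obstacle.} The sparsity membership is immediate from the earlier corollaries, so the delicate step is the simultaneous calibration of three quantities. The Varshamov–Gilbert packing must give $\log M\asymp k$. The TV separation must be $\asymp\delta$ and not diluted by the $1/m$ masses. The KL must be $\asymp\delta^2$ per sample, independent of $m$. The pairing construction is what keeps every mass bounded away from zero at scale $1/m$; this gives $\chi^2\asymp\delta^2$ and so balances these three scalings.
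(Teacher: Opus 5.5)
Your proposal is correct and follows essentially the same route as the paper: paired $\pm\delta$ perturbations of the uniform distribution on about $\lfloor k\rfloor$ atoms, a Varshamov--Gilbert subfamily of size at least $2^{k/16}$, a uniform prior shown to be $k$-sparse via Corollary~\ref{cor:d_sparse}, an $O(n\delta^2)$ mutual-information bound, Fano's inequality, and $\delta\asymp\sqrt{k/n}$. The differences are minor or in your favor: you bound KL by $\chi^2$ (constant $8$ instead of the paper's $3$), you make the estimation-to-testing step rigorous by projecting to the nearest hypothesis at half the separation (the paper's step $\E[\|\hat\theta-\theta\|_1/2]\ge \Pr(\hat\theta\neq\theta)\,\delta/8$ tacitly assumes $\hat\pi$ lies in the hypothesis set), and you handle the regime $n\lesssim k$, which the paper excludes by assuming $n=\Omega(k)$.
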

\begin{proof}
    Let $k\ge2$ and  pick $k' = \lfloor k \rfloor$ or $k' = \lfloor k \rfloor - 1$ to be an even number. Let $z_1,\ldots, z_{k'} \in \mcX$ be unequal elements in $\mcX$. Consider the set of binary vectors of length $k'/2$, $S_1 = \{-1,1\}^{k'/2} $, and for some $\delta\in(0,1/2)$, define the set of discrete probabilities $\{\theta\in[0,1]^{k'}: \theta_i = \frac{1+b_i\delta}{k'}, \theta_{k'+1-i} = \frac{1-b_i\delta}{k'}, \forall i\le k'/2, b\in S_1 \} $. Note by construction that this set contains $2^{k'/2}$ probability vectors, each of which sum to $1$ and differ in at least $2$ coordinates. In fact, we can use Lemma \ref{lem:varshamov-gilbert} for large enough  $k'$ to show that there exists a set $S_2\subseteq S_1$ of size $|S_2|\ge 2^{k'/16}$ such that any two binary vectors inside $S_2$ differ in at least $k'/16$ coordinates. Thus, define  $\Dhp=\{\theta\in[0,1]^{k'}: \theta_i = \frac{1+b_i\delta}{k'}, \theta_{k'+1-i} = \frac{1-b_i\delta}{k'}, \forall i\le k'/2, b\in S_2 \} $, as the set of $\ge2^{k'/16}$ probability vectors, each of which sum to $1$ and differ in at least $k'/8$ coordinates. Using Equation \ref{eq:disc_dist_tv}, we obtain that for $\theta,\theta'\in\Dhp$, $\rho_{TV}(\pi_\theta, \pi_{\theta'})\ge (1/2)\cdot (2\delta/k')\cdot(k'/8) = \delta/8 $. 
    
    Let $\D$ be the set of discrete distributions with probability vectors that belong to $\Dhp$, and let $\mcP$ to be the uniform prior over $\Dhp$, which is $k$-sparse by Corollary \ref{cor:d_sparse}. We simplify the desired Bayes' risk bound as follows:
    \begin{align*}
        \E_{\pi\sim\mcP}\left[\E_{\mathbf{X}\sim \pi^{\otimes n}}[\rho_{TV}(\hat{\pi}(\mathbf{X}), \pi)]\right] &= \sum_\theta \mcP(\theta)\E_{X_1,\ldots,X_n\sim \pi_\theta}[\|\hat{\theta}-\theta\|_1/2] \ge \sum_\theta \mcP(\theta) Pr(\hat{\theta}\neq \theta)\cdot \delta/8.
    \end{align*}

    Using Fano's inequality (Theorem \ref{thm:fano_ineq}),
    \begin{align*}
        Pr(\hat{\theta}\neq \theta) \ge 1-\frac{I(\mcP;\Dhp) + \log 2}{\log(2^{k'/16})}.
    \end{align*}

    In order to bound the mutual information, we first bound the KL divergence between any two elements of $\D$:
    \begin{align*}
        D_{KL}(\pi_\theta\| \pi_{\theta'}) &= \sum\theta_i\log(\theta_i/\theta'_i)= \sum_{i,\theta_i\neq \theta'_i}\theta_i\log(\theta_i/\theta'_i)  = \sum_{i, \theta_i = \frac{1+\delta}{k'}}\theta_i\log(\theta_i/\theta'_i) + \sum_{i, \theta_i = \frac{1-\delta}{k'}}\theta_i\log(\theta_i/\theta'_i)\\
        &= \sum_{i, \theta_i = \frac{1+\delta}{k'}}\theta_i\log((1+\delta)/(1-\delta)) + \sum_{i, \theta_i = \frac{1-\delta}{k'}}\theta_i\log((1-\delta)/(1+\delta)).
    \end{align*}
    Now, the number of times $\theta_i>\theta'_i$ is equal to the number of times $\theta_i<\theta'_i$ since the sum over all $\theta_i$ and $\theta_i'$ is constant. Further, for $\delta\le 1/2$, $\log((1+\delta)/(1-\delta)) \le 3\delta$.
    Thus, 
    \begin{align*}
        D_{KL}(\pi_\theta\| \pi_{\theta'})&= \sum_{i, \theta_i = \frac{1+\delta}{k'}}(\delta/k')\log((1+\delta)/(1-\delta)) + \sum_{i, \theta_i = \frac{1-\delta}{k'}}(-\delta/k')\log((1-\delta)/(1+\delta))\\
        &\le \frac{1}{k'}(\sum_{i, \theta_i = \frac{1+\delta}{k'}}3\delta^2 + \sum_{i, \theta_i = \frac{1-\delta}{k'}}3\delta^2)\le 3\delta^2.
    \end{align*}

    This allows the following mutual information bound, for some $\theta_0\in\Dhp$:
    \begin{align*}
        I(\mcP;\Dhp) &= \inf_{Q \text{ over } \mcX^{\otimes n} }\sum D_{KL}(\pi_\theta^{\otimes n}\| Q )\mcP(\theta) \le \sum D_{KL}(\pi_\theta^{\otimes n}\| \pi_{\theta_0}^{\otimes n} )\mcP(\theta)\\
        &\le n\sum D_{KL}(\pi_\theta\| \pi_{\theta_0})\mcP(\theta) \le 3n\delta^2.
    \end{align*}

    Substituting this into our desired bound gives:
    \begin{align*}
        \E_{\pi\sim\mcP}\left[\E_{\mathbf{X}\sim \pi^{\otimes n}}[\rho_{TV}(\hat{\pi}(\mathbf{X}), \pi)]\right] &\ge (\delta/8)\sum_\theta \mcP(\theta) Pr(\hat{\theta}\neq \theta) \ge (\delta/8)\sum_\theta \mcP(\theta) \left(1-\frac{3n\delta^2+\log 2}{(k'/16)\log 2}\right)\\
        &= (\delta/8)\left(1-\frac{3n\delta^2+\log 2}{(k'/16)\log 2}\right).
    \end{align*}

    Now, since the choice of $\delta$ is arbitrary, for large enough $k'$ and $n = \Omega(k)$, we pick $\delta = \sqrt{\frac{((k'/32) - 1)\log 2}{3n}} $ to obtain our desired bound of $\Omega(\sqrt{k'/n}) = \Omega(\sqrt{k/n})$.
\end{proof}

\section{Nonparametric Bayesian Upper Bounds}
We discuss Bayesian distribution estimation in this paper using nonparametric priors. We derive our upper bounds using the theory of posterior convergence rates from \citet{nonparam_bayes_vdV2000}. We state important results from nonparametric Bayesian convergence rates here, but first give a brief background on covering numbers.

\subsection{Background on Covering Numbers}\label{sec:covering}
Covering numbers and packing numbers are important tools used in learning theoretic upper bounds \citep{ML_theory_Shwartz_14, nonparam_bayes_vdV2000}. We recall the definition of covering number from Section \ref{sec:sparse_dim}.

\begin{definition}[Covering number]
    For a set $\D$ with a distance (pseudo-)metric $\rho$, consider a set of points $\A$ such that for any element $\mu\in\D$, $\exists \nu \in \A$ such that $\rho(\mu,\nu)\le r$. Then $\A$ is called an $r$-cover of $\D$. We define $N(r,\D)$, the $r$-covering number of $\D$, as the cardinality of the smallest $r$-cover of $\D$. Thus, if the $r$-cover $\A$ is the smallest possible, $|\A| = N(r,\D)$.
\end{definition}

An analogous notion to the covering number is the packing number.

\begin{definition}[Packing number]
    For a set $\D$ with a distance (pseudo-)metric $\rho$, consider a set of points $\A\subseteq \D$ such that $\nexists\; \nu_1\neq\nu_2\in\A$ for which $\rho(\nu_1,\nu_2)\le r$. Then $\A$ is called an $r$-packing of $\D$. We define $D(r,\D)$, the $r$-packing number of $\D$, as the cardinality of the biggest $r$-packing of $\D$. Thus, if the set $\A$ is the largest possible, $|\A| = D(r,\D)$.
\end{definition}

We state the following Lemma, without proof which is commonly used to relate covering numbers with packing numbers.

\begin{lemma}[\citep{nonparam_bayes_vdV2000}]\label{lem:cover_packing}
    For a set $\D$ with a distance (pseudo-)metric $\rho$ and for any $r>0$,
    \begin{align*}
        N(r,\D)\le D(r,\D)\le N(r/2,\D).
    \end{align*}
\end{lemma}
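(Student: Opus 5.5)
The inequality $N(r,\D)\le D(r,\D)\le N(r/2,\D)$ splits into two halves. The left one is a maximality argument. The right one is a pigeonhole argument. In both, only the triangle inequality and symmetry of $\rho$ are used, so pseudo-metrics cause no difficulty. If $D(r,\D)=\infty$ the left inequality is trivial, and if $N(r/2,\D)=\infty$ the right one is. So I may assume the relevant quantity is finite in each half.

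\textbf{Step 1 ($N(r,\D)\le D(r,\D)$).} Let $\A\subseteq\D$ be an $r$-packing of maximal cardinality $D(r,\D)$, which exists when this number is finite. I claim $\A$ is an $r$-cover. Suppose some $\mu\in\D$ had $\rho(\mu,\nu)>r$ for every $\nu\in\A$. Then $\mu\notin\A$, because $\rho(\mu,\mu)=0\le r$. Also, no pair of distinct points in $\A\cup\{\mu\}$ is within distance $r$, so $\A\cup\{\mu\}$ is a strictly larger $r$-packing. This contradicts maximality. Hence every $\mu\in\D$ is within $r$ of some point of $\A$, and $N(r,\D)\le|\A|=D(r,\D)$.

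\textbf{Step 2 ($D(r,\D)\le N(r/2,\D)$).} Fix a minimal $r/2$-cover $\mathbb{C}$ with $|\mathbb{C}|=N(r/2,\D)$, and let $\A$ be any $r$-packing. Map each $\nu\in\A$ to some $c(\nu)\in\mathbb{C}$ with $\rho(\nu,c(\nu))\le r/2$. Suppose two distinct points $\nu_1\neq\nu_2$ in $\A$ had the same image $c$. The triangle inequality and symmetry would give $\rho(\nu_1,\nu_2)\le\rho(\nu_1,c)+\rho(c,\nu_2)\le r$, contradicting the packing property. So the map is injective and $|\A|\le|\mathbb{C}|$. Taking the supremum over packings gives $D(r,\D)\le N(r/2,\D)$.

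\textbf{Main obstacle.} The only subtle point is in Step 1. One must make sure a maximal packing exists, and the fix is to apply Zorn's lemma, or simply to work with finite packings when $D$ is finite. One must also check that the strict-versus-non-strict inequality conventions in the paper's definitions line up. They do: the packing forbids distances $\le r$, while the cover allows distance $\le r$, which is exactly what both steps use.
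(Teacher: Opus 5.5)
Your proof is correct and complete. Step 1 (a maximum-cardinality $r$-packing is automatically an $r$-cover) and Step 2 (sending each point of an $r$-packing to a center of an $r/2$-cover is injective by the triangle inequality) both handle the infinite cases and the pseudo-metric setting. Step 2 closes precisely because the paper's packing definition forbids separations $\le r$, and it does not need the cover points to lie in the set, which matches the paper's covering definition. The paper itself states this lemma without proof and defers to the cited reference; your argument is the standard proof behind that citation, so there is nothing to add.
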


We use the above Lemma to show the Proposition below which is useful for our analysis in the paper.

\begin{proposition}\label{prop:cover_d_dim}
    Let $\D \subseteq \R^d$ equipped with an $l_a$ norm for $a\in[1,\infty]$ as the distance metric. For any point $\nu\in\D$ and $0<r<R$, $N(r,B(R,\nu)\cap\D)\le (3R/r)^d$.
\end{proposition}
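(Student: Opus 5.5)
We prove Proposition \ref{prop:cover_d_dim} with the standard volumetric packing argument, passing from packings to covers via Lemma \ref{lem:cover_packing}. Set $S = B(R,\nu)\cap\D$ and let $\A=\{\mu_1,\ldots,\mu_M\}\subseteq S$ be a maximal $r$-packing of $S$, so that $\rho(\mu_i,\mu_j)=\|\mu_i-\mu_j\|_a>r$ for $i\ne j$. By the left inequality of Lemma \ref{lem:cover_packing}, $N(r,S)\le D(r,S)$; equivalently, a maximal packing is automatically an $r$-cover. It therefore suffices to show $M\le (3R/r)^d$.

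\textbf{Step 1: disjoint balls.} The open $l_a$-balls $B^\circ(r/2,\mu_i)$ are pairwise disjoint. If some $x$ lay in two of them, the triangle inequality for the norm would give $\|\mu_i-\mu_j\|_a<r$, which is a contradiction.

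\textbf{Step 2: containment.} Each $\mu_i\in B(R,\nu)$, so every such ball lies in $B(R+r/2,\nu)$. All of these balls live in the ambient space $\R^d$, not inside $\D$, so the subset $\D$ causes no trouble. Lebesgue measure is translation invariant and scales as $\mathrm{vol}(B_a(s,\cdot))=s^d\,\mathrm{vol}(B_a(1,0))$ for every $l_a$ norm, including $a=\infty$, and $\mathrm{vol}(B_a(1,0))$ is finite and positive. Comparing volumes gives
\begin{align*}
M\,(r/2)^d \le (R+r/2)^d \quad\Longrightarrow\quad M\le \left(\frac{2R+r}{r}\right)^d \le \left(\frac{3R}{r}\right)^d,
\end{align*}
where the last step uses $r<R$.

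\textbf{Step 3: maximality.} A maximal packing must be finite, since Step 2 bounds the size of every packing. By maximality, every point of $S$ lies within distance $r$ of some $\mu_i$, so $\A$ is an $r$-cover and $N(r,S)\le M\le(3R/r)^d$.

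The main point of care is the definitional convention. The paper's packing requires $\rho>r$ strictly, while the covering uses $\le r$. These fit together exactly as maximality needs, so the argument goes through. There is no real obstacle beyond checking that the volume scaling holds uniformly for all $a\in[1,\infty]$.
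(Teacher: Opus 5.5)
Your proof is correct and matches the paper's argument: bound the size of a maximal $r$-packing of $B(R,\nu)\cap\D$ by comparing the Lebesgue volume of the disjoint $r/2$-balls with that of the $(R+r/2)$-ball, which gives $(2R/r+1)^d\le(3R/r)^d$, and then pass to covers via $N(r,\cdot)\le D(r,\cdot)$ from Lemma \ref{lem:cover_packing}. Your handling of the strict-inequality packing convention and of the existence and finiteness of a maximal packing fills in details the paper leaves implicit.
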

\begin{proof}
    We first note that in a $d$-dimensional space with any $l_a$ norm, the volume of any $r$-ball,  $V(r)\propto r^d$. Consider the largest $r$ packing of $B(R,\nu)\cap\D$, such that every point in the packing is at a distance at least $r$ from each other. Thus, balls of radius $r/2$ created around every element of the packing must be non-overlapping. If we have $N$ such balls in the largest packing, then the total volume of the balls, $NV(r/2)$, is less than the volume of the $R+r/2$ ball, $V(R+r/2)$, because the balls don't overlap. Thus, $D(r,B(R,\nu)\cap\D)=N\le (R+r/2)^d/(r/2)^d = (2R/r + 1)^d \le (3R/r)^d$. Now using Lemma \ref{lem:cover_packing} completes the proof.
\end{proof}

We use a similar idea to bound the covering number of a $k$-Ahlfors regular set. We first define a $k$-Ahlfors regular set formally.

\begin{definition}[$k$-Ahlfors regular sets]\label{def:ahlfors}
    A set $\D$ equipped with the distance (pseudo-)metric $\rho$ is $k$-Ahlfors regular if there exists a Borel measure $\mathcal{M}$ with support $\D$ and a constant $C\ge 1$ such that for any $\nu\in\D$ and radius $0<r \le \diam(\D)$, $C^{-1}r^k \le \mathcal{M}(B(r,\nu)\cap \D) \le Cr^k $.
\end{definition}

The following Proposition bounds the covering number of a set given that it is Ahlfors regular.

\begin{proposition}\label{prop:ahlfors_cover}
    Consider a $k$-Ahlfors regular set $\D$ equipped with the distance (pseudo-)metric $\rho$ such that the Borel measure $\mathcal{M}$ satisfies the conditions of Definition \ref{def:ahlfors}. For any point $\nu\in\D$ and $0<r<R\le \diam(\D)$, $N(r,B(R,\nu)\cap \D)\le C^2(3R/r)^k$.
\end{proposition}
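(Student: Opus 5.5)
We bound the $r$-covering number of $B(R,\nu)\cap\D$ by a packing argument, in the same spirit as Proposition \ref{prop:cover_d_dim}. The Lebesgue volume is replaced by the Ahlfors measure $\mathcal{M}$, and the two-sided bounds of Definition \ref{def:ahlfors} stand in for the exact scaling $V(r)\propto r^d$. By Lemma \ref{lem:cover_packing}, $N(r,B(R,\nu)\cap\D)\le D(r,B(R,\nu)\cap\D)$, so it suffices to bound the size of a maximal $r$-packing of $B(R,\nu)\cap\D$.

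Let $\{\nu_1,\ldots,\nu_N\}\subseteq B(R,\nu)\cap\D$ be an $r$-packing, so $\rho(\nu_i,\nu_j)>r$ for $i\neq j$. By the triangle inequality, the balls $B(r/2,\nu_i)\cap\D$ are pairwise disjoint: a common point would put $\nu_i$ and $\nu_j$ within distance $r$ of each other. Each such ball also lies inside $B(R+r/2,\nu)\cap\D$. Since the centers $\nu_i$ lie in $\D$ and $r/2<R\le\diam(\D)$, the Ahlfors lower bound gives $\mathcal{M}(B(r/2,\nu_i)\cap\D)\ge C^{-1}(r/2)^k$. Summing over the disjoint balls yields
\begin{align*}
    N C^{-1}(r/2)^k \le \mathcal{M}\big(B(R+r/2,\nu)\cap\D\big).
\end{align*}

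The main obstacle is the upper bound on this right-hand side, because the Ahlfors upper bound $Cr^k$ is only assumed for radii up to $\diam(\D)$, while $R+r/2$ may exceed it. I would split into two cases.

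\emph{Case $R+r/2\le\diam(\D)$.} The upper bound applies directly:
\begin{align*}
    N\le C^2\frac{(R+r/2)^k}{(r/2)^k}=C^2(2R/r+1)^k\le C^2(3R/r)^k,
\end{align*}
using $r<R$ in the last step.

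\emph{Case $R+r/2>\diam(\D)$.} Here $B(R+r/2,\nu)\cap\D=\D=B(\diam(\D),\nu)\cap\D$. The Ahlfors bound at radius $\diam(\D)$ then gives $\mathcal{M}(\D)\le C\diam(\D)^k\le C(R+r/2)^k$, so the same inequality holds.

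In either case $D(r,B(R,\nu)\cap\D)\le C^2(3R/r)^k$. Lemma \ref{lem:cover_packing} then transfers this bound to the covering number, which proves the claim.
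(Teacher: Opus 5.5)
Your proof is correct and is essentially the paper's argument: you bound the size of an $r$-packing by comparing the $\mathcal{M}$-measure of the disjoint $r/2$-balls (via the Ahlfors lower bound) with the measure of $B(R+r/2,\nu)\cap\D$ (via the Ahlfors upper bound), and then pass to covering numbers with Lemma \ref{lem:cover_packing}. Your case split for $R+r/2>\diam(\D)$ also makes the argument more rigorous than the paper's, which applies the bound $C(R+r/2)^k$ directly even though Definition \ref{def:ahlfors} only guarantees the upper bound for radii up to $\diam(\D)$.
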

\begin{proof}
    Consider the largest $r$ packing of $B(R,\nu)\cap \D$, such that every point in the packing is at a distance at least $r$ from each other. Thus, balls of radius $r/2$ created around every element of the packing (each of which is in $\D$) must be non-overlapping. If we have $N$ such balls in the largest packing, then the total $\mathcal{M}$-measure of the balls is always less than the total $\mathcal{M}$-measure  of the $R+r/2$ ball centered at $\nu$, because the balls don't overlap. Thus, $NC^{-1}(r/2)^k \le C(R+r/2)^k \implies N\le C^2(R+r/2)^k/(r/2)^k \le C^2(3R/r)^k$. Since $D(r,B(R,\nu)\cap \D)=N$, using Lemma \ref{lem:cover_packing} completes the proof.
\end{proof}

The following result will be useful for showing Ahlfors regularity of sets.

\begin{lemma}\label{lem:ahlfors_lipschitz}
    Consider (pseudo-)metrics $\rho_1,\rho_2$ such that there exist constants $c_1,c_2>0$ for which for any $\mu_1,\mu_2\in\D$, $c_1\rho_2(\mu_1,\mu_2) \le \rho_1(\mu_1,\mu_2) \le c_2 \rho_2(\mu_1,\mu_2) $. If the set $\D$ is $k$-Ahlfors regular when equipped with the (pseudo-)metric $\rho_1$, it is also $k$-Ahlfors regular when equipped with the (pseudo-)metric $\rho_2$.
\end{lemma}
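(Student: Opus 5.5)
The plan is to keep the same measure $\mathcal{M}$ that witnesses $k$-Ahlfors regularity under $\rho_1$ and check the two-sided ball-mass bound for $\rho_2$-balls directly. The key observation is that balls for the two (pseudo-)metrics are nested with rescaled radii. If $\rho_2(\mu,\nu)\le r$, then $\rho_1(\mu,\nu)\le c_2 r$. If $\rho_1(\mu,\nu)\le c_1 r$, then $\rho_2(\mu,\nu)\le r$. Hence
\begin{align*}
B_{\rho_1}(c_1 r,\nu)\cap\D \subseteq B_{\rho_2}(r,\nu)\cap\D \subseteq B_{\rho_1}(c_2 r,\nu)\cap\D .
\end{align*}
Monotonicity of $\mathcal{M}$ together with the $\rho_1$-regularity bounds then gives
\begin{align*}
C^{-1}c_1^k r^k \le \mathcal{M}(B_{\rho_2}(r,\nu)\cap\D) \le C c_2^k r^k .
\end{align*}
This holds whenever the rescaled radii $c_1 r$ and $c_2 r$ lie in the range $(0,\diam_{\rho_1}(\D)]$ where the $\rho_1$ bounds apply.

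The new constant is $C'=\max\{C c_2^k,\, C c_1^{-k},\, 1\}$. The support of $\mathcal{M}$ is unchanged as a set. The two metrics are bi-Lipschitz equivalent, so they generate the same topology and the same Borel sets. Thus $\mathcal{M}$ is still a Borel measure with support $\D$ under $\rho_2$.

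The main obstacle is the range of radii. The definition requires the bounds for $0<r\le\diam_{\rho_2}(\D)$, but the $\rho_1$ bounds are only available up to $\diam_{\rho_1}(\D)$, and $c_2 r$ may exceed that. We have $\diam_{\rho_1}(\D)\le c_2\diam_{\rho_2}(\D)$. For the upper bound, if $c_2 r>\diam_{\rho_1}(\D)$, I would use $\mathcal{M}(B_{\rho_2}(r,\nu)\cap\D)\le\mathcal{M}(\D)=\mathcal{M}(B_{\rho_1}(\diam_{\rho_1}(\D),\nu)\cap\D)\le C\diam_{\rho_1}(\D)^k\le C c_2^k r^k$. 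For the lower bound, $c_1 r\le c_1\diam_{\rho_2}(\D)\le\diam_{\rho_1}(\D)$, so it always lies in range. I would also remark that if the diameter is infinite or $r$ is degenerate, the same monotonicity argument applies verbatim.
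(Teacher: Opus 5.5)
Your proposal is correct and matches the paper's proof: the paper also keeps the same measure $\mathcal{M}$, sandwiches $B_{\rho_2}(r,\nu)\cap\D$ between the $\rho_1$-balls of radii $c_1 r$ and $c_2 r$, handles the upper-bound range issue via $C\min(c_2 r,\diam_{\rho_1}(\D))^k\le Cc_2^k r^k$ and the lower bound via $c_1\diam_{\rho_2}(\D)\le\diam_{\rho_1}(\D)$, and takes $C'=\max(Cc_1^{-k},Cc_2^k)$. Your extra remarks, that the support is unchanged under the bi-Lipschitz equivalence and that $C'\ge 1$ should be enforced, are harmless refinements the paper leaves implicit.
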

\begin{proof}
    Let $\mathcal{M}$ be the Borel measure guaranteed by Definition \ref{def:ahlfors}, such that there exists a constant $C\ge 1$ such that for any $\nu\in\D$ and radius $0<r \le \diam_{\rho_1}(\D)$, $C^{-1}r^k \le \mathcal{M}(B_{\rho_1}(r,\nu)\cap \D) \le Cr^k $.
    Now, for any $\nu\in\D$ and radius $0<r \le \diam_{\rho_2}(\D)$, we have that $\mathcal{M}(B_{\rho_2}(r,\nu)\cap \D) \le \mathcal{M}(B_{\rho_1}(c_2r,\nu)\cap \D) \le C\min(c_2r,\diam_{\rho_1}(\D))^k \le Cc_2^kr^k$. Now note that  $c_1\diam_{\rho_2}(\D) \le \diam_{\rho_1}(\D)$, since otherwise there exist points $\mu_1,\mu_2\in\D$ for which $c_1\rho_2(\mu_1,\mu_2) > \rho_1(\mu_1,\mu_2)$. Thus, for any $\nu\in\D$ and radius $0<r \le \diam_{\rho_2}(\D)$, we have that $\mathcal{M}(B_{\rho_2}(r,\nu)\cap \D) \ge \mathcal{M}(B_{\rho_1}(c_1r,\nu)\cap \D) \ge C^{-1}(c_1r)^k $. We establish the required Ahlfors regularity for $\rho_2$ by picking a constant $C' = \max(Cc_1^{-k}, Cc_2^k)$.
\end{proof}

The following result from \citet{nonparam_bayes_vdV2000} shows the existence of a test for a distribution $\nu_0$, given a bound on the packing number of a shrinking ball around $\nu_0$. Here we denote a test as a measurable function $\phi:\mcX^{n}\rightarrow[0,1]$.

\begin{lemma}[Theorem 7.1, \citep{nonparam_bayes_vdV2000}]\label{lem:pack_tests}
    Suppose that for a non-increasing function $D(\epsilon)$, some $\epsilon_n\ge0$ and every $\epsilon>\epsilon_n$,
    \begin{align*}
        D(\epsilon/2, \{\nu:\epsilon\le \rho(\nu,\nu_0)\le 2\epsilon\}) \le D(\epsilon),
    \end{align*}
    where $\rho$ is the distance (pseudo-)metric between distributions chosen to be either the Hellinger or the TV distance. Then for every $\epsilon>\epsilon_n$, there exist tests $\phi_n$ (depending on $\epsilon>0$), such that, for a universal constant $C_\mathrm{test}$ and every $j\in\N$,
    \begin{align*}
        \E_{\nu_0^{\otimes n}}[\phi_n] &\le D(\epsilon)\exp(-C_\mathrm{test}n\epsilon^2)\frac{1}{1-\exp(-C_\mathrm{test}n\epsilon^2)}\\
        \sup_{\rho(\nu,\nu_0)>j\epsilon} \E_{\nu^{\otimes n}}[1-\phi_n] &\le \exp(-C_\mathrm{test}n\epsilon^2j^2).
    \end{align*}
\end{lemma}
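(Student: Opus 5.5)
The plan follows the classical Le Cam--Birgé construction. I first build, for each single alternative, a test with exponentially small errors that is uniform over a small ball around that alternative. I then combine finitely many of these tests per ``shell'' around $\nu_0$, using the packing hypothesis, and take the supremum over all shells. The universal constant $C_\mathrm{test}$ is whatever exponent the basic test yields; I track it loosely and take the worse of the two exponents at the end.

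\textbf{Step 1 (basic test).} Fix $\nu_1$ with $\rho(\nu_1,\nu_0)\ge r$. I will show that there is a test $\psi$ with
\begin{align*}
\E_{\nu_0^{\otimes n}}[\psi]\le \exp(-Knr^2), \qquad \sup_{\rho(\nu,\nu_1)\le r/2}\E_{\nu^{\otimes n}}[1-\psi]\le \exp(-Knr^2)
\end{align*}
for a universal $K>0$.

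For the TV distance this is elementary. Take $A=\{\nu_1>\nu_0\}$, so $\nu_1(A)-\nu_0(A)=\rho_{TV}(\nu_1,\nu_0)\ge r$. Any $\nu$ with $\rho_{TV}(\nu,\nu_1)\le r/2$ then has $\nu(A)-\nu_0(A)\ge r/2$. Let $\psi=\id\{\mathbb{P}_n(A)-\nu_0(A)>r/4\}$, where $\mathbb{P}_n$ is the empirical measure. Hoeffding's inequality bounds both errors by $\exp(-nr^2/8)$.

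For the Hellinger distance I would use Le Cam's minimax test between $\nu_0$ and the convex Hellinger ball of radius $r/2$ around $\nu_1$. Its errors are bounded by powers of the Hellinger affinity between $\nu_0$ and that convex set, and the affinity tensorizes to give $\exp(-Knr^2)$. I expect this to be the main obstacle if done from scratch, because it requires the minimax theorem over convex hulls of product measures. I would therefore cite Le Cam / Birgé for this step rather than reprove it.

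\textbf{Step 2 (shells and packing).} For $j\in\N$ let $S_j=\{\nu: j\epsilon\le \rho(\nu,\nu_0)\le 2j\epsilon\}$. Since $j\epsilon\ge\epsilon>\epsilon_n$, the hypothesis gives a maximal $j\epsilon/2$-packing of $S_j$ with at most $D(j\epsilon)\le D(\epsilon)$ points; the inequality holds because $D$ is non-increasing. By maximality this packing is also a $j\epsilon/2$-cover of $S_j$. To each packing point $\nu_{j,l}$ I attach the Step 1 test $\psi_{j,l}$ with $r=j\epsilon$, and I set $\phi_n=\sup_{j,l}\psi_{j,l}$.

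\textbf{Step 3 (error bounds).} For the type I error, a union bound gives
\begin{align*}
\E_{\nu_0^{\otimes n}}[\phi_n]\le \sum_{j\ge1} D(\epsilon)\exp(-Knj^2\epsilon^2)\le D(\epsilon)\sum_{j\ge1}\exp(-Knj\epsilon^2)=D(\epsilon)\frac{\exp(-Kn\epsilon^2)}{1-\exp(-Kn\epsilon^2)}.
\end{align*}

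For the type II error, take any $\nu$ with $\rho(\nu,\nu_0)>j\epsilon$ and set $j'=\lfloor\rho(\nu,\nu_0)/\epsilon\rfloor\ge j$. Then $\nu\in S_{j'}$, so $\nu$ lies within $j'\epsilon/2$ of some $\nu_{j',l}$. Since $\phi_n\ge\psi_{j',l}$,
\begin{align*}
\E_{\nu^{\otimes n}}[1-\phi_n]\le \E_{\nu^{\otimes n}}[1-\psi_{j',l}]\le\exp(-Knj'^2\epsilon^2)\le \exp(-Knj^2\epsilon^2).
\end{align*}
Setting $C_\mathrm{test}=K$ gives both displayed bounds of the statement.
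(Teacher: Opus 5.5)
Your proof is correct. It is essentially the standard argument behind the cited result; the paper gives no proof of its own and defers to Theorem 7.1 of Ghosal, Ghosh and van der Vaart, whose proof has the same structure: basic Le Cam--Birg\'e tests attached to maximal separated sets in shells around $\nu_0$, combined by a supremum, with a union bound and $j^2\ge j$ handling the type I error. Your two deviations are harmless and slightly cleaner: you use the overlapping shells $\{j\epsilon\le\rho\le 2j\epsilon\}$ straight from the hypothesis rather than disjoint shells, and you give an explicit Hoeffding test for the TV case instead of citing Le Cam.
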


\subsection{Posterior Contraction Rates}
We begin this section by defining the posterior rate of contraction for a fixed prior. This definition is a weaker form of the treatment of \citet{nonparam_bayes_vdV2000}, where the authors allow the prior to depend on the number of samples $n$.

\begin{definition}[Posterior Rate of Contraction]
    Consider a set of distributions $\{\pi_\theta\}$ parametrized by $\theta\in\D$ with a distance (pseudo-)metric $\rho$ over $\theta$, and a prior distribution $\mcP$ over the set $\D$. Assume we obtain samples $X_1,\ldots, X_n \sim \pi_{\theta_0} $ for $\theta_0\in\D$ and denote by $\mcP(\cdot|X_1,\ldots,X_n)$ the posterior probability of an event. We say the posterior contracts at a rate $\epsilon_n\rightarrow0$ if  $\mcP(\theta:\rho(\theta,\theta_0)>M_n\epsilon_n|X_1,\ldots, X_n) \rightarrow0$ in $\pi_{\theta_0}^{\otimes n}$-probability, for every $M_n\rightarrow \infty$ as $n\rightarrow \infty$.
\end{definition}

Intuitively, the above definition defines the posterior to contract at a rate $\epsilon_n$ if the posterior probability of an $\epsilon_n$-ball around the true parameter tends to 1. Note that asymptotic rates for parametric Bayesian approaches do not depend on the priors since we expect any estimator to ``forget'' the prior in the asymptotic limit of samples. This is however not true for nonparametric Bayesian approaches as considered in this paper, where the asymptotic rates depend heavily on the choice of prior as we observe in the following result. We state the following Lemma, which is easily derivable from the proof of Theorem 2.1 in \citet{nonparam_bayes_vdV2000}, proved formally here for the reader's convenience.

\begin{lemma}\label{lem:contraction}
    Consider a prior $\mcP$ defined over the set of distributions $\D$ with the distance metric $\rho$ chosen as either the Hellinger or the TV distance. Suppose there exists sequences $\delta_n$ and $\epsilon_n$ with $\epsilon_n\rightarrow0$ and $n\epsilon_n^2\rightarrow\infty$ as $n\rightarrow\infty$, and a constant $b>0$ and sets $\D_n\subset \D$ such that for $\pi_0\in\D$,
    \begin{enumerate}
    \item \begingroup\abovedisplayskip=0pt\belowdisplayskip=0pt\begin{align*}
        \log(D(\epsilon_n,\D_n))\le n\epsilon_n^2
    \end{align*}\endgroup
    \item \begingroup\abovedisplayskip=0pt\belowdisplayskip=0pt\begin{align*}
        \mcP(\D/\D_n) \le \exp(-n\epsilon_n^2(b+4))
    \end{align*}\endgroup
    \item \begingroup\abovedisplayskip=0pt\belowdisplayskip=0pt
    \begin{align*}
        Pr_{X_1,\ldots, X_n\sim \pi_0}\left(\int \prod (\pi/\pi_0)(X_i)\dd\mcP(\pi) \ge \exp(-(2+b)n\epsilon_n^2)\right) \ge 1-\delta_n.
    \end{align*}\endgroup
\end{enumerate}
Then, there exists a universal constant $C_{\mathrm{test}}$ for which we can bound the posterior probability far from $\pi_0$ for a large enough $M>0$, asymptotically as $n\rightarrow \infty$, as follows:
\begin{align*}
    \E_{\pi_0^{\otimes n}}[\mcP(\pi:\rho(\pi,\pi_0)&>M\epsilon_n|X_1,\ldots,X_n)] \le 2\exp(-C_{\mathrm{test}}n\epsilon_n^2) + \delta_n\\
    &+ \exp((2+b)n\epsilon_n^2)(\exp(-(b+4)n\epsilon_n^2) + \exp(-C_{\mathrm{test}}nM^2\epsilon_n^2)).
\end{align*}
\end{lemma}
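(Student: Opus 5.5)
The plan follows the proof of Theorem 2.1 in \citet{nonparam_bayes_vdV2000}. Write $\pi_0^{\otimes n}$ for the data law and $U_n=\{\pi:\rho(\pi,\pi_0)>M\epsilon_n\}$. For a test $\phi_n$ we split
\begin{align*}
\mcP(U_n\mid X_1,\ldots,X_n)\le \phi_n+(1-\phi_n)\,\frac{\int_{U_n}\prod_i(\pi/\pi_0)(X_i)\,\dd\mcP(\pi)}{\int\prod_i(\pi/\pi_0)(X_i)\,\dd\mcP(\pi)} .
\end{align*}
Let $A_n$ be the event in condition 3, so $Pr(A_n^c)\le\delta_n$. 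On $A_n^c$ we bound the posterior by $1$, which gives the $\delta_n$ term. On $A_n$ the denominator is at least $\exp(-(2+b)n\epsilon_n^2)$, which produces the factor $\exp((2+b)n\epsilon_n^2)$ multiplying the numerator.

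To build the test we apply Lemma \ref{lem:pack_tests} with $\nu_0=\pi_0$, restricted to the sieve $\D_n$. We take $D(\epsilon)=\exp(n\epsilon_n^2)$, which is constant and hence non-increasing. For $\epsilon>\epsilon_n$, monotonicity of packing numbers in the radius, together with Lemma \ref{lem:cover_packing} and condition 1, gives $D(\epsilon/2,\{\nu\in\D_n:\epsilon\le\rho\le2\epsilon\})\le D(\epsilon_n/2,\D_n)$. This exceeds condition 1 by a factor-2 loss in the radius. I expect this to be the main technical nuisance. The cleanest fix is to apply the lemma at $\epsilon=M\epsilon_n/2$, or at $\epsilon$ equal to a fixed multiple of $\epsilon_n$, so that the needed packing radius is at least $\epsilon_n$. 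Then $D(\epsilon_n,\D_n)\le e^{n\epsilon_n^2}$ is exactly what condition 1 provides. With this choice, $j=2$ in the lemma covers all of $U_n\cap\D_n$.

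The lemma then gives
\begin{align*}
\E_{\pi_0^{\otimes n}}[\phi_n]\le e^{n\epsilon_n^2}e^{-C_{\mathrm{test}}n\epsilon^2}\big(1-e^{-C_{\mathrm{test}}n\epsilon^2}\big)^{-1}
\qquad\text{and}\qquad
\sup_{\pi\in U_n\cap\D_n}\E_{\pi^{\otimes n}}[1-\phi_n]\le e^{-C_{\mathrm{test}}nM^2\epsilon_n^2}.
\end{align*}
We choose $M$ large, so that $C_{\mathrm{test}}\epsilon^2\ge(1+C_{\mathrm{test}})\epsilon_n^2$ for our choice of $\epsilon$. Since $n\epsilon_n^2\to\infty$, the denominator is eventually at least $1/2$, and the type-I error is at most $2\exp(-C_{\mathrm{test}}n\epsilon_n^2)$.

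For the numerator, we take the expectation on $A_n$ and use Fubini together with the change-of-measure identity $\E_{\pi_0^{\otimes n}}[(1-\phi_n)\prod_i(\pi/\pi_0)(X_i)]\le\E_{\pi^{\otimes n}}[1-\phi_n]$, which holds because $0\le\phi_n\le1$. Splitting $U_n=(U_n\cap\D_n)\cup(U_n\setminus\D_n)$:
\begin{itemize}
\item on $U_n\cap\D_n$ the type-II bound gives $\exp(-C_{\mathrm{test}}nM^2\epsilon_n^2)$;
\item on $\D\setminus\D_n$ we bound $\E_{\pi^{\otimes n}}[1-\phi_n]\le1$, and condition 2 gives $\exp(-(b+4)n\epsilon_n^2)$.
\end{itemize}
Multiplying by the factor $\exp((2+b)n\epsilon_n^2)$ from the denominator and adding the $\phi_n$ and $\delta_n$ terms yields exactly the stated bound. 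The phrase ``asymptotically as $n\to\infty$'' accounts for the step where $1-e^{-C_{\mathrm{test}}n\epsilon^2}\ge1/2$.
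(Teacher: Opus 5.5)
Your proposal is correct and follows essentially the same argument as the paper (itself the proof of Theorem 2.1 in \citet{nonparam_bayes_vdV2000}): a test from Lemma \ref{lem:pack_tests} with constant $D(\epsilon)=\exp(n\epsilon_n^2)$ on the sieve $\D_n$, the event of condition 3 producing the $\delta_n$ term, and Fubini plus change of measure on the split $U_n\cap\D_n$ versus $\D\setminus\D_n$. The only difference is cosmetic: the paper handles the factor-2 packing radius by taking $\epsilon=M\epsilon_n$ with $M>2$ and $j=1$, while you take $\epsilon=M\epsilon_n/2$ and $j=2$; this gives the same exponents, and your use of the indicator $1_{A_n}$ is a cleaner version of the paper's conditioning on that event.
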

\begin{proof}
    The proof tracks the proof of Theorem 2.1 in \citet{nonparam_bayes_vdV2000} closely, adding some more detail for clarity. For every $\epsilon > 2\epsilon_n$, Condition 1 implies:
    \begin{align*}
        \log(D(\epsilon/2,\D_n)) \le \log(D(\epsilon_n,\D_n)) \le n\epsilon_n^2.
    \end{align*}
    Thus by Lemma \ref{lem:pack_tests} using $D(\epsilon) = \exp(n\epsilon_n^2)$ (constant in $\epsilon$), $\epsilon = M\epsilon_n$ for large enough $M\ge 2$, and $j=1$, there exists tests $\phi_n$ such that
    \begin{align}
        \E_{\pi_0^{\otimes n}}[\phi_n] &\le \exp(n\epsilon_n^2)\exp(-C_\mathrm{test}nM^2\epsilon_n^2)\frac{1}{1-\exp(-C_\mathrm{test}nM^2\epsilon_n^2)}\nonumber\\
        \sup_{\pi\in\D_n, \rho(\pi,\pi_0)>M\epsilon_n} \E_{\pi^{\otimes n}}[1-\phi_n] &\le \exp(-C_\mathrm{test}nM^2\epsilon_n^2).\label{eq_step:contraction_test}
    \end{align}

    Now if $C_\mathrm{test}M^2 - 1 > C_\mathrm{test}$, the first condition implies the following asymptotically as $n\rightarrow \infty$,
    \begin{align*}
        \E_{\pi_0^{\otimes n}}[\mcP(\pi:\rho(\pi,\pi_0)> M\epsilon_n | X_1,\ldots, X_n )\phi_n] \le \E_{\pi_0^{\otimes n}}[\phi_n] \le 2\exp(-C_\mathrm{test} n\epsilon_n^2).
    \end{align*}

    Given the above bound, we now only need to upper bound $\E_{\pi_0^{\otimes n}}[\mcP(\pi:\rho(\pi,\pi_0)> M\epsilon_n | X_1,\ldots, X_n )(1 -\phi_n)]$ to finish the proof. We expand the expression $\mcP(\pi:\rho(\pi,\pi_0)> M\epsilon_n | X_1,\ldots, X_n )$ below:
    \begin{align*}
        \mcP(\pi:\rho(\pi,\pi_0)> M\epsilon_n | X_1,\ldots, X_n ) %&= \frac{Pr(X_1,\ldots, X_n|\pi:\rho(\pi,\pi_0)> M\epsilon_n) Pr_\mcP(\pi:\rho(\pi,\pi_0)> M\epsilon_n) }{Pr(X_1,\ldots, X_n)}\\
        &= \frac{\int_{\pi:\rho(\pi,\pi_0)> M\epsilon_n} \prod_{i=1}^n \pi(X_i) \dd\mcP(\pi) }{\int \prod_{i=1}^n \pi(X_i) \dd\mcP(\pi)}\\
        &= \frac{\int_{\pi:\rho(\pi,\pi_0)> M\epsilon_n} \prod_{i=1}^n (\pi/\pi_0)(X_i) \dd\mcP(\pi) }{\int \prod_{i=1}^n (\pi/\pi_0)(X_i) \dd\mcP(\pi)}.
    \end{align*}

    Let $E_n$ be the event that $\int \prod_{i=1}^n (\pi/\pi_0)(X_i) \dd\mcP(\pi) \ge \exp(-(2+b)n\epsilon_n^2)$, which occurs with probability $\ge 1-\delta_n$ by condition 3. Thus, given $E_n$ happens we can say the following,
    \begin{align*}
        \mcP(\pi:\rho(\pi,\pi_0)> M\epsilon_n &| X_1,\ldots, X_n )\\
        &\le\exp((2+b)n\epsilon_n^2) \int_{\pi:\rho(\pi,\pi_0)> M\epsilon_n} \prod_{i=1}^n (\pi/\pi_0)(X_i) \dd\mcP(\pi)\\
        &= \exp((2+b)n\epsilon_n^2)\bigg[\int_{\pi\notin\D_n:\rho(\pi,\pi_0)> M\epsilon_n} \prod_{i=1}^n (\pi/\pi_0)(X_i) \dd\mcP(\pi)\\
        &\quad + \int_{\pi\in\D_n:\rho(\pi,\pi_0)> M\epsilon_n} \prod_{i=1}^n (\pi/\pi_0)(X_i) \dd\mcP(\pi)\bigg]\\
        &\le \exp((2+b)n\epsilon_n^2)\bigg[\int_{\pi\notin\D_n} \prod_{i=1}^n (\pi/\pi_0)(X_i) \dd\mcP(\pi)\\
        &\quad + \int_{\pi\in\D_n:\rho(\pi,\pi_0)> M\epsilon_n} \prod_{i=1}^n (\pi/\pi_0)(X_i) \dd\mcP(\pi)\bigg].
    \end{align*}
    Substituting into the desired expectation bound,
    \begin{align*}
        \E[\mcP(\pi:\rho(\pi,\pi_0)> M\epsilon_n &| X_1,\ldots, X_n )(1-\phi_n)|E_n] \\
        &\le\exp((2+b)n\epsilon_n^2) \bigg[\E_{\pi_0^{\otimes n}}[ \int_{\pi\notin\D_n} \prod_{i=1}^n (\pi/\pi_0)(X_i)(1-\phi_n) \dd\mcP(\pi)]\\
        &\quad + \E_{\pi_0^{\otimes n}}[\int_{\pi\notin\D_n:\rho(\pi,\pi_0)> M\epsilon_n} \prod_{i=1}^n (\pi/\pi_0)(X_i) (1-\phi_n)\dd\mcP(\pi)]\bigg]\\
        &\le \exp((2+b)n\epsilon_n^2) \bigg[\E_{\pi_0^{\otimes n}}[ \int_{\pi\notin\D_n} \prod_{i=1}^n (\pi/\pi_0)(X_i) \dd\mcP(\pi)]\\
        &\quad + \E_{\pi_0^{\otimes n}}[\int_{\pi\notin\D_n:\rho(\pi,\pi_0)> M\epsilon_n} \prod_{i=1}^n (\pi/\pi_0)(X_i) (1-\phi_n)\dd\mcP(\pi)]\bigg]\\
        &\le \exp((2+b)n\epsilon_n^2) \bigg[\int_{\pi\notin\D_n} 1 \dd\mcP(\pi)\\
        &\quad + \int_{\pi\notin\D_n:\rho(\pi,\pi_0)> M\epsilon_n} (1-\phi_n)\dd\mcP(\pi)\bigg].
    \end{align*}

    In the last step above we invoke Fubini's theorem to swap the integrals and then note that $\E_{\pi_0^{\otimes n}}[\prod (\pi/\pi_0)(X_i)] \le 1$. Both the terms of the obtained inequality are now easy to bound using Equation \ref{eq_step:contraction_test} for the second term, and condition 2 for the first term since $\int_{\pi\notin\D_n} 1 \dd\mcP(\pi) = \mcP(\D/\D_n) $. We finish the proof by noting that $\E[\mcP(\pi:\rho(\pi,\pi_0)> M\epsilon_n | X_1,\ldots, X_n )(1-\phi_n)|E_n^c]Pr(E_n^c) \le Pr(E_n^c) \le \delta_n$.
\end{proof}

Note that for $\delta_n\rightarrow0$ and large enough $M$, the above bound directly implies that the posterior contracts at a rate $\epsilon_n$. The following Lemma allows us to show $\delta_n\rightarrow0$ for a set of distributions that are close in Hellinger distance. First, we remind the reader that the Hellinger distance between distributions $\pi_1,\pi_2$ with pdfs written as  $\pi_1(.), \pi_2(.)$ respectively is defined as \citep{high_dim_stats_wainwright_19}:
\begin{align*}
    h^2(\pi_1,\pi_2) &= \int\left(\sqrt{\pi_1(x)} - \sqrt{\pi_2(x)}\right)^2\dd x. 
\end{align*}
The Hellinger distance is closely related to the TV distance as follows:
\begin{align}
   \frac{1}{2} h^2(\pi_1,\pi_2) \le \rho_{TV}(\pi_1,\pi_2) \le h(\pi_1,\pi_2)\sqrt{1-\frac{h^2(\pi_1,\pi_2)}{4}}.\label{eq:hell_to_tv}
\end{align}

\begin{lemma}[Lemma 8.4 in \citet{nonparam_bayes_vdV2000}]\label{lem:hellinger_ball}
    For every $\epsilon>0$ and distribution $\mcP$ on the set $\D = \{\pi:h^2(\pi,\pi_0)\|\pi_0/\pi\|_\infty \le \epsilon^2\} $, we have, for a universal constant $C_{\mathrm{hellinger}}>0$,
    \begin{align*}
        Pr_{X_1,\ldots, X_n\sim \pi_0}\left(\int \prod (\pi/\pi_0)(X_i)\dd\mcP(\pi) \le \exp(-3n\epsilon^2)  \right) \le \exp(-C_{\mathrm{hellinger}}n\epsilon^2).
    \end{align*}
\end{lemma}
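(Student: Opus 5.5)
The plan is to follow the route of \citet{nonparam_bayes_vdV2000}: replace the integral of likelihood ratios by a sum of i.i.d.\ terms using Jensen's inequality, and then apply an exponential (Bernstein-type) tail bound. Write $M_\pi = \|\pi_0/\pi\|_\infty$, so every $\pi$ in the support of $\mcP$ satisfies $h^2(\pi,\pi_0)M_\pi\le\epsilon^2$.

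\textbf{Step 1 (Jensen).} Since $\log$ is concave and $\mcP$ is a probability measure,
\begin{align*}
\log\int \prod_{i=1}^n (\pi/\pi_0)(X_i)\,\dd\mcP(\pi) \ge -\sum_{i=1}^n Y_i, \qquad Y_i := \int \log(\pi_0/\pi)(X_i)\,\dd\mcP(\pi).
\end{align*}
Hence the event in the statement is contained in $\{\sum_i Y_i \ge 3n\epsilon^2\}$, and the $Y_i$ are i.i.d.\ under $\pi_0$. By Fubini, $\E Y_i = \int D_{KL}(\pi_0\|\pi)\,\dd\mcP(\pi)$.

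\textbf{Step 2 (moment bounds from the Hellinger ball).} I will prove a one-sample bound in the spirit of Lemmas 8.2 and 8.3 of \citet{nonparam_bayes_vdV2000}. For densities with $\|\pi_0/\pi\|_\infty\le M$ and $u=\log(\pi_0/\pi)$, it should give $D_{KL}(\pi_0\|\pi)\le C_0 h^2(1+\log M)$ and $\E_{\pi_0}|u|^k \le C_0\, k!\, h^2(1+\log M)^k$ for a universal $C_0$.

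The method is the standard one. Split according to whether $\pi/\pi_0\ge$ some constant or not. On the part where the ratio is near $1$, compare $\log$ with $\sqrt{\cdot}-1$, which gives $h^2$ directly. On the part where $\pi/\pi_0$ is large, $u<0$ and its moments are again controlled by the squared difference $(\sqrt{\pi}-\sqrt{\pi_0})^2$. On the remaining part, $u$ is positive and at most $\log M$, and its $\pi_0$-mass is $O(h^2)$.

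Since $1+\log M\le M$, the mean bound gives $\E Y_1\le C_0\int h^2 M_\pi\,\dd\mcP\le C_0\epsilon^2$. If the universal constant is not below $3$, I will rescale: one can absorb it by shrinking the effective ball. The paper only needs some universal constant, so I accept adjusting the $3$ via the constant in the hypothesis.

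\textbf{Step 3 (concentration).} Apply Bernstein's inequality to the centered variables $Y_i-\E Y_i$ at deviation $n\epsilon^2$. This requires moments $\E|Y_1|^k\le k!\,\sigma^2 L^{k-2}/2$ with $\sigma^2\lesssim\epsilon^2$ and $L=O(1)$. By Jensen, $|Y_1|^k\le \int |u_\pi|^k\,\dd\mcP$.

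\textbf{Main obstacle.} Step 2 only gives $\E|u_\pi|^k \lesssim k!\, h^2(1+\log M_\pi)^k$, and the factor $(1+\log M_\pi)^{k-2}$ is not uniformly bounded. I will try two remedies.

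First, I will use a one-sided Bernstein/Chernoff bound, which only needs control of the positive part $u^+\le\log M_\pi$. This leaves the same issue.

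Second, and preferably, I will bound the Laplace transform directly: $\E e^{\lambda Y_1}\le\int \E_{\pi_0}(\pi_0/\pi)^{\lambda}\,\dd\mcP$. For fixed small $\lambda\le1$, $\E_{\pi_0}(\pi_0/\pi)^\lambda = \int\pi_0^{1+\lambda}\pi^{-\lambda}$. I will show this is at most $1+C\lambda\, h^2 M_\pi \le 1+C\lambda\epsilon^2$ by writing $(\pi_0/\pi)^\lambda - 1 - \lambda\log(\pi_0/\pi)$ and bounding it by $\lambda\,(\pi_0/\pi)\cdot(\sqrt{\pi_0/\pi}-1)^2$-type terms. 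This is exactly where the weight $\|\pi_0/\pi\|_\infty$ in the hypothesis enters.

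Markov's inequality then gives
\begin{align*}
Pr\Big(\sum_i Y_i\ge 3n\epsilon^2\Big)\le \exp\big(-3\lambda n\epsilon^2 + nC\lambda\epsilon^2\big),
\end{align*}
which yields $\exp(-C_{\mathrm{hellinger}}n\epsilon^2)$ whenever $C<3$. Verifying that inequality for $\int\pi_0^{1+\lambda}\pi^{-\lambda}$, with a small enough universal constant, is the step I expect to require the most care.
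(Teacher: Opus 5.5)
The paper gives no proof of this lemma; it simply quotes it from \citet{nonparam_bayes_vdV2000}. Your preferred route is sound and gives a self-contained proof. It uses Jensen to reduce to the i.i.d.\ sum $\sum_i Y_i$, then a second Jensen step $e^{\lambda Y_1}\le\int(\pi_0/\pi)^\lambda(X_1)\,\dd\mcP(\pi)$, then a Chernoff bound, and it avoids the moment bookkeeping of your Steps 2--3 altogether. As written, however, the proposal stops exactly at the step that carries the content. The bound $\E_{\pi_0}(\pi_0/\pi)^\lambda\le 1+C\lambda h^2M_\pi$ with $C<3$ is asserted, not proved, and your plan for proving it needs two corrections.

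\textbf{The constant $3$ is fixed.} Your Step~2 fallback of absorbing a constant $\ge 3$ by shrinking the ball would prove a different statement, and the margin is genuinely thin. Since $r^\lambda-1\ge\lambda\log r$, you have $\E_{\pi_0}(\pi_0/\pi)^\lambda-1\ge\lambda D_{KL}(\pi_0\|\pi)$. With the paper's normalization $h^2=\int(\sqrt{\pi}-\sqrt{\pi_0})^2$ (no factor $\tfrac12$), a small bounded perturbation $\pi$ of $\pi_0$ has $D_{KL}(\pi_0\|\pi)\approx 2h^2$ and $M_\pi\approx 1$, so necessarily $C\ge 2$. A crude first-order estimate already breaks the argument: $\log x\le x-1$ gives $D_{KL}\le\chi^2(\pi_0\|\pi)\le(1+\sqrt{M_\pi})^2h^2\le 4M_\pi h^2$, so $C=4$.

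\textbf{The remainder weight is too large.} After multiplying by $\pi_0$, the quadratic remainder may carry only \emph{one} factor of $\pi_0/\pi$. A term like $\pi_0\cdot(\pi_0/\pi)(\sqrt{\pi_0/\pi}-1)^2=(\pi_0/\pi)^2(\sqrt{\pi_0}-\sqrt{\pi})^2$ integrates to order $M_\pi^2h^2$, which the hypothesis $h^2M_\pi\le\epsilon^2$ does not control.

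\textbf{How to close the gap.} Handle the linear term through $\int\pi_0\bigl(1-\sqrt{\pi/\pi_0}\bigr)=1-\int\sqrt{\pi\pi_0}=h^2/2$ rather than through $D_{KL}$. Put $s=\sqrt{\pi/\pi_0}$ on $\{\pi_0>0\}$ and take $\lambda\in(0,1/2]$. Then for every $s>0$,
\[
s^{-2\lambda}-1\le 2\lambda(1-s)+\lambda(1+2\lambda)\max(1,s^{-2})(1-s)^2 .
\]
For $s\ge1$ this is Taylor's theorem, since $(s^{-2\lambda})''\le 2\lambda(1+2\lambda)$ there. For $s<1$, write $u=1-s$ and expand $(1-u)^{-2\lambda}=\sum_j a_ju^j$. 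Since $a_{j+1}/a_j=(2\lambda+j)/(j+1)\le 1$, the part beyond the linear term is at most $a_2u^2/(1-u)\le\lambda(1+2\lambda)u^2s^{-2}$. Multiply by $\pi_0$, integrate, and use $\int\pi_0(1-s)^2\le h^2$, $s^{-2}=\pi_0/\pi\le M_\pi$ and $M_\pi\ge1$ to get
\[
\E_{\pi_0}(\pi_0/\pi)^\lambda\le 1+\lambda h^2+\lambda(1+2\lambda)M_\pi h^2\le 1+(2+2\lambda)\lambda\,h^2M_\pi .
\]
Your Chernoff step then yields $\exp(-(1-2\lambda)\lambda n\epsilon^2)$, and $\lambda=1/4$ gives $C_{\mathrm{hellinger}}=1/8$. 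With this inequality in place your proof is complete, modulo routine integrability remarks for the Jensen and Fubini steps.
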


% Finally, we state the Borel-Cantelli lemma which is useful in our analysis.

% \begin{lemma}[Borel-Cantelli Lemma \citep{durrett2010probability}]
%     For a sequence of events $E_n$, if $\sum Pr(E_n)<\infty$, then $Pr(E_n \text{ happens infinitely often}) = 0$.
% \end{lemma}

% The lemma states that if the sum of probabilities of individual events is finite, then only a finite number of these events occur almost surely. In other words, there exists a large enough $n_0$ almost surely, such that none of the events $E_n$ for $n>n_0$ occur. When proving our rates of convergence, we will consider events $E_n$ that the rate of convergence at $n$ samples is slower than $\epsilon_n$, and obtain that this is true only for $\epsilon_n'$ probability of the prior. We will show $\sum \epsilon_n'<\infty$ to show that almost surely under the prior, the rate of convergence is asymptotically $\epsilon_n$.

\subsection{Proof of Theorem \ref{thm:upper_bound}}\label{sec:ub_proof}

We now prove our main upper bound that assumes that the likelihood ratio between any two distributions is bounded by a constant $b_{lr}$. Note that our asymptotic bound does not depend on the value of $b_{lr}$ and only on its finiteness.

\begin{theorem}[Restated Theorem \ref{thm:upper_bound}]
    Consider a $k$-sparse prior in the total variation distance $\rho_{TV}$ over a set of distributions $\D$ such that the likelihood ratio between any two distributions is bounded. That is, $\|\pi_1/\pi_2\|_\infty \le b_{lr}\; \forall \pi_1,\pi_2 \in \D $, for some constant $b_{lr}>0$. Then, there exists an estimator such that the Bayes' risk is upper bounded in the total variation distance for $n=\omega(b_{lr})$ as follows:
    \begin{align*}
        \E_{\pi_0\sim\mcP}[\E_{X_1,\ldots,X_n\sim\pi_0}[\rho_{TV}(\pi_0, \hat{\pi}_n)]] = O\left(\frac{\sqrt{k\log n}}{\sqrt{n}}\right).
    \end{align*}
\end{theorem}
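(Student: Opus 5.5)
The plan is to follow the proof sketch given after Theorem \ref{thm:upper_bound}, filling in the three conditions of Lemma \ref{lem:contraction} from $k$-sparsity and the likelihood-ratio bound. We set $\epsilon_n = C_0\sqrt{k\log n/n}$ for a large constant $C_0$. The estimator is $\hat{\pi}_n = \argmax_\pi \mcP(B_{\rho_{TV}}(\epsilon_n,\pi)\mid X_1,\ldots,X_n)$, taking an approximate maximizer if needed. We then split the Bayes risk as in Equation \ref{eq:exp_break_main} on the event $E$ that the posterior mass outside $B_{\rho_{TV}}(M\epsilon_n,\pi_0)$ exceeds $1/4$; replacing $\exp(-b_1 n\epsilon_n^2)$ by a constant threshold is simpler and suffices.

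On $E^c$, the ball around $\pi_0$ carries posterior mass at least $3/4$. By maximality, the ball around $\hat{\pi}_n$ also carries mass at least $3/4$, so the two balls intersect and $\rho_{TV}(\hat{\pi}_n,\pi_0)\le 2M\epsilon_n$. On $E$ we bound $\rho_{TV}\le 1$ and use Markov's inequality, so it remains to show
\[
\E_{\pi_0\sim\mcP}\E\bigl[\mcP(\rho_{TV}(\pi,\pi_0)>M\epsilon_n\mid \mathbf{X})\bigr]=O(\epsilon_n).
\]

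For this we apply Lemma \ref{lem:contraction} with $\D_n=\D$, which makes condition 2 trivial. \textbf{Condition 1} (entropy): Theorem \ref{thm:sparse_support} controls only the covering number of a high-mass subset, not of all of $\D$. We therefore take $\D_n=\B_n$, the set produced by Theorem \ref{thm:sparse_support} with $r=\epsilon_n/2$ and $1-\delta=\exp(-(b+4)n\epsilon_n^2)$. Since TV is bounded, $R\le 1$, and this gives
\[
N(\epsilon_n/2,\B_n)\le \left\lceil (b+4)n\epsilon_n^2\,(2c/\epsilon_n)^k\right\rceil ,
\]
so $\log D(\epsilon_n,\B_n)\lesssim \log(n\epsilon_n^2)+k\log(2c/\epsilon_n)=O(k\log n)\le n\epsilon_n^2$ once $C_0$ is large. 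Conditions 1 and 2 then hold simultaneously.

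\textbf{Condition 3} (prior mass near the truth) is the main obstacle, because it must hold for $\mcP$-most $\pi_0$ and not for a fixed one. The bounded likelihood ratio gives
\[
h^2(\pi,\pi_0)\|\pi_0/\pi\|_\infty\le b_{lr}h^2(\pi,\pi_0)\le 2b_{lr}\rho_{TV}(\pi,\pi_0)
\]
by Equation \ref{eq:hell_to_tv}. Let $\mcP_0$ be the prior restricted to and renormalized on the TV ball $B_0=B_{\rho_{TV}}(\eta_n,\pi_0)$, with $\eta_n=\epsilon_n^2/(2b_{lr})$. Applying Lemma \ref{lem:hellinger_ball} to $\mcP_0$ and multiplying by $\mcP(B_0)$ shows that
\[
\int\prod(\pi/\pi_0)(X_i)\,\dd\mcP\ge \mcP(B_0)\exp(-3n\epsilon_n^2)
\]
except with probability $\exp(-C_{\mathrm{hellinger}}n\epsilon_n^2)$.

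We then need $\mcP(B_0)\ge \exp(-b'n\epsilon_n^2)$ outside a set of $\pi_0$ of prior mass $O(\epsilon_n)$. To get this, apply Theorem \ref{thm:sparse_support} with radius $\eta_n/2$, which is a tiny polynomial scale in $n$. It yields $N\le \exp(O(k\log(nb_{lr})))$ small cells covering all but mass $\delta'$. The cells of mass below $\delta'/N$ have total mass at most $\delta'$. Every other $\pi_0$ lies in a cell of mass at least $\delta'/N\ge\exp(-O(k\log n))$, and that cell is contained in $B_0$. Choosing $\delta'=\epsilon_n$ gives $\log(1/\mcP(B_0))=O(k\log n)=O(n\epsilon_n^2)$ after increasing $C_0$. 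Here $n=\omega(b_{lr})$ keeps $\log(nb_{lr})=O(\log n)$ and $\eta_n$ meaningful, so condition 3 holds with $\delta_n=\exp(-C_{\mathrm{hellinger}}n\epsilon_n^2)$ for these good $\pi_0$.

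Putting this together, Lemma \ref{lem:contraction} with $M$ large, so that $C_{\mathrm{test}}M^2>2+b$, gives an expected posterior tail of order $\exp(-c'n\epsilon_n^2)=n^{-c'C_0^2k}=O(\epsilon_n)$ for good $\pi_0$. The bad $\pi_0$ contribute $O(\epsilon_n)$ prior mass. Summing the two terms of Equation \ref{eq:exp_break_main} yields the bound $O(\sqrt{k\log n/n})$.
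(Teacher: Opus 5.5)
Your proposal is correct and follows the paper's proof essentially step for step: the maximum-posterior-ball estimator, a split on a posterior-tail event with Markov's inequality, Lemma \ref{lem:contraction} with the sieve from Theorem \ref{thm:sparse_support} at scale $\epsilon_n/2$, and condition 3 via Lemma \ref{lem:hellinger_ball}, $h^2\le 2\rho_{TV}$ and a sparsity bound on the prior mass of TV balls of radius $\epsilon_n^2/(2b_{lr})$ around typical $\pi_0$. Your one genuine simplification is the constant threshold $1/4$: it means the bad $\pi_0$ need only have prior mass $O(\epsilon_n)$, which your light-cell pigeonhole gives directly, rather than the paper's $\exp(-b_3 n\epsilon_n^2)$ with $b_2,b_3>b_1$. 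Two small repairs are still needed: define $\hat{\pi}_n$ by maximizing the posterior mass of $M\epsilon_n$-balls (with $\epsilon_n$-balls the maximality step on $E^c$ fails), and fix $b\ge 1+b'$ in Lemma \ref{lem:contraction}, since condition 3 requires $\mcP(B_0)e^{-3n\epsilon_n^2}\ge e^{-(2+b)n\epsilon_n^2}$ (do not copy the paper's $b<1$ at this step, which only works with $b>1$).
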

\begin{proof}
    We will shorthand $\rho_{TV}$ for $\rho$ in this proof for notational convenience. 
    Let $E(n,\epsilon_n,b_1)$ be the event $\mcP(\{\pi:\rho(\pi,\pi_0)>\epsilon_n|X_1,\ldots,X_n\}) > \exp(-b_1n\epsilon_n^2)$ for some $\epsilon_n$ chosen such that $n\epsilon_n^2\rightarrow\infty$ as $n\rightarrow\infty$ and constant $b_1>0$. 
    That is, $E(.)$ represents the event that the posterior mass outside of an $\epsilon_n$ ball around the true distribution sampled from $\mcP$ is greater than $\exp(-b_1n\epsilon_n^2)$. We will first argue that whenever $E(.)$ fails, there is a significant posterior probability around the true distribution, and hence we can identify a distribution close to the true distribution easily. We will then argue that the probability of $E(.)$ goes to $0$ fast as $n$ increases. Formally, we first decompose the required bound as follows, where $E(.)^c$ is the complement of the event $E(.)$:
    \begin{align}
        \E_{\pi_0\sim\mcP}[\E_{X_1,\ldots,X_n\sim\pi_0}[\rho_{TV}(\pi_0, \hat{\pi}_n)]]
        &\le Pr(E(n,\epsilon_n,b_1))\cdot \E[\rho(\pi_0, \hat{\pi}_n)|E(n,\epsilon_n,b_1)]\nonumber\\
        &\quad +Pr(E(n,\epsilon_n,b_1)^c)\cdot \E[\rho(\pi_0, \hat{\pi}_n)|E(n,\epsilon_n,b_1)^c].\label{eq:exp_break}
    \end{align}

    We define $\hat{\pi}_n = \argmax_\pi \mcP(\{\pi':\rho(\pi',\pi)\le \epsilon_n\}|X_1,\ldots, X_n)$ as the center of the $\epsilon_n$ ball with highest posterior probability. Consider $E(n,\epsilon_n,b_1)^c$ so that $\mcP(\{\pi:\rho(\pi,\pi_0)\le\epsilon_n|X_1,\ldots,X_n\}) >  1 - \exp(-b_1n\epsilon_n^2)$, and further assume that $\rho(\pi_0,\hat{\pi}_n)> 2\epsilon_n$ such that the $\epsilon_n$-posterior balls around $\pi_0$ and $\hat{\pi}_n$ are disjoint. By definition of $\hat{\pi}_n$, $\mcP(\{\pi:\rho(\pi,\hat{\pi}_n)\le\epsilon_n|X_1,\ldots,X_n\}) \ge \mcP(\{\pi:\rho(\pi,\pi_0)\le\epsilon_n|X_1,\ldots,X_n\}) >  1 - \exp(-b_1n\epsilon_n^2)$. Now, since $n\epsilon_n^2\rightarrow\infty$, $\exists n_1>0$ such that for every $n>n_1$, $\mcP(\{\pi:\rho(\pi,\hat{\pi}_n)\le\epsilon_n|X_1,\ldots,X_n\}) + \mcP(\{\pi:\rho(\pi,\pi_0)\le\epsilon_n|X_1,\ldots,X_n\}) > 1$, which is not possible if the balls are disjoint. In other words, for every $n>n_1$, $E(n,\epsilon_n,b_1)^c\implies \rho(\pi_0,\hat{\pi}_n)\le 2\epsilon_n$. We thus simplify the second term of Equation \ref{eq:exp_break} as follows:
    \begin{align*}
        Pr(E(n,\epsilon_n,b_1)^c)\cdot \E[\rho(\pi_0, \hat{\pi}_n)|E(n,\epsilon_n,b_1)^c] &\le \E[\rho(\pi_0, \hat{\pi}_n)|E(n,\epsilon_n,b_1)^c] = O(\epsilon_n).
    \end{align*}

    It remains to bound the first term of Equation \ref{eq:exp_break}. To do so, we first note that TV distance is bounded by 1 so that $\rho_{TV}(.,.)\le 1$. Further note from Markov's inequality that for any non-negative random variable $V$ and threshold $t>0$, $Pr(V>t) \le \E[V]/t$. Thus,
    \begin{align*}
        Pr(E(n,\epsilon_n,b_1))\cdot \E[\rho(\pi_0, \hat{\pi}_n)|E(n,&\epsilon_n,b_1)] \le Pr_{\pi_0\sim\mcP,X_1,\ldots,X_n\sim \pi_0}(E(n,\epsilon_n,b_1))\\
        &= \E_{\pi_0\sim \mcP}[Pr_{X_1,\ldots, X_n \sim \pi_0}(E(n,\epsilon_n,b_1)|\pi_0)]\\
        &\le \E_{\pi_0\sim \mcP}\left[\frac{\E[\mcP(\{\pi:\rho(\pi,\pi_0)>\epsilon_n|X_1,\ldots,X_n\})|\pi_0]}{\exp(-b_1n\epsilon_n^2)}\right].
    \end{align*}

    Looking at the result of Lemma \ref{lem:contraction}, we know we can get a bound of the sort $\E[\mcP(\{\pi:\rho(\pi,\pi_0)>\epsilon_n|X_1,\ldots,X_n\})|\pi_0] \le 5\exp(-b_2n\epsilon_n^2)$ for some $b_2>0$ for $\pi_0$ that satisfies the conditions of the Lemma. Let $Pr_{\pi_0\sim \mcP}(\E[\mcP(\{\pi:\rho(\pi,\pi_0)>\epsilon_n|X_1,\ldots,X_n\})|\pi_0] > 5\exp(-b_2n\epsilon_n^2)) \le \exp(-b_3n\epsilon_n^2)$ for some $b_3>0$ so that most $\pi_0$ satisfy the conditions of Lemma \ref{lem:contraction} at any given $n$.
    Since the posterior probability of any region is at most 1, this allows us to decompose the above expression:
    \begin{align*}
        &\E_{\pi_0\sim \mcP}\left[\frac{\E[\mcP(\{\pi:\rho(\pi,\pi_0)>\epsilon_n|X_1,\ldots,X_n\})|\pi_0]}{\exp(-b_1n\epsilon_n^2)}\right]\\
        &\le \exp(b_1n\epsilon_n^2)(Pr_{\pi_0\sim \mcP}(\E[\mcP(\{\pi:\rho(\pi,\pi_0)>\epsilon_n|X_1,\ldots,X_n\})|\pi_0] > 5\exp(-b_2n\epsilon_n^2)) \cdot 1\\
        &\; + Pr_{\pi_0\sim \mcP}(\E[\mcP(\{\pi:\rho(\pi,\pi_0)>\epsilon_n|X_1,\ldots,X_n\})|\pi_0] \le 5\exp(-b_2n\epsilon_n^2)) \cdot 5\exp(-b_2n\epsilon_n^2))\\
        &\le \exp(b_1n\epsilon_n^2)(\exp(-b_3n\epsilon_n^2) + 5\exp(-b_2n\epsilon_n^2)).
    \end{align*}

    The above expression is $O(\epsilon_n)$ if $b_2,b_3>b_1$ and $\exp(-b'n\epsilon_n^2)=O(\epsilon_n)$ for any $b'>0$. This argument thus shows the required bound if the following conditions are satisfied:
    \begin{enumerate}
        \item The contraction rate $\epsilon_n$ satisfies $n\epsilon_n^2\rightarrow\infty$ as $n\rightarrow\infty$ and $\exp(-b'n\epsilon_n^2)=O(\epsilon_n)$ for any $b'>0$.
        \item For some $b_2,b_3>0$, 
        \begin{align*}
            Pr_{\pi_0\sim \mcP}(\E[\mcP(\{\pi:\rho(\pi,\pi_0)>\epsilon_n|X_1,\ldots,X_n\})|\pi_0] > 5\exp(-b_2n\epsilon_n^2)) \le \exp(-b_3n\epsilon_n^2).
        \end{align*}
        Further there is a $b_1\in(0,\min(b_2,b_3))$.
    \end{enumerate}

    Note that the condition on $b_1$ is trivially satisfied. Further, note that there exist $n_2,k_2>0$ such that for all $n>n_2,k>k_2$ and constant $b_4>0$, $n^{-b'b_4^2k}\le b_4\frac{\sqrt{k\log n}}{\sqrt{n}}$. Thus the contraction rate $\epsilon_n = b_4\frac{\sqrt{k\log n}}{\sqrt{n}}$ for some constant $b_4>0$ as required in the Theorem statement satisfies the first condition. It remains to show that there are $b_2,b_3>0$ for which $Pr_{\pi_0\sim \mcP}(\E[\mcP(\{\pi:\rho(\pi,\pi_0)>\epsilon_n|X_1,\ldots,X_n\})|\pi_0] > 5\exp(-b_2n\epsilon_n^2)) \le \exp(-b_3n\epsilon_n^2)$. As noted before, we show this using Lemma \ref{lem:contraction} in the asymptotic limit, where we bound probability under the prior $\mcP$ that the sampled distribution doesn't satisfy the conditions 1-3 of the Lemma for big enough $n$. Conditions 1-2 in the Lemma do not depend on the prior or the sampled $\pi_0$, and only require a bound on the packing number of a significant fraction of the support space. 

    \begin{enumerate}[leftmargin=*,topsep=0pt,partopsep=1ex,parsep=1ex]\itemsep=-4pt
        \item \textbf{Proving conditions 1-2 of Lemma \ref{lem:contraction} for any constant $b>0$ and $\epsilon_n = b_4\frac{\sqrt{k\log n}}{\sqrt{n}}$:}

        \noindent We need to bound the $\epsilon_n$ packing number for some subset $\D_n$. First, note that $\diam(\D)\le 1$ since the total variation distance between any two distributions is at most 1. Let $R=1$ and $r=\epsilon_n/2$ in Theorem \ref{thm:sparse_support} and for some $b>0$, let $\delta = 1-\exp(-(b+4)n\epsilon_n^2)$ so that $\log(1-\delta) = -(b+4)n\epsilon_n^2$. Theorem \ref{thm:sparse_support} further requires that $r<cR$, which is true without loss of generality, since otherwise we can replace $r$ with $cR$ without changing our conclusions. Thus, for $N=\left\lceil\frac{-(b+4)n\epsilon_n^2}{\log(1-c(\epsilon_n/2)^k)}\right\rceil$, Theorem \ref{thm:sparse_support} implies that condition 1 is satisfied if $\log(D(\epsilon_n,\D_n)) \le  \log(N(\epsilon_n/2,\D_n))= \log N \le n\epsilon_n^2$. This is trivially satisfied if $N=1$, so assume $N>1$ such that $N\le 2\frac{-(b+4)n\epsilon_n^2}{\log(1-c(\epsilon_n/2)^k)}$.
    We have $\epsilon_n = b_4\frac{\sqrt{k\log n}}{\sqrt{n}}$ and use the inequality $(-1/\log(1-x))\le 1/x$ to derive sufficient conditions for $\log N \le n\epsilon_n^2$. Below we show that that this satisfied asymptotically in $n,k$ for any $b>0$ and $b_4> 1/\sqrt{2}$:

    \begin{align*}
n\epsilon_n^2 &\ge \log N
\impliedby
n\epsilon_n^2
    \ge
    \log\Biggl(
        2\frac{-(b+4)n\epsilon_n^2}
        {\log(1-c(\epsilon_n/2)^k)}
    \Biggr) \\
&\iff
b_4^2k\log n
    \ge
    \log\Biggl(
        2\frac{-b_4^2(b+4)k\log n}
        {\log(1-c(\epsilon_n/2)^k)}
    \Biggr) \\
&\iff
\begin{aligned}[t]
b_4^2k\log n
    \ge\;&
    \log(k\log n)
    + \log\!\bigl(2b_4^2(b+4)\bigr) 
    + \log\!\left(
        -\frac{1}{\log(1-c(\epsilon_n/2)^k)}
      \right)
\end{aligned} \\
&\impliedby
\begin{aligned}[t]
b_4^2k\log n
    \ge\;&
    \log(k\log n)
    + \log\!\bigl(2b_4^2(b+4)\bigr) 
    + \log\!\left(
        \frac{1}{c(\epsilon_n/2)^k}
      \right)
\end{aligned} \\
&\iff
b_4^2k\log n
    \ge
    \log(k\log n)
    + \log\!\left(\frac{2b_4^2(b+4)}{c}\right)
    + k\log\!\left(\frac{2}{\epsilon_n}\right) \\
&\iff
\begin{aligned}[t]
b_4^2k\log n
    \ge\;&
    \log(k\log n)
    + \log\!\left(\frac{2b_4^2(b+4)}{c}\right) + \frac{k}{2}\log n
    - \frac{k}{2}\log(k\log n) \\
    &\qquad \qquad \qquad
    + k\log\!\left(\frac{2}{b_4}\right)
\end{aligned} \\
&\iff
\begin{aligned}[t]
\log\!\left(\frac{2b_4^2(b+4)}{c}\right)   \le\; & k\left(b_4^2-\frac12\right)\log n
    + \left(\frac{k}{2}-1\right)\log(k\log n)\\
    &\qquad 
    - k\log\!\left(\frac{2}{b_4}\right).
\end{aligned}
\end{align*}

    We obtain that the condition 1 is satisfied asymptotically in $n,k$ for any $b>0$ and $b_4>1/\sqrt{2}$. Thus for $b>0, b_4>1/\sqrt{2}$ and $\epsilon_n = b_4\frac{\sqrt{k\log n}}{\sqrt{n}} $, $\exists n_3,k_3>0$ such that for $n>n_3,k>k_3$, $\log(D(\epsilon_n, \D_n)) \le n\epsilon_n^2 $ and $\mcP(\D/\D_n)\le \exp(-n\epsilon_n^2(b+4))$.

    \item \textbf{Establishing a sufficient condition for condition 3 of Lemma \ref{lem:contraction} if $b<1$:}

    The final condition of Lemma \ref{lem:contraction} may not be satisfied uniformly for every choice of $\pi_0\in\D$ from our assumptions. Instead, we calculate the probability under the prior $\mcP$ that condition 3 is satisfied for $\pi_0\sim\mcP$. We re-write condition 3 as follows:
    \begin{align*}
    1-\delta_n 
    &\le 
        Pr_{X_1,\ldots, X_n\sim \pi_0}\left(\int \prod (\pi/\pi_0)(X_i)\dd\mcP(\pi) \ge \exp(-(2+b)n\epsilon_n^2)\right)\\
    \impliedby 1-\delta_n &\le
    \begin{aligned}[t]
         Pr_{X_1,\ldots, X_n\sim \pi_0}\Biggl(\int_{\{\pi:h^2(\pi,\pi_0)\|\pi_0/\pi\|_\infty \le \epsilon_n^2\}} \prod (\pi/\pi_0)&(X_i)\dd\mcP(\pi) \\
        &\ge \exp(-(2+b)n\epsilon_n^2)\Biggr)
    \end{aligned}\\
    \impliedby 1-\delta_n &\le
    \begin{aligned}[t]
        Pr_{X_1,\ldots, X_n\sim \pi_0}\Biggl(\int_{\{\pi:h^2(\pi,\pi_0) \le \epsilon_n^2/b_{lr}\}} \prod (\pi/\pi_0)&(X_i)\dd\mcP(\pi) \\
        &\ge \exp(-(2+b)n\epsilon_n^2)\Biggr).
    \end{aligned}
\end{align*}

\noindent Note that Lemma \ref{lem:hellinger_ball} implies that:
    \begin{align*}
        \exp(-&C_{\mathrm{hellinger}}n\epsilon_n^2)\\
        &\ge 
        \begin{aligned}[t]
             Pr_{X_1,\ldots, X_n\sim \pi_0}\Biggl(\mcP(\{\pi:&h^2(\pi,\pi_0) \le \epsilon_n^2/b_{lr}\})^{-1}\\
             &\int_{\{\pi:h^2(\pi,\pi_0) \le \epsilon_n^2/b_{lr}\}} \prod (\pi/\pi_0)(X_i)\dd\mcP(\pi) \le \exp(-3n\epsilon_n^2)\Biggr)
        \end{aligned}\\
        &= 
        \begin{aligned}[t]
            Pr_{X_1,\ldots, X_n\sim \pi_0}\Biggl(\int_{\{\pi:h^2(\pi,\pi_0) \le \epsilon_n^2/b_{lr}\}} &\prod (\pi/\pi_0)(X_i)\dd\mcP(\pi)\\
            &\le \mcP(\{\pi:h^2(\pi,\pi_0) \le \epsilon_n^2/b_{lr}\})\exp(-3n\epsilon_n^2)\Biggr),
        \end{aligned}
    \end{align*}
    for the universal constant $C_{\mathrm{hellinger}}>0$. Now, if 
    \begin{align*}
        \mcP(\{\pi:h^2(\pi,\pi_0) \le \epsilon_n^2/b_{lr}\})\ge \exp(-(1-b)n\epsilon_n^2)
    \end{align*}
    and $b<1$, then we have
    \begin{align*}
        Pr_{X_1,\ldots, X_n\sim \pi_0}\Biggl(\int_{\{\pi:h^2(\pi,\pi_0) \le \epsilon_n^2/b_{lr}\}} \prod (\pi/\pi_0)(X_i)&\dd\mcP(\pi) \le \exp(-(2+b)n\epsilon_n^2)\Biggr)\\
        &\le \exp(-C_{\mathrm{hellinger}}n\epsilon_n^2),
    \end{align*}
    which satisfies condition 3 of Lemma \ref{lem:contraction} for $\delta_n = \exp(-C_{\mathrm{hellinger}}n\epsilon_n^2)$.
    
    \item \textbf{Sufficient conditions for invoking Lemma \ref{lem:contraction} occur with high probability:}
    
    \noindent Combining the above sufficient condition for Condition 3 of Lemma \ref{lem:contraction} along with our proof for Conditions 1-2, we get that for any $b\in(0,1)$, there exist $n_4,k_4,M'>0$ such that for $n>n_4,k>k_4,M>M'$,
    \begin{align*}
        \mcP(\{\pi:h^2(\pi,\pi_0) &\le \epsilon_n^2/b_{lr}\})\ge \exp(-(1-b)n\epsilon_n^2) \implies\\
        \E_{\pi_0^{\otimes n}}[\mcP(\pi:\rho(&\pi,\pi_0)>M\epsilon_n|X_1,\ldots,X_n)] \le 2\exp(-C_{\mathrm{test}}n\epsilon_n^2) + \exp(-C_{\mathrm{hellinger}}n\epsilon_n^2)\\
    &+ \exp((2+b)n\epsilon_n^2)(\exp(-(b+4)n\epsilon_n^2) + \exp(-C_{\mathrm{test}}nM^2\epsilon_n^2)).
    \end{align*}

    \noindent Further, if $C_{\mathrm{test}}M^2> (2+b)$, then there is a $b_2'>0$ such that asymptotically,
    \begin{align*}
        \mcP(\{\pi:h^2(\pi,\pi_0) \le \epsilon_n^2&/b_{lr}\})\ge \exp(-(1-b)n\epsilon_n^2) \implies\\
        &\E_{\pi_0^{\otimes n}}\left[\mcP(\pi:\rho(\pi,\pi_0)>M\epsilon_n|X_1,\ldots,X_n)\right] \le 5\exp(-b_2'n\epsilon_n^2).
    \end{align*}
    We recall that our proof for Conditions 1-2 requires that $\epsilon_n = b_4\frac{\sqrt{k\log n}}{\sqrt{n}}$ for $b_4>1/\sqrt{2}$. For any $b\in(0,1)$, as long as $C_{\mathrm{test}}M^2> (2+b)$, we can choose $b_4>M/\sqrt{2}$ which allows us to replace $\epsilon_n$ in the above equation with $\epsilon_n/M$ to obtain that there is a $b_2>0$ such that asymptotically,
    \begin{align}
        \mcP(\{\pi:h^2(\pi,\pi_0) \le \epsilon_n^2&/(M^2 b_{lr})\})\ge \exp(-(1-b)n\epsilon_n^2/M^2) \implies\nonumber\\
        &\E_{\pi_0^{\otimes n}}\left[\mcP(\pi:\rho(\pi,\pi_0)>\epsilon_n|X_1,\ldots,X_n)\right] \le 5\exp(-b_2n\epsilon_n^2).\label{eq:ball_to_exp}
    \end{align}

    We are interested in bounding the probability with which this happens, thus we want to find $Pr_{\pi_0\sim\mcP}[\mcP(\{\pi:h^2(\pi,\pi_0) \le \epsilon_n^2/(M^2 b_{lr})\})\ge \exp(-(1-b)n\epsilon_n^2/M^2))]$ for a large enough $M$. Now since $h^2(\pi,\pi') \le 2\rho_{TV}(\pi,\pi')$ by Equation \ref{eq:hell_to_tv}, $\mcP(\{\pi:\rho_{TV}(\pi,\pi_0) \le \epsilon_n^2/(2M^2 b_{lr})\} )\ge \exp(-(1-b)n\epsilon_n^2/M^2)$ implies $\mcP(\{\pi:h^2(\pi,\pi_0) \le \epsilon_n^2/(M^2 b_{lr})\})\ge \exp(-(1-b)n\epsilon_n^2/M^2))$. Thus, 
    \begin{align*}
        Pr_{\pi_0\sim\mcP}[\mcP&(\{\pi:h^2(\pi,\pi_0) \le \epsilon_n^2/(M^2 b_{lr})\})\ge \exp(-(1-b)n\epsilon_n^2/M^2))]\\
        &\ge Pr_{\pi_0\sim\mcP}[\mcP(\{\pi:\rho_{TV}(\pi,\pi_0) \le \epsilon_n^2/(2M^2 b_{lr})\} )\ge \exp(-(1-b)n\epsilon_n^2/M^2))],
    \end{align*}
    and we can focus on the latter expression.

    \noindent To bound the above expression, we use a similar argument as used in the proof of conditions 1-2 and consider sets $\A_1,\A_2,\ldots$ defined similarly. Let $\A_0=\D$ and $\A_{i+1}\subseteq \D/(\bigcup_{j=1}^i \A_j) $ be the highest measure set that satisfies
    \begin{align*}
        \diam(\A_{i+1}) \le \epsilon_n^2/(2M^2 b_{lr})\diam(\A_i) \le \epsilon_n^2/(2M^2 b_{lr}).
    \end{align*}
    By definition, $\mcP(\A_1) \ge \mcP(\A_2) \ldots $ and let $\mcP(\A_{N^\delta}) \ge \delta c(\epsilon_n^2/(2M^2 b_{lr}))^k \ge \mcP(\A_{N^\delta+1})$ for some $\delta\in (0,1)$. Note that $N^\delta\ge 1$ since $\mcP(\A_1) \ge c(\epsilon^2/(2M^2 b_{lr}))^k$ and further that such an $N^\delta$ exists due to the disjointness of the sets $\A_i$ and since the total probability is 1. Now, $\mcP(\A_{N^\delta+1}) \ge c(\epsilon_n^2/(2M^2 b_{lr}))^k(1-\sum_{i=1}^{N^\delta}\mcP(\A_i)) $ by Definition \ref{def:sparse_dim}. We can thus conclude:
    \begin{align*}
        c(\epsilon_n^2/(M^2 b_{lr}))^k\left(1-\sum_{i=1}^{N^\delta}\mcP(\A_i)\right) \le \delta c(\epsilon_n^2/(2M^2 b_{lr}))^k \implies \sum_{i=1}^{N^\delta}\mcP(\A_i) &\ge 1-\delta.
    \end{align*}

    Now for every $\pi \in \A_i$ for $i\le N^\delta$, since the diameter of $\A_i$ is $\le \epsilon_n^2/(2M^2 b_{lr})$, 
    \begin{align*}
        \mcP(B(\epsilon_n^2/(2M^2 b_{lr}),\pi)) \ge \mcP(\A_i) \ge \delta c(\epsilon_n^2/(2M^2 b_{lr}))^k,
    \end{align*}
    where we define $B(r,\pi) = \{\pi':\rho(\pi',\pi)\le r\}$ for ease of notation. Thus define $\D_{\delta} = \bigcup_{i=1}^{N^\delta}\A_i $ to obtain that for every $\pi\in\D_\delta$,
    \begin{align*}
        \mcP(B(\epsilon_n^2/(2M^2 b_{lr}),\pi)) \ge \delta c(\epsilon^2/(2M^2 b_{lr}))^k.
    \end{align*}
    Thus, $Pr_{\pi_0\sim\mcP}(\mcP(B(\epsilon_n^2/(2M^2 b_{lr}),\pi_0)) \ge \delta c(\epsilon_n^2/(2M^2 b_{lr}))^k) \ge \mcP(\D_\delta) = 1-\delta$. We pick a $\delta$ such that:
    \begin{align*}
        \delta c(\epsilon_n^2/(2M^2 b_{lr}))^k &= \exp(-(1-b)n\epsilon_n^2/M^2) = \exp(-b_4^2(1-b)k\log n/M^2)\\
        \implies \log\delta + \log(c) &= -\frac{b_4^2(1-b)}{M^2}k\log n - k\log \epsilon_n^2 + k\log(2M^2 b_{lr})\\
        &= -k\left(\frac{b_4^2(1-b)}{M^2}\log n - \log (2M^2 b_{lr})\right) - k\log (b_4^2k\log n) + k\log n\\
        &\le -k\left(\frac{b_4^2(1-b)}{M^2}\log n - \log n - \log (2M^2 b_{lr})\right).
    \end{align*}

    For any $b\in(0,1)$, we already have the constraints that $C_{\mathrm{test}}M^2> (2+b)$ and $b_4>M/\sqrt{2}$. We additionally impose $\frac{b_4^2(1-b)}{M^2} > 1$ such that $\delta$ asymptotically decreases with $n$. Thus, for any $b_{lr}>0, b\in(0,1)$, there exist $n_4=\omega(b_{lr}), k_4>0$ and a constant $b_3>0$ such that for all $n>n_4,k>k_4$, $\delta \le \exp(-b_3b_4^2k\log n) = \exp(-b_3n\epsilon_n^2)$. Thus for $n>n_4,k>k_4$, using Equation \ref{eq:ball_to_exp},
    \begin{align*}
        &Pr_{\pi_0\sim\mcP}(\mcP(B(\epsilon_n^2/(2M^2 b_{lr}),\pi_0)) \ge \exp(-(1-b)n\epsilon_n^2/M^2))  \ge 1-\exp(-b_3n\epsilon_n^2)\\
        &\implies
        \begin{aligned}[t]
            Pr_{\pi_0\sim\mcP}(\E_{\pi_0^{\otimes n}}\left[\mcP(\pi:\rho(\pi,\pi_0)>\epsilon_n|X_1,\ldots,X_n)\right] \le 5\exp(-&b_2n\epsilon_n^2))\\
            &\ge 1-\exp(-b_3n\epsilon_n^2).
        \end{aligned}
    \end{align*}

    We now recall from Equation \ref{eq:exp_break} that we wanted to establish the above condition for constants $b_2,b_3,b_4$ (that do not depend on $n,k$) and $\epsilon_n = b_4\frac{\sqrt{k\log n}}{\sqrt{n}}$. We have thus established all requirements to state our final bound,
    \begin{align*}
        \E_{\pi_0\sim\mcP}[\E_{X_1,\ldots,X_n\sim\pi_0}[\rho_{TV}(\pi_0, \hat{\pi}_n)]] = O\left(\frac{\sqrt{k\log n}}{\sqrt{n}}\right).
    \end{align*}
    
    \end{enumerate}

\end{proof}
\end{appendix}

\end{document}